\newcommand{\Ankit}[1]{\ifthenelse{\boolean{showcomments}}
{ \textcolor{blue}{(Ankit says:  #1)}}{}}
\newcommand{\christos}[1]{\ifthenelse{\boolean{showcomments}}
{ \textcolor{blue}{(Christos says: #1)} } {} }
\theoremstyle{plain}
\newtheorem{theorem}{Theorem}[section]
\newtheorem{proposition}{Proposition}[section]
\newtheorem{lemma}{Lemma}[section]
\newtheorem{definition}{Definition}
\theoremstyle{definition}
\newtheorem{remark}{Remark}
\theoremstyle{remark}
\newcommand{\pb}{\mathbf{p}}
\newcommand{\mut}{\widetilde\mu}
\newcommand{\Sp}{\mathbf{\mathcal{S}}_{F,n}}
\newcommand{\fro}{\|\cdot\|_F}
\newcommand{\spec}{\|\cdot\|_2}
\newcommand{\Xt}{\widetilde{X}}
\newcommand{\bal}{\begin{align}}
\newcommand{\eal}{\end{align}}
\DeclarePairedDelimiterX{\inp}[2]{\langle}{\rangle}{#1, #2}
\newcommand{\Vh}{\widehat{V}}
\newcommand{\Sym}{\mathbb{S}}
\newcommand{\B}{{B}}
\newcommand{\C}{{C}}
\newcommand{\Pb}{{P}}
\newcommand{\Pbp}{\Pb^{\perp}}
\newcommand{\Id}{\mathbf{I}}
\newcommand{\wg}{\omega_g}
\newcommand{\wgt}{\omega_{g,t}}
\newcommand{\we}{\omega_e}
\newcommand{\wet}{\omega_{e,t}}
\newcommand{\leqp}{\preceq}
\newcommand{\geqp}{\succeq}
\newcommand{\Xh}{\widehat{\X}}
\newcommand{\tr}[1]{\operatorname{tr{(#1)}}}
\newcommand{\muq}{\mu}			
\newcommand{\rhoq}{\upsilon}			
\newcommand{\etaq}{\chi}			
\newcommand{\muo}{\mu_{\ell}}			
\newcommand{\tauo}{\tau_{\ell}}			
\newcommand{\etab}{\boldsymbol{\eta}}
\newcommand{\ksi}{\xi}
\newcommand{\simiid}{\stackrel{\text{iid}}{\sim}}
\newcommand{\Pro}{\mathbb{P}}
\newcommand{\eps}{\epsilon}
\newcommand{\Exp}{\mathbb{E}}               
\newcommand{\E}{\mathbb{E}}                    
\newcommand{\la}{{\lambda}}                     
\newcommand{\nn}{\notag}
\newcommand{\R}{\mathbb{R}}
\newcommand{\W}{{W}}
\newcommand{\Ub}{{U}}
\newcommand{\X}{{X}}
\newcommand{\A}{{A}}
\newcommand{\Y}{{Y}}
\newcommand{\Vb}{{V}}
\newcommand{\x}{\mathbf{x}}
\newcommand{\tb}{\mathbf{t}}
\newcommand{\g}{\mathbf{g}}
\newcommand{\vb}{\mathbf{v}}
\newcommand{\bb}{\mathbf{b}}
\newcommand{\y}{\mathbf{y}}
\newcommand{\s}{\mathbf{s}}
\newcommand{\ab}{\mathbf{a}}
\newcommand{\h}{\mathbf{h}}
\newcommand{\Tc}{{\mathcal{T}}}
\newcommand{\Sc}{{\mathcal{S}}}
\newcommand{\Dc}{\mathcal{D}}
\newcommand{\Kc}{\mathcal{K}}
\newcommand{\Nc}{\mathcal{N}}
\newcommand{\Rc}{\mathcal{R}}
\newcommand{\Nn}{\mathcal{N}}
\newcommand{\Lc}{\mathcal{L}}
\newcommand{\Cc}{\mathcal{C}}
\newcommand{\Ac}{\mathcal{A}}
\newcommand{\Ec}{\mathcal{E}}
\newcommand{\beq}{\begin{equation}}
\newcommand{\eeq}{\end{equation}}
\newcommand{\bea}{\begin{align}}
\newcommand{\eea}{\end{align}}
\newcommand{\vp}{\vspace{4pt}}
\newcommand{\order}[1]{\mathcal{O}\left(#1\right)}
\newcommand{\D}{{D}}
\newcounter{defcounter}
\begin{document}

%

%

\title{
Lifting high-dimensional nonlinear models\\ with Gaussian regressors
}

\author[1]{Christos Thrampoulidis}
\author[2]{Ankit Singh Rawat}
\affil[1]{\normalsize ECE Department, University of California, Santa Barbara, Santa Barbara, CA 93106.}
\affil[2]{\normalsize Research Laboratory of Electronics, MIT, Cambridge, MA 02139, USA.  \newline E-mail: cthrampo@ucsb.edu, asrawat@mit.edu.}

\maketitle

\begin{abstract}
We study the problem of recovering a structured signal $\mathbf{x}_0$ from high-dimensional data $\y_i=f(\mathbf{a}_i^T\mathbf{x}_0)$ for some nonlinear (and potentially unknown) link function $f$, when the regressors $\ab_i$ are iid Gaussian. Brillinger (1982) showed that ordinary least-squares estimates $\x_0$ up to a constant of proportionality $\mu_\ell$, which depends on $f$.
Recently, Plan \& Vershynin (2015) extended this result to the high-dimensional setting deriving sharp error bounds for the generalized Lasso. Unfortunately, both least-squares and the Lasso fail to recover $\mathbf{x}_0$ when  $\mu_\ell=0$. For example, this includes all even link functions. We resolve this issue by proposing and analyzing an alternative convex recovery method. In a nutshell, our method treats such link functions as if they were linear in a lifted space of higher-dimension. Interestingly, our error analysis captures the effect of both the nonlinearity and the problem's geometry in a few simple summary parameters.
\end{abstract}

\section{Introduction}

We consider the problem of estimating an unknown signal $\x_0\in\R^n$ from a vector $\y=(y_1,y_2,\ldots,y_m)^T$ of  $m$ \emph{generalized linear measurements}
of the following form:
\begin{align}\label{eq:model}
\y_i = f_i(\ab_i^T\x_0), \quad i=1,2,\ldots,m.
\end{align}
Here, each $\ab_i\in\R^m$ represents a (known) measurement vector and  
the $f_i$'s are independent copies of a (possibly random) link function $f$. 
A few examples include: ~$f_i(x)=x+z_i$, with say $z_i$ being random noise, for standard linear regression setup; $f_i(x) = |x|^2+z_i$, for quadratic (noisy) measurements; ~$f_i(x) = Q_b(|x|)$, where $Q_b(\cdot)$ denotes a quantizer on $b$ number of bits.
{In the statistics and econometrics literature, \eqref{eq:model} is popular under the name \emph{single-index model} (also regarded as a special case of the \emph{sufficient dimension reduction} problem).} 
As a slight generalization of 
\eqref{eq:model}, we allow measurements that are drawn independently according to a conditional distribution of some probability density function as follows:
\begin{align}\label{eq:model_gen}
\y_i \sim p( y ~|~ \ab_i^T\x_0 ), \quad i=1,2,\ldots,m.
\end{align}
This 
expands the model by including instances such as logistic-regression and $y_i \sim {\rm Poisson}(|\ab_i^T\x_0|),~i\in[m]$. 

We seek recovery methods that ensure the following favorable features: (i) computational efficiency; (ii) provable performance guarantees; (iii) flexibility to exploit various forms of possible prior structural knowledge about $\x_0$ (e.g., sparsity); and (iv) flexibility to the underlying link-function. Here, ``flexible" refers to a method that can easily adapt to different structural information on $\x_0$ and different link-functions with minimal changes, such as appropriate tuning of involved regularization parameters.  Lastly, we remark that information about the magnitude of $\x_0$ might in general be lost in the nonlinearity. Thus, for partial identifiability, we assume throughout that $\|\x_0\|_2=1$ and aim to recover an estimate that is highly correlated with the true signal (often referred to as \emph{weak recovery}, e.g., \cite{mondelli2017fundamental}).

\subsection{Prior art}
In the simplest case with {\em linear} link function, i.e., $f_i(x)=x+z_i$, 
perhaps the most popular approach of estimating $\x_0$ is via solving the generalized Lasso:
 \vspace{-3pt}
 \begin{align}\label{eq:lasso}
 \hat\x := \arg\min_{\x} \sum_{i=1}^m(\y_i-\ab_i^T\x_0)^2   \quad\text{s.t.}\quad \x\in\Kc_{\Rc}.
 \end{align}
 Here, for a properly chosen {\em regularizer} function $\Rc:\R^n\rightarrow\R$, $\Kc_\Rc\subset\R^n$ is a {set} that encodes the available information about $\x_0$. For instance, 
 $\Kc_{\Rc} = \{ \x\in\R^n~|~ \Rc(\x)\leq K\},$ where $K > 0$ is a (tuning) parameter. The generalized Lasso comes with provable performance guarantees under general assumptions on the choice of $\Rc$ and on the measurement vectors. The Lasso objective is by nature tailored to linear measurement models, {but one can always employ it as a candidate recovery algorithm for the more general model \eqref{eq:model}.
 This immediately raises the following question: 
\emph{
For such non-linear measurements, when (if ever) is the solution $\hat\x$ of the Lasso still a good estimate of $\x_0$? 
 }
 
 This question has been recently addressed in a quantitative way in \cite{Ver}, under the assumption that the measurement vectors are independent Gaussians. Naturally, the answer depends on both: (i) the specific non-linearity $f$ in \eqref{eq:model}; and (ii) the structure of $\x_0$ and the associated choice of the regularizer. The dependence on these factors can be conveniently summarized in terms of a few easily computable model parameters. 
In particular, the effect of the non-linearity is entirely captured by the following:
\begin{align}\label{eq:muo}
\muo := \E[\gamma f(\gamma)] \quad\text{and}\quad \tauo^2:= \E[(f(\gamma) - \mu \gamma)^2], 
\end{align}
where, the expectations are over $\gamma\sim\Nn(0,1)$ and the (possibly) random link function $f$. For simplicity, we focus on the case of sparse recovery, but the results of \cite{Ver} are more generally applicable. When, $\x_0$ is $k$-sparse and $\hat\x$ is the solution to \eqref{eq:lasso} with $\ell_1$-regularization, then \cite[Thm.~1.4]{Ver} shows the following.
%
Provided that $m\gtrsim k\log(n/k)$ and $n$ is large enough, it holds with high probability that\footnote{
Here and in the rest of the paper, a statement is said to hold with high probability if it holds with probability at least 0.99 (say). Also, the symbol ``$\lesssim$" is used to hide universal constants (in particular ones that do not depend on $f$).}:
\begin{align}\label{eq:Ver}
\|\hat\x - \muo\cdot\x_0\|_2 \lesssim \tauo \cdot {\sqrt{k\log(n/k)}}\big/{\sqrt{m}}.
\end{align}


\vp
\noindent\textbf{What if the Lasso fails?}~It becomes clear from this discussion that (at least for Gaussian regressors) the generalized Lasso satisfies all the four aforementioned favorable properties that we seek in our recovery methods. However, a closer inspection of \eqref{eq:Ver} reveals that the Lasso fails to produce a good estimate for all functions for which $\muo=0$; e.g., this includes all {\em even} link functions! 


\subsection{Our contribution}
In this paper, we provide an affirmative answer to the following natural questions that arise from the inadequacy of the Lasso for a large class of link functions:
\emph{Is there a generic convex program that can recover structured signals from \underline{even} non-linear measurements? And if so, can we quantify its error performance?}

As we discuss next, our algorithm is motivated by a simple observation. 
It is well known that one can expand the link function $f$ as a series in terms of the Hermite polynomials, i.e., $f(x) = \sum_{i=0}^{+\infty}\mu_iH_i(x)$, where $H_i(x)$ is the $i^\text{th}$-order Hermite polynomial with leading coefficient 1, and, $\mu_i = \frac{1}{n!}\E_{\gamma}[f(\gamma)H_i(\gamma)]$ for $\gamma\sim\Nn(0,1)$. In particular, we may expand $y_i = f(\ab_i^T\x_0),i\in[m]$ as follows:
\begin{align}\label{eq:Hexp}
y_i = \mu_0 + \mu_1(\ab_i^T\x_0) + \mu_2 ((\ab_i^T\x_0)^2-1) + \ldots
\end{align}
with $\mu_0 =\E[f(\gamma)], \mu_1= \E[\gamma f(\gamma)],$ and $\mu_2=\frac{1}{2}\E[(\gamma^2-1) f(\gamma)]$. First, observe that $\mu_1=\mu_\ell$ in \eqref{eq:muo}; thus, we may interpret the objective function of the Lasso in \eqref{eq:lasso} as one that attempts to approximate (in $\ell_2$-sense) each $y_i$ in \eqref{eq:Hexp} by only keeping the first-order (linear) term. Clearly, this approximation fails if $\mu_1=0$, and so does the Lasso. For such cases, we naturally  propose keeping only the second-order (quadratic) term of the expansion in \eqref{eq:Hexp}. Moreover, in order to obtain a favorable convex program, we apply the \emph{lifting} technique. In particular, approximating \eqref{eq:Hexp}, we write
\begin{align}\label{eq:lift}
y_i \approx \mu_2((\ab_i^T\x_0)^2-1) &= \tr{(\ab_i\ab_i^T-\Id)\cdot \mu_2\x_0\x_0^T} =  \tr{(\ab_i\ab_i^T-\Id)\cdot \Xt_0},
\end{align}
where $\Xt_0$ denotes the rank-$1$ matrix $\mu_2X_0 = \mu_2\x_0\x_0^T$, and the first equality follows as $\|\x_0\|_2 = 1$. 
Note that (after lifting) the quantity on the right-hand side (RHS) in \eqref{eq:lift} is now a linear function of the unknown rank-$1$ matrix $\Xt_0$. 

We are now ready to describe our algorithm. From \eqref{eq:lift}, one can attempt to reconstruct  $\Xt_0$ (i.e., a scaled version of $X_0=\x_0\x_0^T$)  by searching for a positive-semidefinite and low-rank matrix $X$ that minimizes the residual between $y_i$ and $\tr{(\ab_i\ab_i^T-\Id)\cdot X}$. Further relaxing the rank constraint leads to the following convex method:
\begin{align}\label{eq:intro_PL}
&\Xh = \arg\min_{\X\geqp 0}~\sum_{i=1}^m\big(\y_i - \tr{(\ab_i\ab_i^T-\Id)\cdot \X}\big)^2 + \lambda\cdot \tr{\X}, \nonumber \\
&\quad\quad~~\text{subject to}\quad \X\in\Kc_\Rc,
\end{align}
where $\la>0$ is a regularization parameter. We have further added a constraint $X\in\Kc_\Rc$ to promote
the structure of the signal-defined matrix $X_0$ (inherited by the structure of $\x_0)$. For instance, if $\x_0$ is $k$-sparse, then $\X_0$ is $k^2$-sparse and a natural choice becomes $\X\in\Kc_{\ell_1}:= \{ \W~|~ \|\W\|_1 \leq K \}$, for $\|\W\|_1=\sum_{i,j=1}^{n}|\W_{ij}|$ and regularizer $K>0$. 
As a last step, our algorithm obtains a final estimate $\hat{\x}$ of (a scaled) $\x_0$ by  the leading eigenvector of $\Xh$.


\noindent\textbf{Error guarantees.}~We characterize the estimation performance of the following constrained version of \eqref{eq:intro_PL}:
\begin{align}\label{eq:algo}
&\Xh = \arg\min_{\X\geqp 0}~\sum_{i=1}^m\big(\y_i - \tr{(\ab_i\ab_i^T - \Id)\cdot \X}\big)^2 \nonumber \\
&\quad\quad~\text{subject to}\quad \tr{X} \leq \mut \quad\text{and}\quad \X\in\Kc_\Rc,
\end{align}
where $\mut>0$ is a tuning parameter. Here, we state a version of our result in the ideal case where $\mut=\mu_2$~(cf.~\eqref{eq:Hexp} and recall that the goal of \eqref{eq:algo} is to recover $\Xt=\mu_2\X_0$ for which $\tr{\Xt}=\mu_2$). In Sec.~\ref{sec:imperfect_mu}), we also analyze the performance of \eqref{eq:algo} when $\mut$ is not ideally tuned. 
Our result captures the effect of $f$ via two simple parameters. For $\gamma\sim\Nn(0,1)$:
\begin{align}\label{eq:muq_wq}
\mu_q&:=\mu_2=\frac{1}{2}\E[(\gamma^2-1)f(\gamma)]\quad \text{and} \nonumber \\
\tau_q^2&:= \E\big[\big(f(\gamma) - \muq\cdot(\gamma^2-1)\big)^2 \big].
\end{align}
For example, for recovery of a $k$-sparse signal we show the following about \eqref{eq:algo} with $\ell_1$-regularization (Thm. \ref{thm:sparse} for details): For $m\gtrsim k^2\log(n/k)$ and $n$ large enough, it holds with high probability that
\begin{align}\label{eq:weShow}
\|\hat\X - \mu_q X_0\|_F \lesssim {\tau_q\cdot{k\log(n/k)}\big/{\sqrt{m}}}.
\end{align}
From eigenvalue perturbation theory, this further bounds the deviation of the leading eigenvector $\hat\x$ of $\Xh$ from $\x_0$ (cf.~Sec.~\ref{sec:main_sparse}). Observe the resemblance between \eqref{eq:Ver} and our result in \eqref{eq:weShow}. The Lasso-related parameters $\muo$ and $\tauo$ are replaced by $\mu_q$ and $\tau_q$, respectively. It is evident from \eqref{eq:weShow} that \eqref{eq:algo} is efficient for those link functions $f$ for which $\mu_q$ is nonzero.
In contrast with \eqref{eq:Ver}, in terms of sample complexity, \eqref{eq:weShow} pays a penalty of ${\order{k^2\log(n/k)}}$ rather than $\order{k\log(n/k)}$. This is not surprising since the same gap holds for the known algorithms for quadratic measurements (e.g., \cite{oymak2015simultaneously}), a special case of even link function.

We note that \eqref{eq:weShow} is only an instantiation to sparse signals of our general result that characterizes the recovery performance of \eqref{eq:algo} for general convex regularizers $\Rc$ (cf. Sec.~\ref{sec:main_general}).  Also, inherent in the formulation of \eqref{eq:algo} is the condition $\mu_q>0$ (since, $\X\geqp 0$ implies that $tr(\X)\geq 0$). Note that if $\mu_q<0$, then the same method works only by replacing the constraint $\X\geqp 0$ with $\X\leqp 0$. Overall, the proposed algorithm requires that $\mu_q\neq 0$.

\begin{remark}[Extensions]\label{rem:ext}
Our algorithm is motivated by the expansion of the link function in \eqref{eq:Hexp}, which we attempt to approximate in \eqref{eq:intro_PL} by solely keeping the second-order term.
Naturally, one can imagine the possibility of developing extensions to \eqref{eq:intro_PL} that achieve better performance by more accurate approximations in \eqref{eq:Hexp}. For example, one may keep both the zeroth- and the second-order terms:
\begin{align}\label{eq:algo_02}
&\Xh = \arg\min_{\X\geqp 0}~\sum_{i=1}^m(\y_i - \hat{\mu}_0 - \tr{(\ab_i\ab_i^T-\Id)\cdot \X} )^2  \nonumber \\
&\quad\quad~\text{subject to}\quad \tr{X} \leq \mut \quad\text{and}\quad \X\in\Kc_\Rc,
\end{align}
where, $\hat\mu_0 = \frac{1}{m}\sum_{i=1}^m y_i$. Note that for large enough number of measurements $m$, it holds $\hat\mu_0  \approx \E[f(\gamma)] = \mu_0$. 
{We remark that our analysis of \eqref{eq:algo} directly translates to guarantees about \eqref{eq:algo_02} when $\hat\mu_0$ is estimated by a fresh batch of measurements (for example, this is achieved in practice by sample splitting).}
Other interesting extensions of this flavor are certainly possible, albeit may require additional effort. 
\end{remark}

\begin{remark}[Relation to PhaseLift]\label{rem:PL}
The lifting technique is by now well established in the literature, 
the most prominent example being its use in phase retrieval (i.e., quadratic measurements). In fact, the popular PhaseLift method~\cite{PhaseLift} is very similar to \eqref{eq:intro_PL}, the only difference being in the loss function: the PhaseLift penalizes $\sum_{i=1}^m(\y_i - \tr{\ab_i\ab_i^T \cdot \X} )^2$, instead. In reference to \eqref{eq:Hexp}, note that in the special case of quadratic measurements, it holds $\mu_0=\mu_2=1$, which explains the choice of that particular loss function in the PhaseLift. However, our arguments show the need for modifications to be able to treat a wide class of functions beyond quadratics. This leads to \eqref{eq:algo_02}, which can be viewed as a canonical extension or robust version of the PhaseLift to more general link-functions.
%
%
\end{remark}

\subsection{Relevant literature and outlook}
\label{sec:relevant}
The vast majority of the literature on structured signal recovery in the high-dimensional regime assumes a linear measurement model.
In this vanilla setting, convex methods offer many desired features: computational efficiency, tractable analysis, flexibility to adjust to different problem instances (such as, sparse recovery with $\ell_1$-regularization and low-rank matrix recovery with nuclear-norm minimization). 
%
By now, performance guarantees for these methods are well-established (\cite{foucart2013mathematical} and references therein).

Extensions to nonlinear link functions, have been only more recently considered in high dimensions. While, the key observation that for Gaussian measurement vectors it holds $\mu_\ell\x_0 = \min_{\x}\E(f(\inp{\ab}{\x_0}) - \inp{\ab}{\x})^2$ goes back to \cite{Bri} (also, \cite{li1989regression}), Plan and Vershynin  \cite{Ver,plan2014high} were the first to show how that idea extends to the high-dimensional setting by applying it to the generalized Lasso. Subsequently \cite{NIPS} extended \cite{Ver} to the regularized Lasso, obtained the exact constants in the analysis, and used the results to design optimal quantization schemes. More recent works involve extensions  to other loss-functions beyond least squares~\cite{genzel2017high}, to elliptically symmetric distributions~\cite{goldstein2016structured}, to projected gradient-descent~\cite{oymak2016fast}, to signal-demixing applications~\cite{soltani2017fast}, and to demonstrating computational speedups compared to maximum-likelihood estimation~\cite{erdogdu2016scaled}. Finally,  Yang et al. \cite{yang2017high} have appropriately modified Brillinger's original observation to sub-Gaussian vectors, based on which they propose and analyze generic convex solvers that work in this more general setting. 

Unfortunately, all these works, starting with the original result by Brillinger, assume that the link function satisfies $\mu_\ell\neq0$. Instead, our method and analysis works for these; and more generally for link functions satisfying $\mu_q\neq 0$. In that sense, our paper is a direct counterpart of \cite{Ver} for ``even-like" nonlinearities. 
Given the aforementioned extensions that followed \cite{Ver}, it is natural to expect analogous extensions of our results as part of future research. 
 Also, perhaps an interesting research question that is raised is related to unifying the efficacy of the Lasso and of \eqref{eq:intro_PL} in a single generic algorithm that would combine the best of the two worlds and would apply to nonlinearities satisfying either $\mu_\ell\neq0$ or $\mu_q\neq0$ (cf. Remark \ref{rem:ext}). On the one hand, since $\mu_\ell\neq0$, one can obtain an estimate of $\x_0$ by the Lasso; however, this entirely ignores the hidden ``quadratic part" in the measurements . On the other hand, seemingly natural lifting procedure leads to a semidefinite-optimization estimator that successfully accounts for both ``linear and quadratic parts", but suffers from worse sample complexity. 
We point out the interesting and relevant work by Yi et. al. \cite{yi2015optimal} in this direction. Their  algorithm is applicable to such a wide class of link-functions, but, it imposes other restrictions compared to our work, such us requiring binary measurements and a sparse signal $\x_0$.

Out of all the nonlinear link functions, the quadratic case (cf. phase-retrieval) deserves special attention; there has been a surge in the provable recovery methods for this case over the past few years. 
 Among the most well-established and the first chronologically to enjoy rigorous recovery guarantees are the methods based on semidefinite relaxation  \cite{balan2009painless,PhaseLift}. 
Such methods operate by lifting the original $n$-dimensional natural parameter space to a higher dimensional matrix space. Here, we extend the lifting idea and the recovery guarantees to general link functions beyond quadratics. Naturally, the increase in  dimensionality often introduces challenges in computational complexity and memory requirements. To overcome these issues, subsequent works on  phase retrieval develop non-convex formulations that start with a careful spectral initialization (see \cite[Sec.~6]{soltanolkotabi2017structured} for references), which is then iteratively refined by a gradient-descent-like scheme of low computational complexity. See also \cite{phmax2, phmax} for an alternative \emph{convex} formulation of the problem in the natural parameter space. Naturally, many of these solution methods can be combined with the ideas introduced in this paper to extend their reach to non-quadratic link functions. 
In fact, while preparing our paper, we became aware of the recent work by Yang et. al. \cite{yang2017misspecified} exactly along these lines, which proposes an iterative non-convex method for sparse-signal recovery from link functions beyond quadratics. Interestingly, in order to arrive to their algorithm, Yang et. al. identify necessary modifications to the vanilla non-convex methods for quadratics, which are in exact agreement with our modification to the PhaseLift needed to arrive to \eqref{eq:intro_PL}. Thus, \eqref{eq:intro_PL} and \eqref{eq:algo_02} can be interpreted as the convex counterparts to the algorithm of \cite{yang2017misspecified}. Despite the similarities, (a) the two papers arrive to the proposed algorithms through different ideas (cf. Remark \ref{rem:PL}); (b) our algorithm and analysis supports structures beyond sparsity; (c) our performance bounds precisely capture the effect of the link-function.
%

Another closely related work~\cite{HanLiu17} takes a somewhat different view than ours on the measurement model in \eqref{eq:model} and arrives at a different semidefinite optimization algorithm closer to the algorithms for sparse PCA.
Notably, \cite{HanLiu17} also addresses the design matrices with sub-gaussian entries. For Gaussian regressors, a parallel work to ours \cite{tan2017sparse} extends the analysis of \cite{HanLiu17}  beyond sparse recovery.
 Finally, 
 \cite{neykov2016agnostic,spint,mondelli2017fundamental} study the performance of spectral initialization for measurements as in \eqref{eq:model}. In contrast to our work, the results in \cite{spint,mondelli2017fundamental} are asymptotic and do not exploit structural information on $\x_0$. 
Also, compared to \cite{neykov2016agnostic,HanLiu17,tan2017sparse} our analysis is sharp with respect to the link function $f$. Such sharp results are directly useful in various applications where one has control over some parameters of $f$, and also, to the design of pre-processing functions $h$ (see \cite{NIPS,spint,mondelli2017fundamental} for precursors of this idea). For example, by inspecting \eqref{eq:weShow}, the estimation error is minimized by choosing $h$ so that $\mu_q$ and $\tau_q$ result in the effective noise parameter $\tau_q/\mu_q$ being as small as possible (see Sec. \ref{sec:sim}). We postpone further investigations of such implications to future work.

\noindent\textbf{Notation.}~
We use $\Sym_n$ ($\Sym^{+}_n$) to denote the sets of real $n \times n$ symmetric  (resp., positive semidefinite) matrices. For $U, V \in \Sym_n$, $\inp{U}{V}=\tr{UV}$ denotes the standard inner product in $\Sym_n$, $\|V\|_2$ ($\|V\|_F$) denotes the spectral (resp., Frobenius) norm, and $\|V\|_1=\sum_{j\in[m]}\sum_{i\in[n]}|V_{ji}|$. For $\x \in \R^{n}$, $\|\x\|_p$ denotes its $\ell_p$ norm, $p \geq 1$. 
We denote by $\Sp$ the unit sphere in $\Sym_n$, i.e., $\Sp:=\{V\in\Sym_n~|~\|V\|_F=1\}.$ 

\noindent\textbf{Organization.}~We formally state our results along with some technical background in Sec.~\ref{sec:main}. The case of sparse recovery is formally treated in Sec.~\ref{sec:main_sparse}, while error guarantees for the general setting are provided in Sec.~\ref{sec:main_general}. We highlight the key steps of the proofs in Sec.~\ref{sec:tech} (full details are relegated to the appendix). Also, in Sec.~\ref{sec:imperfect_mu}, we study the performance of \eqref{eq:algo} under imperfect tuning. Numerical simulations are presented in Sec. \ref{sec:sim}. 


\section{Error bounds for nonlinear measurements}\label{sec:main}
\subsection{An example: Sparse recovery}
\label{sec:main_sparse}
Assume that the true signal $\x_0$ is $k$-sparse. Then, $X_0=\x_0\x_0^T$ is at most $k^2$-sparse. Thus, we solve \eqref{eq:algo} with an $\ell_1$-norm constraint. Thm.~\ref{thm:sparse} below characterizes the performance of the algorithm. Before that, recall  the definitions of $\mu_q$ and $\tau_q$ in \eqref{eq:muq_wq}.
Here onwards, for ease of exposition, we refer to $\mu_q$ and $\tau_q$ simply as $\mu$ and $\tau$, respectively. If $f$ is twice differentiable with $f''$ being its second derivative, then by integration by parts 
we have
$
\mu=\E[f''(\gamma)]/2.
$
Moreover, it is easily shown that $\tau^2=\E[f^2(\gamma)]-2\mu^2.$

For all the theorems that follow: \emph{a statement is said to hold with high probability if it holds with probability at least 0.99 (say)}. Also, the appearing constants $c,C>0$ may only depend on the probability of success.

\begin{theorem}[Sparse recovery]\label{thm:sparse}
Suppose that $\x_0$ is $k$-sparse, $\ab_i\sim\Nn(0,\Id)$, and that $\y$ follows the generalized linear model of \eqref{eq:model}. Assume that $\muq>0$, and let $\Xh$ be the solution to \eqref{eq:algo} with $\mut=\mu$ and $\Kc_\Rc = \{\X~:~\|X\|_1\leq\mu\|X_0\|_1\}$.
 There exist universal constants $c, C>0$ such that, if the number of observations obeys
\begin{align}\label{eq:m>}
m\geq c \cdot k^2 \log(n/k)
\end{align}
for sufficiently large $n$, then, with high probability, we have
\begin{align}\label{eq:thm_sparse}
\|\Xh-\muq\X_0\|_F \leq C\cdot\tau\cdot {k\log(n/k)}/{\sqrt{m}}.
\end{align}
\end{theorem}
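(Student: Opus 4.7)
My plan is to recast \eqref{eq:algo} as a constrained least-squares problem with noisy linear observations in the lifted matrix space and to apply the standard \emph{basic inequality $+$ restricted lower isometry $+$ multiplier process} template. Setting $\A_i:=\ab_i\ab_i^T-\Id$ and $\Xt_0:=\muq \X_0$, the definition of $\muq$ in \eqref{eq:muq_wq} yields the exact decomposition
\[
\y_i = \inp{\A_i}{\Xt_0} + \xi_i,\qquad \xi_i := f_i(\gamma_i) - \muq(\gamma_i^2-1),\quad \gamma_i:=\ab_i^T\x_0,
\]
with $\E\xi_i^2 = \tau^2$ and, crucially, $\E[\xi_i(\gamma_i^2-1)]=0$. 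Since $\Xt_0\geqp 0$, $\tr{\Xt_0}=\muq=\mut$, and $\|\Xt_0\|_1 = \muq\|\X_0\|_1$, the matrix $\Xt_0$ is feasible, so optimality of $\Xh$ gives the basic inequality $\sum_i\inp{\A_i}{\hat\Delta}^2 \leq 2\sum_i \xi_i\inp{\A_i}{\hat\Delta}$ for $\hat\Delta:=\Xh-\Xt_0$. Writing $h:=\hat\Delta/\|\hat\Delta\|_F$, the error $\|\hat\Delta\|_F$ is bounded by the ratio of the \emph{multiplier process} $\sup_{h\in\Dc\cap\Sp}\frac{1}{m}|\sum_i\xi_i\inp{\A_i}{h}|$ to the \emph{restricted lower eigenvalue} $\inf_{h\in\Dc\cap\Sp}\frac{1}{m}\sum_i\inp{\A_i}{h}^2$, where $\Dc$ denotes the descent cone of the feasible set of \eqref{eq:algo} at $\Xt_0$.

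\textbf{Descent cone and lower isometry (main obstacle).} Among the three constraints, the $\ell_1$-ball is binding; since $\Xt_0$ is at most $k^2$-sparse with support inherited from that of $\x_0$, the standard computation for $\ell_1$ descent cones (after vectorizing in $\R^{n^2}$) gives the Gaussian width $\lesssim k\sqrt{\log(n/k)}$, an extra $\sqrt k$ relative to the Lasso reflecting the jump from $k$ to $k^2$ nonzeros. For the lower isometry, note that $\inp{\A_i}{h} = \ab_i^T h\ab_i - \tr{h}$ has mean zero and variance $2\|h\|_F^2$ but only \emph{sub-exponential} tails. I would apply Mendelson's small-ball method: a uniform small-ball estimate $\Pro(|\inp{\A_i}{h}|\geq c_0)\geq p_0$ on $h\in\Sp$ (via Paley--Zygmund and Hanson--Wright concentration of $\ab_i^T h \ab_i$) combined with a bound on the symmetrized multiplier complexity $\Exp\sup_{h\in\Dc\cap\Sp}|\inp{\sum_i \epsilon_i\A_i}{h}|$. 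Because each $\A_i$ is sub-exponential, this complexity picks up an additional $\sqrt{\log(n/k)}$ on top of the pure Gaussian width, and the small-ball bound becomes effective once $m\gtrsim k^2\log(n/k)$, giving $\inf_{h\in\Dc\cap\Sp}\frac{1}{m}\sum_i\inp{\A_i}{h}^2 \gtrsim 1$. This is the principal technical bottleneck: generic-chaining results for sub-Gaussian processes do not apply off-the-shelf to the Wishart-type $\A_i$, so one must carry out a careful sub-exponential chaining, or alternatively a support-wise RIP argument combined with a union bound over all $\binom{n}{k}^2$ sparse supports.

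\textbf{Multiplier process and conclusion.} To bound the noise supremum I would decompose $\ab_i = \gamma_i\x_0 + \Pp\ab_i$ with $\Pp=\Id-\x_0\x_0^T$, so that $\gamma_i$ and $\Pp\ab_i$ are independent. A short calculation using $\E[\xi_i(\gamma_i^2-1)]=0$ shows that the conditional mean $\Exp[\xi_i\inp{\A_i}{h}\mid\gamma_i]$ vanishes, so the dominant contribution is a centered bilinear form in the Gaussian $\Pp\ab_i$ of scale $\tau$. A sub-exponential Bernstein inequality followed by chaining (or a union bound) over $\Dc\cap\Sp$ then yields
\[
\sup_{h\in\Dc\cap\Sp}\Big|\tfrac{1}{m}\sum_i\xi_i\inp{\A_i}{h}\Big| \;\lesssim\; \tau\cdot\frac{k\log(n/k)}{\sqrt m},
\]
where the extra $\sqrt{\log(n/k)}$ beyond the Gaussian width is the cost of $\xi_i\inp{\A_i}{h}$ being a product of two sub-exponential variables. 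Dividing by the restricted lower eigenvalue yields $\|\hat\Delta\|_F \lesssim \tau\cdot k\log(n/k)/\sqrt m$, which is exactly \eqref{eq:thm_sparse}.
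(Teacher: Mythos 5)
Your proposal follows essentially the same route as the paper: the basic inequality from feasibility of $\mu X_0$, Mendelson's small-ball method for the restricted lower eigenvalue over the descent cone, and the decomposition $\ab_i=\gamma_i\x_0+P^{\perp}\ab_i$ combined with the orthogonality $\E[\xi_i(\gamma_i^2-1)]=0$ to control the multiplier process. The only implementation difference is that the paper bounds the resulting weighted empirical width of the $\ell_1$ descent cone via a polarity/soft-thresholding argument plus Bernstein's inequality (its Proposition on polarity and the sparse-cone lemma), which plays the role of your ``sub-exponential chaining or support-wise union bound''; both yield the same $k\sqrt{\log(n/k)}\cdot\sqrt{\log(n/k)}$ complexity. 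One step in your write-up is stated incorrectly: the conditional mean $\E\big[\xi_i\inp{\A_i}{h}\mid\gamma_i\big]$ does \emph{not} vanish --- it equals $\xi_i(\gamma_i^2-1)\,\x_0^T h\,\x_0$ --- and only its unconditional expectation is zero. This residual term along $\x_0\x_0^T$ (contributing $\etaq\sqrt{m}$ after the supremum) and the cross term $\sum_i\xi_i\gamma_i\,\x_0^T h\,P^{\perp}\ab_i$ (contributing $\rhoq\sqrt{m}\,\wg$) have to be tracked separately from the centered bilinear form in $P^{\perp}\ab_i$; they are precisely why the theorem needs $n/k$ sufficiently large for the $\tau$-term to dominate. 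This bookkeeping omission does not invalidate your argument, but the conditional-centering claim as written is false and should be replaced by the three-term split.
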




We defer the proof of Theorem~\ref{thm:sparse} to Sec.~\ref{sec:sparse_proof}. The theorem does not claim that $\hat\X$ is rank one. As usual, we obtain an estimate $\hat\x$ of $\x_0$ by extracting the rank-one component (e.g. \cite{PhaseLift}). In particular, if $\la_1$ and $\vb_1$ denote the maximum eigenvalue and the principle eigenvector of $\hat\X$, respectively, then we obtain $\hat\x=\sqrt{\la_1}{\vb_1}$. It follows from \eqref{eq:thm_main} that
$
\|\hat\x - \sqrt{\muq} \x_0\|_2\leq C\cdot\min\big\{ \sqrt{\muq} , \frac{E}{\sqrt{\muq}} \big\},
$
where $E$ denotes the expression in the RHS of \eqref{eq:thm_sparse}. The proof is based on the Davis-Kahan sin($\theta$)-theorem and is the same as in \cite[Sec.~6]{PhaseLift}; thus it is omitted for brevity. 

Thm.~\ref{thm:sparse} implies the sample complexity of {$\order{k^2 \log(n/k)}$}. Notably, for quadratic measurements, this is the same as the guarantees of \cite{PhaseLift}. 
In fact, the same $k^2$-barrier appears in most of the algorithms that have been proposed for sparse recovery from quadratic measurements (e.g., \cite[Sec.~6]{soltanolkotabi2017structured} and references therein). 
However, given more information about the link-function (e.g., $f$ having sub-exponential moments), the nominator of the RHS in \eqref{eq:thm_sparse} can be improved to $k^2\sqrt{\log(n/k)}$.
%

\subsection{General result}
\label{sec:main_general}

Thm.~\ref{thm:main} below characterizes the performance of \eqref{eq:algo} for general signal structure and $\Kc_\Rc$. The bounds are given in terms of specific summary parameters. 
We distinguish between: (i) \emph{Geometric parameters} that capture the efficacy of the imposed geometric constraints in \eqref{eq:algo} in  promoting solutions of desired structure (positive semidefinite, low-rank, sparse, etc.); (ii) \emph{Model parameters} that capture $f$.

\subsubsection{Geometric parameters}\label{sec:geom_params}


\indent First, we introduce the notions of the tangent cone and the (local) Gaussian width. 
\begin{definition}[Tangent cone] The tangent cone of a subset $\Kc\subset\Sym_n$ at $X\in\Sym_n$ is defined as
$\Dc(\Kc,X):=\{\tau V: \tau\geq0, V\in\Kc-X\}.$
\end{definition}

\begin{definition}[(Local) Gaussian width]\label{def:lmw}
The local Gaussian width of a set $\Cc \subset \Sym_n$ is a function of a scale parameter $t > 0$, which is defined as 
\begin{align}
\label{eq:lmw}
\wgt(\Cc):=\E_G\big[\sup_{\substack{V\in\Cc\cap t \Sp}}\inp{G}{V}\big],
\end{align} 
where  $G$ is a matrix from the Gaussian orthogonal ensemble (GOE), i.e. $G=G^T$, $G_{ii}\simiid\Nn(0,1)$ for $i\in [n]$, and, $G_{ij}\simiid\Nn(0,1/2)$ for $i>j\in[n]$.

For $t=1$, we  refer to $\omega_{g,1}(\Cc)$ simply as the Gaussian width, and, we use the shorthand $\wg(\Cc)$.\end{definition}

%

\noindent The results of this section only involve the Gaussian width $\omega_{g,1}(\Cc)=\wg(\Cc)$ ($t=1$, above). The general definition of local Gaussian width becomes useful when we study the performance of \eqref{eq:algo} under imperfect parameter tuning in Sec.~\ref{sec:imperfect_mu}. The Gaussian width 
plays a central role in asymptotic convex geometry. It also appears as a key quantity in the study of random linear inverse problems~\cite{Cha,TroppEdge}: for a cone $\Cc\subset\Sym_n$,  $\wg(\Cc)^2$ can be formally described as as a measure of the effective dimension of the cone $\Cc$ \cite{VerBook,TroppEdge}.
Importantly, while it is an abstract geometric quantity, it is possible in many instances to derive sharp numerical bounds that are explicit in terms of the parameters of interest (such as sparsity level, rank) \cite{Sto,Cha,TroppEdge}.
 We make use of these ideas in the proof of Thm. \ref{thm:main} (see Sec.~\ref{sec:polar2}). 

\indent Finally, we need two more geometric parameters: Talagrand's $\gamma_1$ and $\gamma_2$-functionals. To streamline the presentation, we defer the formal definitions of these parameters to Sec.~\ref{app:Tal} in the appendix. 
%
For a set $\Kc\subset\Sym_n$ we write $\gamma_1(\Kc,\|\cdot\|_2)$ and $\gamma_2(\Kc,\|\cdot\|_F)$ for the $\gamma_2$ and $\gamma_1$-functionals with respect to the spectral and the Frobenius norms, respectively.
The $\gamma$-functionals are fundamental in the study of suprema of random processes and specifically in the theory of generic chaining \cite{Tal}. In general, explicit calculation of the $\gamma$-functionals can be challenging depending on the specific set $\Kc$; however, it is often possible to control them in a sufficient way. 
Specifically, for the term $\gamma_2(\Cc;\|\cdot\|_F)$, one can appeal to Talagrand's majorizing measure theorem that establishes a tight (up to constants) relations to the Gaussian width~\cite[Thm.~2.1.1]{Tal}, which can in turn be often well approximated:
$
\gamma_2(\Kc;\|\cdot\|_F) \leq C\cdot \wg(\Kc).
$
More generally, Dudley's integral produces bounds on $\gamma_1, \gamma_2$ in terms of the metric entropy of the set (see \eqref{eq:Dudley} in the appendix) \footnote{See also \cite{Oymak18a} for recent progress on bounding the $\gamma_1$ functional of constraint sets with ``good" covering number in terms of their Gaussian width. In particular, \cite[Lemma D.19]{Oymak18a} can be combined with Thm.~\ref{thm:main} to obtain specific results for other structures beyond sparsity.}. 
\subsubsection{Model parameters} \label{sec:model_pm}

First, recall the definition of $\mu$ and $\tau$ in \eqref{eq:muq_wq}. Moreover,
for $\gamma\sim\Nn(0,1)$
and all expectations take over $\gamma$ and $f$:
\begin{align}\label{eq:params}
&\rhoq^2:= \E\big[\gamma^2\cdot\big(f(\gamma) - \muq\cdot(\gamma^2-1)\big)^2 \big] \quad\text{and} \nn \\
&\etaq^2:= \E\big[(\gamma^2 - 1)^2 \cdot \big(f(\gamma) - \muq\cdot(\gamma^2-1)\big)^2\big].
\end{align} 

\begin{remark}\label{rem:params_general}
The results extend to the more general model of \eqref{eq:model_gen} with a natural modification in \eqref{eq:muq_wq} and \eqref{eq:params}. In particular, $f(\gamma)$ is substituted by a (random variable) $y\sim p(y|\gamma)$, and the expectation is over the conditional distribution $p(y|\gamma)$ and $\gamma$, e.g., $\mu=\frac{1}{2}\int{y\Exp_\gamma[(\gamma^2-1)p(y|\gamma)]}dy$.
\end{remark}

\subsubsection{Main result}

We are now ready to state the main result of this section. 


\begin{theorem}[General result]\label{thm:main}
Suppose that $\ab_i\sim\Nn(0,\Id)$, and that $\y$ follows the model in \eqref{eq:model}. Recall the definitions of $\muq, \tau, \rhoq, \etaq$ in \eqref{eq:muq_wq} and \eqref{eq:params}. Assume that $\muq>0$ and that $\mut=\muq$ and $\muq\X_0\in\Kc_\Rc$ in \eqref{eq:algo}, where $\X_0=\x_0\x_0^T$. Denote
\begin{align}
&\Gamma:= \min\big\{ ~\sqrt{n}~,~\gamma_2\big(\Dc(\Kc_\Rc,\muq\X_0) \cap \Sp,\fro\big) + \gamma_1\big(\Dc(\Kc_\Rc,\muq\X_0) \cap \Sp,\spec\big)~\big\}.
\end{align}
There exist universal constants $c, C >0$ such that, if 
\begin{align}\label{eq:m>1}
m \geq c \cdot \Gamma^2,
\end{align}
then, with high-probability, the solution $\Xh$ of \eqref{eq:algo} satisfies
\begin{align}\label{eq:thm_main}
&\|\Xh-\muq\X_0\|_F \leq \frac{C}{\sqrt{m}} \cdot \Big( \tau\cdot \Gamma + \etaq + \rhoq\cdot\min\big\{ \sqrt{n} , \wg\big(\Dc(\Kc_\Rc,\muq\X_0)\big) \big\} 
 \Big),
\end{align}
\end{theorem}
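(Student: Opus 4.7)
The plan is to reduce the error analysis to three components: a basic inequality from the optimality of $\hat{\X}$, a restricted-eigenvalue (RE) lower bound on the measurement operator $\Ac:\Sym_n\to\R^m$ defined by $[\Ac(\X)]_i=\tr{(\ab_i\ab_i^T-\Id)\X}$, and an upper bound on a noise cross-term. First, set $\Delta := \Xh-\muq\X_0$ and the ``generalized residual'' $h_i := y_i - \tr{(\ab_i\ab_i^T-\Id)\muq\X_0} = f(\ab_i^T\x_0)-\muq((\ab_i^T\x_0)^2-1)$, which is precisely the leftover after approximating $f$ by its second Hermite coefficient. Since $\muq\X_0$ is feasible, expanding the quadratic objective around $\muq\X_0$ yields the standard basic inequality
\begin{equation*}
\|\Ac(\Delta)\|_2^2 \;\leq\; 2\,\langle \Ac^*(h),\Delta\rangle,\qquad \Ac^*(h)=\sum_{i=1}^m h_i(\ab_i\ab_i^T-\Id),
\end{equation*}
and the feasibility constraints force $\Delta\in\Dc(\Kc_\Rc,\muq\X_0)$.

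For the lower bound I would prove a restricted eigenvalue property $\|\Ac(\Delta)\|_2^2\gtrsim m\|\Delta\|_F^2$ uniformly over $\Dc\cap\Sp$, with high probability once $m\gtrsim\Gamma^2$. A standard Gaussian moment calculation gives $\E[\|\Ac(\Delta)\|_2^2]=2m\|\Delta\|_F^2$, so the task is a uniform deviation. Because $\tr{A_i\Delta}=\ab_i^T\Delta\ab_i-\tr{\Delta}$ is sub-exponential rather than sub-Gaussian, standard Gaussian chaining does not apply directly; instead, I would invoke the chaos-type uniform deviation inequality of Krahmer–Mendelson–Rauhut (or Dirksen's refinement), whose complexity parameter is exactly the mixed $\gamma_2(\cdot,\fro)+\gamma_1(\cdot,\spec)$ appearing in the definition of $\Gamma$; the $\sqrt{n}$ truncation is the trivial bound coming from the ambient dimension of $\Sym_n$.

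For the upper bound on $\langle\Ac^*(h),\Delta\rangle$ I would exploit the Hermite-orthogonality $\E[h(\gamma^2-1)]=0$ that is built into the definition of $\muq$. Conditioning on $\gamma_i:=\ab_i^T\x_0$, write $\ab_i=\gamma_i\x_0+\g_i$ with $\g_i$ an independent Gaussian on $\x_0^\perp$, and decompose any symmetric $\Delta$ along $\x_0$ as $\Delta=\alpha\x_0\x_0^T+\x_0\vb^T+\vb\x_0^T+M$ with $\vb\perp\x_0$ and $M$ supported on $\x_0^\perp\times\x_0^\perp$. A direct substitution gives
\begin{equation*}
\langle\Ac^*(h),\Delta\rangle=\alpha\sum_i h_i(\gamma_i^2-1)+2\sum_i h_i\gamma_i\g_i^T\vb+\sum_i h_i\bigl(\g_i^T M\g_i-\tr{M(\Id-\x_0\x_0^T)}\bigr).
\end{equation*}
The first summand is a centered scalar sub-exponential sum with variance $\etaq^2$, bounded by Bernstein at scale $|\alpha|\sqrt{m}\,\etaq$. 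The second, conditionally on $(\gamma_i,h_i)$, is a Gaussian process in $\vb$ whose variance proxy is $\|\vb\|_2^2\sum_i h_i^2\gamma_i^2\approx m\rhoq^2$; its supremum over the descent cone contributes $\rhoq\sqrt{m}\cdot\min(\sqrt{n},\wg(\Dc))$. The third is a sub-exponential chaos process indexed by $M$; its supremum is again handled by generic chaining, producing $\tau\sqrt{m}\cdot\Gamma$.

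Combining the RE lower bound with the three upper bounds gives $m\|\Delta\|_F^2\lesssim\sqrt{m}\,\|\Delta\|_F\cdot(\tau\Gamma+\etaq+\rhoq\min(\sqrt{n},\wg(\Dc)))$, which rearranges to the claimed bound \eqref{eq:thm_main}. The main technical obstacle is the pair of uniform chaos bounds needed for the RE property and for the $M$-summand above: both involve suprema of sub-exponential quadratic forms indexed by a cone in $\Sym_n$, which is exactly why Talagrand's $\gamma_1$ and $\gamma_2$ functionals appear in $\Gamma$ rather than the Gaussian width alone. All other pieces — Bernstein for the scalar term, conditional Gaussian comparison for the rank-two cross term, and the basic-inequality bookkeeping — are routine given these two key estimates.
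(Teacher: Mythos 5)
Your overall architecture matches the paper's: the basic inequality from optimality of $\Xh$, a restricted lower bound for the quadratic process $V\mapsto\|\Ac(V)\|_2$ over the descent cone, and an upper bound on the cross term obtained by decomposing $\ab_i$ along $\x_0$ and its orthogonal complement so as to exploit $\E[h_i(\gamma_i^2-1)]=0$; your three summands are exactly the paper's Terms I, II, III, and the $\gamma_1+\gamma_2$ chaining bound for the sub-exponential chaos pieces is the paper's Lemma on generic chaining. The one genuine methodological difference is the lower bound: you invoke a two-sided Krahmer--Mendelson--Rauhut/Dirksen chaos concentration, whereas the paper uses Mendelson's small-ball method (Paley--Zygmund plus hypercontractivity for the marginal tail, then a one-sided bound involving only the empirical width). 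Both routes work under $m\gtrsim\Gamma^2$; the small-ball route is somewhat lighter because it never needs upper deviations of $\|\Ac(V)\|_2^2$.

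There is, however, one concrete gap: your claim that ``the $\sqrt{n}$ truncation is the trivial bound coming from the ambient dimension of $\Sym_n$'' is wrong. The ambient dimension of $\Sym_n$ is $n(n+1)/2$, so the trivial bound on $\gamma_2(\Sp,\fro)$ (equivalently on the Gaussian width of the Frobenius sphere, $\E\|G\|_F$ for $G$ a GOE matrix) is of order $n$, not $\sqrt{n}$. If you only use the ambient bound, the unconstrained case of the theorem would require $m\gtrsim n^2$ rather than the claimed $m\gtrsim n$. The $\sqrt{n}$ branch of $\Gamma$ (and of the $\rhoq$ term) must come from the feasibility constraints $\X\succeq 0$ and $\tr{X}\leq\mut$: the error always lies in the cone $\Cc_+=\{V:\mu X_0+V\succeq 0,\ \tr{V}\leq 0\}$, which is the descent cone of the function $\tr{\cdot}$ restricted to the positive semidefinite cone at the rank-one point $\mu X_0$, and whose width is $O(\sqrt{n})$. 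The paper obtains this via a polarity argument (bounding the weighted empirical width of $\Cc_+$ by the expected distance of $H_\pb$ to the subdifferential, and using an $\varepsilon$-net/Hanson--Wright bound on $\la_{\max}(H_\pb)$), together with the known fact $\wg(\Cc_+)\leq 6\sqrt{n}$. You need an analogous step; chaining over the full sphere of $\Sym_n$ will not deliver it. The rest of your plan --- Bernstein for the scalar term, the conditional Gaussian supremum for the rank-two cross term (where $\min\{\sqrt{n},\wg\}$ \emph{is} essentially free since $\vb$ ranges over a ball in $\R^n$), and the reassembly of the three bounds --- is sound.
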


Note that Thm.~\ref{thm:main} holds for all values of $n$. Typically, for large enough $n$, the first term in the right hand side (RHS) of \eqref{eq:thm_main}, i.e., $\tau\cdot{\Gamma}/{\sqrt{m}}$, becomes the dominant term. For a simple illustration of the theorem, consider the case of a generic true signal $\x_0$ without any prior structural information. In this case, we solve \eqref{eq:algo} with no additional constraints other than $\X\succeq 0$ and {$\tr{X} \leq \mu$}.  Hence, $\Gamma\leq \sqrt{n}$ and from \eqref{eq:m>1} the sample requirement is $m\geq c'\cdot n$.

\section{Technical results and proofs}
\label{sec:tech}

We now outline the proof of Thm. \ref{thm:main}, which has two main steps. First, in Thm. \ref{thm:tech}, we upper bound the error $\|\Xh - \mu X_0\|_F$ of \eqref{eq:algo} in terms of an appropriate geometric quantity, namely the {\em weighted empirical width}. We state the theorem and outline its proof in Sec.~\ref{sec:part1} and \ref{sec:proof_of_part1}, respectively. Next, in Sec. \ref{sec:control}, we show how to control the weighted empirical width to finally arrive at 
Thm. \ref{thm:main}.

\subsection{General upper bound}\label{sec:part1}

We begin the section by defining  the local weighted empirical width.
Similar to the definition of the local Gaussian width, this geometric is also a function of a scale parameter $t$. In this section, we only make use of the case $t=1$, but the generality of the definition will prove useful in Sec.~\ref{sec:imperfect_mu}.



\begin{definition}[(Local) empirical width]
\label{defn:we}
 Let $\ab_1,\ldots,\ab_m\in\R^n$ be independent copies of a standard normal vector $\Nn(0,\Id_n)$ and $\varepsilon_1,\ldots,\varepsilon_m$ be independent Rademacher random variables. 
 For a set $\Cc\subset\Sym_n$ and a vector $\pb:=(p_1,\ldots,p_m)$ the local weighted empirical width $\wet(\Cc;\pb)$ is a function of a scale parameter $t>0$ defined as follows:
\begin{align}\label{eq:empiricalwidth}
\wet(\Cc;\pb):=\Exp\big[ \sup_{V\in\Cc\cap t\Sp}\inp{V}{H_\pb} \big],
\end{align}
where
$H_\pb:=\frac{1}{\sqrt{m}}\sum_{i=1}^{m}{p_i\cdot\varepsilon_i\cdot\ab_i\ab_i^T}$,
and, the expectation is over the randomness of $\{\ab_i\}$ and of $\{\varepsilon_i\}$. In particular, when $\pb=\mathbf{1}$ we write
$
\wet(\Cc) := \wet(\Cc;\mathbf{1}),
$
and call this the local empirical width. 
	
	If $t=1$, we simply call $\omega_{e,1}(\Cc;\pb)$ the weighted empirical width, and, we use the shorthand $\we(\Cc;\pb)$.

\end{definition}

We study further the weighted empirical width in Sec.~\ref{sec:control}. Now, we are ready to state a general upper bound on the error of the estimate obtained by \eqref{eq:algo}. 
For convenience, let:
\begin{align}\label{eq:K_0}
\Kc_ 0:= \{X~:~X\succeq0~\text{and}~\tr{X}\leq\mut\}.
\end{align}

\begin{theorem}
\label{thm:tech}
Let the same assumptions as in Thm. \ref{thm:main} hold (including $\tilde\mu=\mu$).
Further let 
\begin{align}\label{eq:Ec}
\Cc_0 := \Dc(\Kc_0,\mu X_0)\cap\Dc(\Kc_\Rc,\mu X_0).
\end{align} 
Finally, define $\etab:=(\eta_1,\ldots,\eta_m)$ with $\eta_i:=\big(f(\gamma_i) - \mu\cdot(\gamma_i^2-1)\big)$ and $\gamma_i\simiid{\Nn(0,1)}$,  $i\in [m]$.
There exist constants $c, C>0$ such that, if the number of observations obeys
\begin{align}\label{eq:m>>}
m\geq c \cdot \left( \we(\Cc_0) \right)^2, 
\end{align}
then, the solution $\Xh$ of \eqref{eq:algo} satisfies with high probability:
\begin{align} \label{eq:thm}
&\|\Xh-\muq\X_0\|_F \leq C\cdot \big({ {\Exp_{\etab}\big[ \we(\Cc_0;\etab) \big]} +  \rhoq\sqrt{2}\cdot \wg(\Cc_0) + \etaq }\big)\big/{\sqrt{m}}.
\end{align}
\end{theorem}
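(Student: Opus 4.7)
My plan is to follow the standard basic-inequality / restricted-strong-convexity template for constrained $M$-estimators, with the twists that (a) restricted strong convexity for the quadratic operator $\mathcal{A}(X)_i := \inp{\ab_i\ab_i^T - \I}{X}$ must be obtained via a small-ball argument, and (b) the noise-data term must be decomposed along the $(\x_0, \x_0^\perp)$ orthogonal structure. Set $E := \Xh - \muq\X_0$, $\gamma_i := \ab_i^T\x_0$, and $\eta_i := \y_i - \mathcal{A}_i(\muq\X_0) = f(\gamma_i) - \muq(\gamma_i^2-1)$. Optimality of $\Xh$ and feasibility of $\muq\X_0$ (assured by $\muq\X_0 \in \Kc_\Rc$ together with $\tr{\muq\X_0} = \muq$) give, upon expanding the quadratic loss, the \emph{basic inequality} $\|\mathcal{A}(E)\|_2^2 \leq 2\inp{\etab}{\mathcal{A}(E)}$. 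Since $\Xh, \muq\X_0 \in \Kc_0 \cap \Kc_\Rc$, we have $E \in \Cc_0$; writing $V := E/\|E\|_F \in \Cc_0 \cap \Sp$ reduces the theorem to (i) a uniform lower bound $\inf_{V \in \Cc_0 \cap \Sp}\|\mathcal{A}(V)\|_2^2 \gtrsim m$ and (ii) a uniform upper bound on $\sup_{V \in \Cc_0 \cap \Sp}\inp{\etab}{\mathcal{A}(V)}$ matching the numerator of \eqref{eq:thm}.

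For (i), I would apply Mendelson's small-ball method. The marginal $\mathcal{A}_i(V) = \ab_i^T V \ab_i - \tr{V}$ is a centered Gaussian chaos with $\E[\mathcal{A}_i(V)^2] = 2\|V\|_F^2 = 2$, and a Paley--Zygmund / fourth-moment computation yields a universal small-ball constant. The Rademacher remainder is, by symmetrization of $\sum_i \veps_i \mathcal{A}_i(V)$, exactly of order $\we(\Cc_0)/\sqrt{m}$, so the sample-complexity assumption $m \geq c\,\we(\Cc_0)^2$ is precisely what renders the lower bound non-vacuous with high probability.

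For (ii), the key is the orthogonal decomposition $\ab_i = \gamma_i \x_0 + \bb_i$ with $\bb_i := \Pp\ab_i \sim \Nn(0,\Pp)$ independent of $\gamma_i$. With $v_0 := \x_0^T V \x_0$ and $V_{11} := \Pp V \Pp$, direct expansion together with $\tr V = v_0 + \tr V_{11}$ gives
\begin{align*}
\inp{\etab}{\mathcal{A}(V)} \;=\; v_0 \sum_i \eta_i(\gamma_i^2-1) \;+\; 2\sum_i \eta_i \gamma_i\,\x_0^T V \bb_i \;+\; \sum_i \eta_i\big(\bb_i^T V_{11}\bb_i - \tr{V_{11}}\big) \;=:\; T_1 + T_2 + T_3,
\end{align*}
and each piece should match one term on the right-hand side of \eqref{eq:thm}. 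Specifically: since $|v_0|\leq 1$ and $\E[\eta(\gamma^2-1)]=0$ (the variational orthogonality that characterizes $\muq$), $T_1$ is a scalar sum of second moment $m\,\etaq^2$, hence $\sup_V|T_1|\lesssim\sqrt{m}\,\etaq$. For $T_2 = 2\x_0^T V w$ with $w := \sum_i \eta_i\gamma_i\bb_i$ conditionally Gaussian in $\x_0^\perp$ of scale $(\sum_i \eta_i^2\gamma_i^2)^{1/2}$ of mean $\lesssim\sqrt{m}\,\rhoq$, a Slepian-type comparison between the rank-two symmetric process $V\mapsto\inp{V}{\x_0\tilde w^T + \tilde w\x_0^T}$ and the GOE process defining $\wg$ yields $\E\sup_V|T_2|\lesssim\sqrt{m}\,\rhoq\,\wg(\Cc_0)$. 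Finally, $T_3 = \sum_i \eta_i\inp{V}{\bb_i\bb_i^T - \Pp}$ is a centered quadratic process in the $\bb_i$; conditioning on $\etab$ and symmetrizing by independent Rademacher signs bounds it by the weighted empirical width, $\E\sup_V|T_3|\lesssim\sqrt{m}\,\E_{\etab}[\we(\Cc_0;\etab)]$, after absorbing the $\bb_i\bb_i^T\leftrightarrow\ab_i\ab_i^T$ mismatch relative to the definition of $H_\pb$ into the $T_1, T_2$ bounds (using the uniform estimates $|v_0|, |\tr V_{11}|\leq 1$ on $\Cc_0 \cap \Sp$ that follow from the tangent-cone constraints to PSD and to $\tr X \leq \muq$).

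Passing from expected-supremum bounds to high-probability ones via Talagrand-type concentration for empirical processes, and dividing the basic inequality by $\|E\|_F\cdot\|\mathcal{A}(V)\|_2^2 \gtrsim m$, yields \eqref{eq:thm}. The hardest part will be (ii): because $\eta_i$, $\gamma_i$, and $\bb_i$ are all functions of the same $\ab_i$, naive decoupling fails, and cleanly isolating $T_1, T_2, T_3$ without spurious cross-terms relies crucially on the conditional independence of $\bb_i$ from $(\gamma_i, \eta_i)$ combined with the Hermite orthogonality $\E[\eta H_2(\gamma)] = 0$. Without the latter, the $\muq$-centering inside $\eta_i$ would couple with the $v_0(\gamma_i^2-1)$ piece of $\mathcal{A}_i(V)$ and inflate the effective noise scale beyond the stated parameters $\tauq, \rhoq, \etaq$; the defining choice $\muq = \tfrac12\E[(\gamma^2-1)f(\gamma)]$ is precisely what averts this.
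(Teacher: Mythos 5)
Your proposal follows essentially the same route as the paper's proof: the same basic inequality and homogeneity reduction, the same Mendelson small-ball lower bound with the Paley--Zygmund/fourth-moment computation, and the identical three-term decomposition of the noise--data inner product along $P,P^\perp$ (your $T_1,T_2,T_3$ are the paper's Terms I, III, II), each bounded by $\etaq$, $\rhoq\,\wg(\Cc_0)$, and $\E_\etab[\we(\Cc_0;\etab)]$ via the same orthogonality $\E[\eta(\gamma^2-1)]=0$. The only cosmetic differences are that the paper completes the rank-two Gaussian process to a full GOE matrix by adding independent Gaussian blocks and applying Jensen (rather than a Slepian comparison), and handles the $\bb_i\bb_i^T\leftrightarrow\ab_i\ab_i^T$ mismatch inside the $T_3$ bound itself by the same Jensen-completion with independent copies, then concludes with Markov rather than Talagrand concentration.
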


\subsection{Proof outline of Theorem \ref{thm:tech}}\label{sec:proof_of_part1}

Let $\y = (y_1, \ldots, y_m)$ consist of $m$ observations:
$y_i = f_i(\ab^T_i\x_0),~i \in [m].$ We use $\X_{0}$ and $\Xh$ to denote $\x_0\x_0^T$ and the solution to \eqref{eq:algo}, respectively.
For convenience, we write the loss function in \eqref{eq:algo} as
$\Lc(X) := \|\y - \Ac(X)\| _2^2,$
where the operator $\Ac~:~\R^{n\times n} \rightarrow \R^m$ returns
$
\Ac(X) := \left( 
\ab^T_1X\ab_1 - \tr{X}~,~
\ldots~,~
\ab^T_mX\ab_m - \tr{X}
\right)^T.$
We define the {\em error matrix} 
$\Vh = \widehat{X} - \mu X_0.$  We need to upper bound
$\|\Vh\|_F$ to establish Thm. \ref{thm:main}. Towards this direction, let us consider the excess loss function
\begin{align}
\label{eq:loss_diff}
0 & \leq {\Lc(\mu \X_0)  - \Lc(\widehat{X})} =  \|\y - \Ac(\mu X_0)\| _2^2 -\|\y - \Ac(\mu X_0 + \Vh)\| _2^2.
\end{align} 
The nonnegativity follows by optimality of $\Xh$ and feasibility of $\mu X_0$ (recall that $\muq>0$ and $\mut=\mu$). Therefore,
%
\begin{align}
\label{eq:ineq_premain}
{\|\Ac(\Vh)\|_2^2} \leq 2 \langle \y -  \mu \cdot \Ac(X_0), \Ac(\Vh)\rangle.
\end{align}
On the one hand, recall from \eqref{eq:algo} that $\Vh$ satisfies $\tr{\Vh} \leq 0,$  $~\mu X_0 + \Vh \succeq 0,~\text{and}~\mu X_0+\Vh\in\Kc_\Rc$, i.e., $\Vh\in\Cc_0$. Thus,
\begin{align}
\label{eq:C0S}
\Vh/\|\Vh\|_F \in \Ec:=\Cc_0\cap\Sp.
\end{align}
On the other hand, observe in \eqref{eq:ineq_premain} that the LHS (resp., RHS) is homogeneous of degree 2 (resp., 1). With these,  it follows from \eqref{eq:ineq_premain} that
\begin{align}
\label{eq:ineq_main}
&\|\Vh\|_F\cdot \inf_{V \in \Ec} {\|\Ac(V)\|_2^2} \leq 2 \sup_{V \in \Ec} \langle \Ac(V), \y -  \mu\cdot \Ac(X_0)\rangle.
\end{align}
The strategy is now clear: we will obtain high-probability lower and upper bounds on the LHS and on the RHS, respectively. This will immediately upper bound $\|\Vh\|_F$.

\noindent \emph{Lower bound:~} We lower bound the LHS of \eqref{eq:ineq_main} by employing Mendelson's \emph{Small Ball method} \cite{mendelson1}.
We defer details to Sec.~\ref{sec:proof_lb}.

\begin{lemma}[Lower bound]\label{lem:lb}
There exists positive absolute constant $c>0$ such that for any $s>0$ it holds with probability at least $1-e^{-s^2/4}$:
\begin{align}
\inf_{V\in\Ec} \|\Ac(V)\|_2
\geq c\,\sqrt{m} - 4\, \we(\Cc_0) - s.
\end{align}
\end{lemma}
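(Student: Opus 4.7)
Apply Mendelson's small-ball (``bowling'') method \cite{mendelson1} to the map $\Ac$ with $\Ac(V)_i = \langle \Phi_i,V\rangle$, $\Phi_i := \ab_i\ab_i^T - \Id$. The Koltchinskii--Mendelson/Tropp inequality yields, for every $\xi>0$ and $t>0$ and with probability at least $1-e^{-t^2/2}$,
\[
\inf_{V\in\Ec}\|\Ac(V)\|_2 \;\ge\; \xi\sqrt{m}\, Q_{2\xi}(\Ec) \,-\, 2\, W_m(\Ec) \,-\, \xi t,
\]
with $Q_{2\xi}(\Ec) := \inf_{V\in\Ec}\Pr\bigl[|\langle\Phi_1,V\rangle|\ge 2\xi\bigr]$ the marginal small-ball function and $W_m(\Ec) := \E\sup_{V\in\Ec}\bigl\langle \tfrac{1}{\sqrt m}\sum_i \varepsilon_i\Phi_i,\,V\bigr\rangle$ the symmetrized mean width. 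It suffices to (i) fix $\xi=\xi_0$ so that $Q_{2\xi_0}$ is bounded below by a universal constant $q_0$, and (ii) prove $W_m(\Ec) \le 2\,\we(\Cc_0)$.

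\textbf{Step (i): uniform small ball.} For any $V$ with $\|V\|_F=1$, the spectral decomposition $V=\sum_k\lambda_k v_k v_k^T$ (so $\sum_k\lambda_k^2=1$) gives $\langle\Phi_1,V\rangle = \sum_k\lambda_k(g_k^2-1)$ with $g_k\simiid\Nn(0,1)$. This is a mean-zero homogeneous Gaussian chaos of order $2$ with variance exactly $2$; Gaussian hypercontractivity bounds its fourth moment by a universal constant. Paley--Zygmund then produces universal $\xi_0,q_0>0$ with $\Pr[|\langle\Phi_1,V\rangle|\ge 2\xi_0]\ge q_0$ for every $V$ with $\|V\|_F=1$, and in particular on $\Ec$.

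\textbf{Step (ii): mean empirical width.} The trace-centering inside $\Phi_i$ is the crux: a naive triangle inequality would separate out a $\sup_{V\in\Ec}|\tr V|=O(\sqrt n)$ factor that ruins the bound. I would instead use a ghost-sample argument. Let $\ab'_i$ be iid copies of $\ab_i$, so that $\tr V = \E\bigl[\ab'_i{}^T V \ab'_i\bigr]$. Jensen's inequality followed by the triangle inequality on the supremum give
\[
W_m(\Ec) \;\le\; \E\sup_{V\in\Ec}\tfrac{1}{\sqrt m}\sum_i \varepsilon_i\bigl(\ab_i^T V\ab_i - \ab'_i{}^T V\ab'_i\bigr) \;\le\; 2\,\we(\Cc_0),
\]
where the final step uses that $-\varepsilon_i$ has the law of $\varepsilon_i$ and $\ab'_i$ has the law of $\ab_i$ to identify each of the two symmetric summands with the quantity in Definition \ref{defn:we} (at $t=1$, $\pb=\mathbf{1}$).

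\textbf{Combining and conclusion.} Plugging (i) and (ii) into Mendelson's bound with $t=s/\sqrt 2$ (so $e^{-t^2/2}=e^{-s^2/4}$) and $c:=\xi_0 q_0$ yields
\[
\inf_{V\in\Ec}\|\Ac(V)\|_2 \;\ge\; c\sqrt m \,-\, 4\,\we(\Cc_0) \,-\, (\xi_0/\sqrt 2)\,s \;\ge\; c\sqrt m \,-\, 4\,\we(\Cc_0) \,-\, s,
\]
where the last inequality uses $\xi_0\le\sqrt 2$, which we may enforce since shrinking $\xi_0$ only increases $q_0$. The main obstacle is step (ii): the symmetrization must be performed in a way that does not incur the trace-centering $O(\sqrt n)$ overhead, and the ghost-sample reformulation is what avoids this cleanly. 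Step (i), while standard, relies on the uniform (in the spectrum of $V$) fourth-moment bound for degree-$2$ Gaussian chaos.
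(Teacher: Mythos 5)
Your proposal is correct and follows essentially the same route as the paper: Mendelson's small-ball method in the form of \cite[Prop.~5.1]{troppBowling}, with the marginal tail function bounded below by Paley--Zygmund plus hypercontractivity of the degree-two Gaussian chaos (using $\E|\langle \ab\ab^T-\Id,V\rangle|^2=2\|V\|_F^2$). The only difference is that you explicitly supply the ghost-sample symmetrization that converts the centered width $W_m$ into $2\,\we(\Cc_0)$ and hence produces the factor $4$; the paper absorbs this into the ``minor modifications omitted for brevity'' when quoting Tropp's bound, so your write-up is, if anything, more complete on that point.
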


\noindent \emph{Upper bound:~} 
The key observation for upper bounding the RHS of \eqref{eq:ineq_main} is that we have set up the objective function of \eqref{eq:algo} and we have chosen the value of $\mu$ in such a way that the following holds. Let $z_i=f(\ab_i^T\x_0)-\mu\cdot((\ab_i^T\x_0)^2-1),~i\in[m]$.  Then, setting $\gamma_i:=\ab_i^T\x_0,~i\in[m]$, observing that $\gamma_i\sim\Nn(0,1)$, and recalling the definition of $\mu$, we obtain
\begin{align}
\label{eq:intu}
\Exp\big[z_i\cdot(\gamma_i^2-1\big)] =0.
\end{align}
Therefore, $z_i$ is uncorrelated with $((\ab_i^T\x_0)^2-1)$, and (by definition) it only depends on the measurement vector $\ab_i$ through $\ab_i^T\x_0$. Using these, it can be shown  that 
$\Exp\inp{\y-\mu\,\Ac(X_0)}{\Ac(V)}=0$ for each fixed $V$ (see \eqref{eq:E2} in Sec.~\ref{sec:useless}). On the other hand, in \eqref{eq:ineq_main}, we need to uniformly bound the deviation over \emph{all} $V$. To do that while exploiting the key observation above,
 we decompose the measurement vectors as follows:
\begin{align*}
\ab_i = \x_0\x_0^T\ab_i + \left(\mathbf{I} - \x_0\x_0^T\right)\ab_i =: P \ab_i + P^{\perp} \ab_i,
\end{align*}
where $P, P^\perp$ denote the projection operators to the direction of $\x_0$ and to its orthogonal subspace, respectively. With this representation, 
it can be shown that
\begin{align*}
&\langle \y - \mu\cdot \Ac(X_0), \Ac(V)\rangle 
 =\mathrm{Term~I} + \mathrm{Term~II} + \mathrm{Term~III}  
\end{align*}
where, 
\begin{subequations}\label{eq:Terms}
\begin{align}
\label{eq:term1}
&\mathrm{Term~I} := \sum_{i \in [m]} \tr{VX_0} \,(\gamma^2_i - 1) \, z_i, \\
&\mathrm{Term~II} :=\sum_{i \in [m]} \big(   \ab_{i,\perp}^T V \ab_{i,\perp}+ \tr{V(X_0-\Id)}  \big) \, z_i, \label{eq:term2}\\
\label{eq:term3}
&\mathrm{Term~III} := \sum_{i \in [m]} \big( \x_0^T V \ab_{i,\perp} +   \ab_{i,\perp}^TV\x_0\big) \, \gamma_i \, z_i.
\end{align}
\end{subequations}
and, we denote $\ab_{i,\perp}=P^{\perp} \ab_i$ and  $z_i=f(\gamma_i) - \mu(\gamma_i^2-1)$.
In Sec.~\ref{sec:ub} we upper bound $\Exp[\sup_{V\in\Ec}\mathrm{Term~\star}]$ for all three terms, i.e., $\star=I, II, III$; each one giving rise to one of the terms in the final upper bound of the following lemma. 
%
%
%
%
%
\begin{lemma}[Upper bound]\label{lem:ub}
Let $\rhoq,\etaq$ and $\etab$ be defined as in the statement of Thm. \ref{thm:tech}. Then,
\begin{align}
&\Exp \sup_{V \in \Ec} \langle \y - \mu\, \Ac(X_0), \Ac(V)\rangle \leq \sqrt{m}\, \big( \etaq + 2\,\Exp\big[ \we(\Cc_0;\etab) \big] + \sqrt{2}\,  \rhoq\, \wg(\Cc_0)  \big).\label{eq:ub_lem}
\end{align}
\end{lemma}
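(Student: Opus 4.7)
The plan is to bound $\Exp\sup_{V\in\Ec}\mathrm{Term~\star}$ for each of the three pieces $\star\in\{\mathrm{I},\mathrm{II},\mathrm{III}\}$ appearing in~\eqref{eq:Terms} separately and combine them by the triangle inequality; the three bounds will match, one-to-one, the three summands of~\eqref{eq:ub_lem}. Two elementary facts are used throughout: (i) $\ab_{i,\perp}:=P^\perp\ab_i$ is independent of $\gamma_i=\ab_i^T\x_0$ (and hence of $z_i=\eta_i$) by rotational invariance of $\ab_i$; (ii) $V\in\Ec$ forces $\|V\|_F=1$, which together with $\|X_0\|_F=1$ yields $|\tr{VX_0}|\leq 1$.

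Term~I factors as $\tr{VX_0}\cdot\sum_i(\gamma_i^2-1)z_i$, so pulling out the first factor and using Jensen, independence across $i$, and the centering $\Exp[(\gamma^2-1)z]=0$ that is built into the choice $\muq=\tfrac12\Exp[(\gamma^2-1)f(\gamma)]$, one immediately gets
\begin{align*}
\Exp\sup_{V\in\Ec}\mathrm{Term~I}\;\leq\;\Exp\Big|\sum_{i=1}^m(\gamma_i^2-1)z_i\Big|\;\leq\;\sqrt{m\,\Exp[(\gamma^2-1)^2z^2]}\;=\;\sqrt{m}\,\etaq.
\end{align*}

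For Term~II, the identity $\Exp[\ab_{i,\perp}^T V \ab_{i,\perp}]=\tr{VP^\perp}=\tr{V(\Id-X_0)}$ exhibits Term~II as the centered empirical process $\sum_i z_i(\ab_{i,\perp}^T V\ab_{i,\perp}-\Exp[\ab_{i,\perp}^T V\ab_{i,\perp}])$; conditioning on $\etab$ and applying Rademacher symmetrization upper-bounds its supremum by twice that of $\sum_i\varepsilon_i z_i\,\inp{V}{\ab_{i,\perp}\ab_{i,\perp}^T}$. Expanding
\begin{align*}
\ab_{i,\perp}\ab_{i,\perp}^T \;=\; \ab_i\ab_i^T - \gamma_i^2 X_0 - \gamma_i\big(\x_0\ab_{i,\perp}^T+\ab_{i,\perp}\x_0^T\big),
\end{align*}
the leading piece $\sum_i\varepsilon_i z_i\,\ab_i\ab_i^T=\sqrt{m}\,H_{\etab}$ matches Definition~\ref{defn:we} exactly and produces, after averaging over $\etab$, the $2\sqrt{m}\,\Exp_{\etab}[\we(\Cc_0;\etab)]$ contribution; the $\gamma_i^2 X_0$ piece is absorbed into an inequality of the same form as the Term~I bound, and the bilinear cross-piece is absorbed into an inequality of the same form as the Term~III bound (so none of the three summands is inflated). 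For Term~III itself, setting $g_i:=\gamma_i z_i$ (so $\Exp g_i^2=\rhoq^2$) and using $V=V^T$ rewrites it as $\inp{V}{M_m}$ with $M_m:=\sum_i g_i(\ab_{i,\perp}\x_0^T+\x_0\ab_{i,\perp}^T)$; conditional on $\{g_i\}$, $M_m$ is centered Gaussian with
\begin{align*}
\Exp\big[\inp{V_1-V_2}{M_m}^2\,\big|\,\{g_i\}\big]\;=\;4\Big(\sum_i g_i^2\Big)\|P^\perp(V_1-V_2)\x_0\|_2^2\;\leq\;4\Big(\sum_i g_i^2\Big)\|V_1-V_2\|_F^2,
\end{align*}
so Sudakov--Fernique against a GOE comparator together with Jensen on $\sqrt{\sum_i g_i^2}$ delivers $\sqrt{m}\,\rhoq\,\wg(\Cc_0)$ up to a universal constant (which sharpens to $\sqrt{2}$ by exploiting the rank-two structure of $M_m$ together with $\|\x_0\|_2=1$).

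The main obstacle is the Term~II step: one has to make the symmetrized Rademacher process over $\ab_{i,\perp}\ab_{i,\perp}^T$ land cleanly on the weighted empirical width from Definition~\ref{defn:we}, which is phrased in terms of the \emph{full} rank-one matrices $\ab_i\ab_i^T$. Careful bookkeeping of the two residual pieces produced by the decomposition of $\ab_i\ab_i^T$---and verifying that each of them only inflates the term it is absorbed into by a universal constant---is the core of the argument; Term~I is a one-line moment bound, and Term~III reduces to a textbook Sudakov--Fernique comparison.
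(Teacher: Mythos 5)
Your overall architecture (the three-term decomposition, the second-moment bound for Term~I, and a Gaussian comparison for Term~III) is the same as the paper's, and your Terms~I and~III are sound --- the paper reaches the same $\sqrt{2}\,\rhoq\,\wg(\Cc_0)$ constant by explicitly assembling a GOE matrix from independent Gaussian blocks rather than by Sudakov--Fernique, but both routes work. The gap is in Term~II. Your expansion
\[
\ab_{i,\perp}\ab_{i,\perp}^T=\ab_i\ab_i^T-\gamma_i^2X_0-\gamma_i\big(\x_0\ab_{i,\perp}^T+\ab_{i,\perp}\x_0^T\big)
\]
runs in the wrong direction: after symmetrization you must bound the suprema of the three pieces separately and add them, and the residuals cannot be absorbed ``without inflating the summands.'' Concretely, the $\gamma_i^2X_0$ piece is controlled by a Term-I-style bound of size $2\sqrt{m\,\E[\gamma^4z^2]}$, and $\E[\gamma^4z^2]=\etaq^2+2\rhoq^2-\tau^2$ is not $\etaq^2$; the leftover $\rhoq$ contribution cannot be hidden inside the $\rhoq\,\wg(\Cc_0)$ term because $\wg(\Cc_0)$ can be arbitrarily small. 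The cross piece contributes another $2\sqrt{2}\,\rhoq\sqrt{m}\,\wg(\Cc_0)$, tripling that coefficient. A second, subtler problem: once you rewrite the leading piece as $\sum_i\varepsilon_i z_i\inp{V}{\ab_i\ab_i^T}$ with the \emph{original} vectors $\ab_i$, the weight $z_i$ depends on $\ab_i$ through $\gamma_i=\ab_i^T\x_0$, so this expression is not the quantity $\E_{\etab}\big[\we(\Cc_0;\etab)\big]$ of Definition~\ref{defn:we}, in which the weight vector is independent of the Gaussians forming $H_{\pb}$.

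The paper avoids both problems by going the opposite way. It first replaces $(\gamma_i,\ab_{i,\perp})$ by $(\gamma_i,P^{\perp}\widetilde\ab_i)$ for fresh independent copies $\widetilde\ab_i$, then \emph{adds back} the missing pieces in expectation --- inserting $\E[(\g_i^T\x_0)^2]\cdot\x_0^TV\x_0$ and $\E[\g_i^T\x_0]\cdot\widetilde\ab_i^TP^{\perp}V\x_0$, which are exactly the terms needed to reassemble $\widetilde\ab_i^TV\widetilde\ab_i-\tr{V}$ --- and pulls the expectation outside the supremum by Jensen. This reconstructs the full quadratic form with weights $z_i$ genuinely independent of $\widetilde\ab_i$, leaves no residual terms, and yields $2\sqrt{m}\,\E_{\etab}\big[\we(\Cc_0;\etab)\big]$ after a single symmetrization. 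To repair your argument you would need this add-back-and-Jensen step (or an equivalent decoupling) in place of the subtraction-based expansion.
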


While we defer the proof to the appendix, it is not hard to see that \eqref{eq:intu} is key in bounding the supremum of Term I by $\chi\sqrt{m}$, rather than a trivial bound of order $m$.


\vp
\noindent \emph{Putting things together:~} First, Lemma \ref{lem:lb} and \eqref{eq:m>} imply that there exists constant $c>0$ such that, with probability at least $0.995$, we have: 
$
\inf_{V\in\Ec} \frac{1}{m}\|\Ac(V)\|_2^2 \geq c.
$
Second, Lemma \ref{lem:ub} combined with Markov's inequality imply that, with probability at least $0.995$,  we have:
$
\sup_{V \in \Ec} \langle \y - \mu\cdot \Ac(X_0), \Ac(V)\rangle \leq C\cdot \sqrt{m}\cdot \big( \etaq + \sqrt{2}\cdot  \rhoq\cdot \wg(\Cc_0) + 2\Exp\big[ \we(\Ec;\etab) \big] \big).
$
Then, by union bound, \eqref{eq:ineq_main} holds with probability 0.99.

\subsection{Controlling the weighted empirical width}\label{sec:control}
Note that $\we(\Cc;\pb)$ depends both on the geometry of the cone $\Cc$ and on the weights $p_i$. 
We present  two ways of controlling $\we(\Cc;\pb)$ in terms of simpler geometric quantities. 

\subsubsection{First bound: generic chaining}
This general bound captures the geometry by the $\gamma_1$ and $\gamma_2$-functionals with respect to appropriate metrics, and quantifies the role of the weights by $\ell_\infty$ and $\ell_2$-norms of $\pb$.


\begin{lemma}[Generic chaining bound]\label{lem:gammas}
For a cone $\Cc\subset\Sym_n$ and $\pb\in\R^m$
there exists universal constant $C>0$ such that
$\we(\Cc;\pb) \leq (C/\sqrt{m})\cdot\big({{\|\pb\|_2} \cdot \gamma_2(\Cc;\|\cdot\|_F) + {\|\pb\|_\infty}\cdot \gamma_1(\Cc;\|\cdot\|_2) }\big).$
In particular, 
$\we(\Cc) = \we(\Cc;\mathbf{1}) \leq C\cdot \big( \gamma_2(\Cc;\|\cdot\|_F) + \frac{\gamma_1(\Cc;\|\cdot\|_2)}{\sqrt{m}}\ \big).$
\end{lemma}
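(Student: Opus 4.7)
The plan is to view $V\mapsto\inp{V}{H_{\pb}}=\frac{1}{\sqrt m}\sum_i p_i\varepsilon_i\,\ab_i^T V \ab_i$ as a centered stochastic process indexed by $V\in\Cc\cap\Sp$ whose increments exhibit a \emph{mixed sub-gaussian/sub-exponential} tail, and then apply generic chaining. The two natural scales that emerge are $\|V-V'\|_F$ (responsible for the Gaussian body of the tail) and $\|V-V'\|_2$ (responsible for the exponential tail), which is exactly what is required to obtain the $\gamma_2(\cdot,\|\cdot\|_F)$ and $\gamma_1(\cdot,\|\cdot\|_2)$ terms appearing in the conclusion.

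\paragraph{Step 1: an increment tail bound.}
For fixed $W=V-V'\in\Sym_n$, I would stack the regressors as $\tilde{\ab}:=(\ab_1^T,\dots,\ab_m^T)^T\in\R^{mn}$, a standard Gaussian vector, and write
\[
\sqrt m\,\inp{W}{H_{\pb}} \;=\; \tilde{\ab}^T A_{\varepsilon}\tilde{\ab}, \qquad A_{\varepsilon}:=\mathrm{diag}(p_1\varepsilon_1 W,\dots,p_m\varepsilon_m W).
\]
Since $A_{\varepsilon}$ is block-diagonal with blocks equal to $p_i\varepsilon_i W$, one has $\|A_{\varepsilon}\|_F=\|\pb\|_2\|W\|_F$ and $\|A_{\varepsilon}\|_2=\|\pb\|_\infty\|W\|_2$, independently of $\varepsilon$. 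Applying the Hanson--Wright inequality conditionally on $\varepsilon$ and then integrating over $\varepsilon$ (so that the random conditional mean $\tr(A_{\varepsilon})=(\tr W)\sum_i p_i\varepsilon_i$ is handled via its sub-gaussianity in $\varepsilon$), I expect to obtain
\[
\Pr\!\Big(|\inp{W}{H_{\pb}}|\ge t\Big)\;\le\; 2\exp\!\Big(-c\min\Big\{\tfrac{m\,t^{2}}{\|\pb\|_2^{2}\|W\|_F^{2}},\;\tfrac{\sqrt{m}\,t}{\|\pb\|_\infty\|W\|_2}\Big\}\Big).
\]
The same tail then governs the increments of the process, with $W=V-V'$.

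\paragraph{Step 2: generic chaining with mixed tails.}
The increment tail in Step 1 is precisely the hypothesis of the generic chaining theorem for processes with mixed sub-gaussian/sub-exponential increments (see Talagrand's \emph{Upper and Lower Bounds for Stochastic Processes}, or Dirksen's \emph{Tail bounds via generic chaining}). Applied to the two pseudo-metrics $d_2(V,V'):=\tfrac{\|\pb\|_2}{\sqrt m}\|V-V'\|_F$ and $d_1(V,V'):=\tfrac{\|\pb\|_\infty}{\sqrt m}\|V-V'\|_2$, this theorem yields
\[
\E\sup_{V\in\Cc\cap\Sp}\inp{V}{H_{\pb}}\;\le\; C\bigl(\gamma_2(\Cc\cap\Sp,d_2)+\gamma_1(\Cc\cap\Sp,d_1)\bigr).
\]
Using the positive homogeneity $\gamma_\alpha(T,c\rho)=c\,\gamma_\alpha(T,\rho)$ to pull out the scalar factors, and the monotonicity $\gamma_\alpha(\Cc\cap\Sp,\rho)\le\gamma_\alpha(\Cc,\rho)$ under set inclusion, I arrive at the stated bound
\[
\we(\Cc;\pb)\le\tfrac{C}{\sqrt m}\bigl(\|\pb\|_2\,\gamma_2(\Cc;\|\cdot\|_F)+\|\pb\|_\infty\,\gamma_1(\Cc;\|\cdot\|_2)\bigr).
\]
The ``in particular'' statement then follows by specializing to $\pb=\mathbf 1$, for which $\|\mathbf 1\|_2=\sqrt m$ and $\|\mathbf 1\|_\infty=1$.

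\paragraph{Main obstacle.}
The delicate point is Step 1: after applying Hanson--Wright conditionally on $\varepsilon$, one is left with the random mean $\tr(A_{\varepsilon})=(\tr W)\sum_i p_i\varepsilon_i$, whose fluctuations are sub-gaussian in $\varepsilon$ with variance proxy $(\tr W)^2\|\pb\|_2^2$. A crude bound $|\tr W|\le\sqrt n\,\|W\|_F$ would contaminate the sub-gaussian scale by a factor of $\sqrt n$. Avoiding this requires either decomposing $W$ into its traceless part and a multiple of $\Id$ (and handling the identity component via its projection onto $\Cc\cap\Sp$), or exploiting implicit control on $|\tr W|$ coming from the structural constraints on $\Cc$ relevant for the downstream application (e.g., for the cone $\Cc_0$ used in Thm.~\ref{thm:tech}, elements satisfy $\tr V\le 0$ together with PSD-induced rank control). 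Once this trace contribution is absorbed into the Frobenius-scale part of the tail, Steps 2--3 are essentially routine.
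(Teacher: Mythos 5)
Your proposal follows essentially the same route as the paper: write $\sqrt m\,\inp{V}{H_\pb}$ as a quadratic form $\bb^TM\bb$ in the stacked Gaussian vector $\bb=[\ab_1^T,\ldots,\ab_m^T]^T\in\R^{mn}$ with $M=\mathrm{BlockDiag}(p_1\varepsilon_1V,\ldots,p_m\varepsilon_mV)$, record $\|M\|_F=\|\pb\|_2\|V\|_F$ and $\|M\|_2=\|\pb\|_\infty\|V\|_2$, invoke Hanson--Wright to get a mixed sub-gaussian/sub-exponential increment bound, and feed it into the generic chaining theorem for two distances (the paper's Proposition~\ref{prop:Tal1}) with the metrics $\|\pb\|_2\fro$ and $\|\pb\|_\infty\spec$; the homogeneity and specialization to $\pb=\mathbf 1$ are as you say. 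This is exactly the paper's argument (Lemma~\ref{lem:our_HW} combined with Section~\ref{app:Tal}).

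The one point of divergence is the ``main obstacle'' you raise, and you are right to raise it. Hanson--Wright controls $\bb^TM\bb-\E_{\bb}[\bb^TM\bb]$, and conditionally on $\varepsilon$ the mean is $\tr{M}=\tr{V}\sum_i p_i\varepsilon_i$, which does not vanish unless $\tr{V}=0$; the paper's proof of Lemma~\ref{lem:our_HW} writes ``$\tilde H=\bb^TM\bb-\E[\bb^TM\bb]$'' and thereby silently identifies the uncentered chaos with its centered version. As stated for arbitrary $V$ the tail bound there is false (take $m=1$, $p_1=1$, $V=\Id$: then $\inp{\tilde H}{V}=\varepsilon_1\|\ab_1\|_2^2$, which is of order $n$ with probability close to one, while the claimed bound at $t=n/2$ gives probability $2e^{-n/16}$), and correspondingly Lemma~\ref{lem:gammas} itself fails for a cone such as $\Cc=\{t\Id:t\in\R\}$, where $\we(\Cc)\asymp\sqrt n$ but the right-hand side is $O(1)$. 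So your proposal does not merely match the paper --- it is more careful than the paper on the only genuinely delicate step. The repairs you sketch are the correct ones: the extra fluctuation $\tr{V-U}\cdot\sum_ip_i\varepsilon_i$ is sub-gaussian at scale $\|\pb\|_2\,|\tr{V-U}|$, so it is absorbed into the $\gamma_2(\cdot,\fro)$ term whenever the index set satisfies $|\tr{V}|\le C\|V\|_F$; this does hold on the cone $\Dc(\Kc_0,\mu X_0)$ actually used in Theorem~\ref{thm:tech} (for $X\succeq0$ with $\tr{X}\le\mu$ one has $\lambda_{\max}(X)\le\tr{X}\le\mu$ and hence, by Hoffman--Wielandt, $|\tr{X}-\mu|\le\mu-\lambda_{\max}(X)\le\|X-\mu X_0\|_F$). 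To make the lemma literally correct for general cones one must either add such a trace hypothesis or replace $\gamma_2(\Cc;\fro)$ by the $\gamma_2$-functional of the metric $\big((\tr{V-U})^2+\|V-U\|_F^2\big)^{1/2}$.
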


See Sec.~\ref{app:Tal} for a proof. Note that further using the crude bound $\|\pb\|_\infty\leq\|\pb\|_2$ implies:
\begin{align}\label{eq:genc_simple}
&\we(\Cc;\pb) \leq \frac{C\|\pb\|_2}{\sqrt{m}} \cdot \big( \gamma_2(\Cc;\|\cdot\|_F) + \gamma_1(\Cc;\|\cdot\|_2 \big).
\end{align}
%
%


\subsubsection{Second bound: polarity}\label{sec:polar2}
Alternatively, one can apply polarity arguments, by extending recent results regarding the Gaussian width to more general notions such as the weighted empirical width. The idea is as follows. By using polarity it can be shown that $\we(\Cc;\pb)$ is upper bounded by the expected distance of $H_\pb$ (cf.~Definition~\ref{defn:we}) to the polar cone $\Cc^\circ$. When $\Cc$ is the tangent cone of some convex proper function (say) $\Rc$,  then,  $\Cc^\circ$ is  the cone of subdifferential of $\Rc$. 
%
%
Thus, we arrive at the result below that is based on techniques from \cite{Sto,Cha}. 
\begin{proposition}[Polarity bound]\label{prop:polar}  
Let $\Rc:\Sym_n\rightarrow\R$ be a proper convex function, fix $X_0\in\Sym_n$, and let $\widetilde{\Kc}_\Rc:= \{V : \Rc(X_0+V)\leq\Rc(X_0)\}$. Assume that the subdifferential $\partial \Rc(X_0)$ is non-empty and does not contain the origin. Then, for $H_\pb=\frac{1}{\sqrt{m}} \sum_{i\in [m]}p_i\cdot\varepsilon_i\ab_i\ab_i^T$ it holds:
\[\label{eq:polar}
\we^2\big(\Dc(\widetilde{\Kc}_\Rc,X_0);\pb\big) \leq  \E\big[ \inf_{\la\geq 0}~ \inf_{V\in \partial \Rc(X_0)} \big\| H_\pb - \lambda\cdot V\|_F^2 \big].
\]
\end{proposition}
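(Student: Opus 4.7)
The plan is to combine classical polarity duality with Jensen's inequality, mirroring the template used by Stojnic and Chandrasekaran et al.\ for bounding Gaussian widths in terms of the subdifferential. Let $\Cc := \Dc(\widetilde{\Kc}_\Rc,X_0)$ and (without loss of generality) work with its closure, which is a closed convex cone because $\Rc$ is convex and $\widetilde{\Kc}_\Rc$ is a sublevel set of $\Rc$ at $X_0$. Passing to the closure does not change the supremum in the definition of $\we$ nor the polar cone, so this is harmless.

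First, I would identify the polar cone. A classical result in convex analysis (e.g., Rockafellar, Cor.~23.7.1) states that the polar of the tangent cone to a sublevel set of a convex function at $X_0$ coincides with the conic hull of its subdifferential:
\begin{equation*}
\Cc^\circ \;=\; \{\,\la V : \la\geq 0,\; V\in\partial\Rc(X_0)\,\}.
\end{equation*}
The hypothesis that $\partial\Rc(X_0)$ is nonempty and excludes the origin ensures this polar cone is neither trivial nor the whole space. Consequently, the squared Euclidean distance from any $H\in\Sym_n$ to $\Cc^\circ$ equals
\begin{equation*}
\mathrm{dist}^2(H,\Cc^\circ) \;=\; \inf_{\la\geq 0}\;\inf_{V\in\partial\Rc(X_0)}\;\|H-\la V\|_F^2,
\end{equation*}
which is precisely the quantity appearing on the right-hand side of \eqref{eq:polar}.

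Second, I would invoke polar duality for the support function. For any closed convex cone $\Cc\subset\Sym_n$ and any $H\in\Sym_n$, Moreau's decomposition yields $H=\Pi_\Cc(H)+\Pi_{\Cc^\circ}(H)$ with the two projections $\|\cdot\|_F$-orthogonal; this in turn gives the well-known identity
\begin{equation*}
\sup_{V\in\Cc,\;\|V\|_F\leq 1}\,\langle V,H\rangle \;=\; \|\Pi_\Cc(H)\|_F \;=\; \mathrm{dist}(H,\Cc^\circ).
\end{equation*}
Since $\Cc$ is a cone, the supremum over the sphere $\Cc\cap\Sp$ appearing in Definition~\ref{defn:we} is at most the supremum over $\Cc$ intersected with the unit Frobenius ball, so we obtain the pointwise bound $\sup_{V\in\Cc\cap\Sp}\langle V,H_\pb\rangle \leq \mathrm{dist}(H_\pb,\Cc^\circ)$.

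Finally, I would put the pieces together. Taking expectations over $\{\ab_i\}$ and $\{\varepsilon_i\}$ gives $\we(\Cc;\pb)\leq\E[\mathrm{dist}(H_\pb,\Cc^\circ)]$. Squaring both sides and applying Jensen's inequality to the convex function $x\mapsto x^2$ (applicable since the distance is nonnegative), I get
\begin{equation*}
\we^2(\Cc;\pb) \;\leq\; \bigl(\E[\mathrm{dist}(H_\pb,\Cc^\circ)]\bigr)^2 \;\leq\; \E[\mathrm{dist}^2(H_\pb,\Cc^\circ)],
\end{equation*}
which combined with the characterization of $\Cc^\circ$ from Step~1 gives \eqref{eq:polar}. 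There is no real technical obstacle; the only subtlety is the closure issue mentioned at the outset and verifying that the polarity identity applies to matrices under the Frobenius inner product, but this is immediate since $(\Sym_n,\langle\cdot,\cdot\rangle_F)$ is just a Euclidean space of dimension $\binom{n+1}{2}$ to which all standard convex analysis applies verbatim.
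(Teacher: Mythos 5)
Your proof is correct and follows essentially the same route the paper takes (and merely sketches, deferring the details to \cite{Sto,Cha}): polar duality via the Moreau decomposition reduces the restricted supremum to the Frobenius distance from $H_\pb$ to the polar cone, the polar of the descent cone is identified with the cone generated by $\partial\Rc(X_0)$ using the hypothesis that $0\notin\partial\Rc(X_0)$, and Jensen's inequality passes from the expected distance to the expected squared distance. Your write-up is in fact more explicit than the paper, which states the proposition without a standalone proof.
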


For illustration, in Lemmas \ref{lem:C1} and \ref{lem:C_sparse} below we apply Proposition \ref{prop:polar} to obtain bounds for the following two cones: (a) $\Dc(\Kc_0,\mu X_0)$; and, (b) $\Dc(\Kc_\Rc,\mu X_0)$. We defer the proofs to Sec.~\ref{sec:polar}.

\begin{lemma}[Weighted empirical width of $\Cc_+$]\label{lem:C1} For $\pb:=(p_1,\ldots,p_m)$ and  
{$\Cc_+=\{ V : \mu X_0 + V \succeq 0~\text{and}~\tr{V} \leq 0\},$}
there exists  constant $C>0$ such that
$
\we( \Cc_+;\pb) \leq \frac{C}{\sqrt{m}}\,(\|\pb\|_2\sqrt{n} + \|\pb\|_\infty n).
$
In particular,
$\we(\Cc_+;\mathbf{1}) \leq C\sqrt{n},\text{ provided that }  m\geq c n \text{ for some constant } c>0.$
\end{lemma}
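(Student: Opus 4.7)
The plan is to invoke Proposition~\ref{prop:polar} with the regularizer $\Rc(X) := \tr{X} + \iota_{\Sym_n^+}(X)$ (trace plus the PSD indicator), whose sublevel set at $\mu\X_0$ coincides with $\Cc_+$ (since $\Rc(\mu\X_0) = \mu$, and $\tr{\mu\X_0 + V} \leq \mu$ iff $\tr{V} \leq 0$). By the sum rule of convex analysis, $\partial\Rc(\mu\X_0) = \{\Id\} + N_{\Sym_n^+}(\mu\X_0)$, and since the normal cone to $\Sym_n^+$ at the rank-one matrix $\mu\X_0$ equals $\{Y \preceq 0 : Y\x_0 = 0\}$, one obtains $\partial\Rc(\mu\X_0) = \{G \in \Sym_n : G \preceq \Id,\ G\x_0 = \x_0\}$. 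Proposition~\ref{prop:polar} then yields
\[
\we^2(\Cc_+;\pb) \leq \Exp\Big[\inf_{\lambda\geq 0,\,G\in\partial\Rc(\mu\X_0)}\|H_\pb - \lambda G\|_F^2\Big].
\]

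\textbf{Controlling the distance.} To bound the RHS I decompose both $H_\pb$ and $G$ along the splitting $\R^n = \mathrm{span}(\x_0)\oplus\x_0^\perp$. Writing $\ab_i = g_i\x_0 + \abt_i$ with $g_i := \x_0^T\ab_i \sim \Nn(0,1)$ independent of $\abt_i := \Pbp\ab_i \sim \Nn(0,\Pbp)$, we get $H_\pb = A\,\x_0\x_0^T + \x_0\bb^T + \bb\x_0^T + B$, where $A := m^{-1/2}\sum_i p_i\varepsilon_i g_i^2$, $\bb := m^{-1/2}\sum_i p_i\varepsilon_i g_i\abt_i$, and $B := m^{-1/2}\sum_i p_i\varepsilon_i\abt_i\abt_i^T$ is supported on $\x_0^\perp$. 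Any feasible $G$ takes the form $G = \x_0\x_0^T + M$ with $M$ supported on $\x_0^\perp$ and $M \preceq \Pbp$. Orthogonality of the four blocks yields $\|H_\pb - \lambda G\|_F^2 = (A-\lambda)^2 + 2\|\bb\|_2^2 + \|B - \lambda M\|_F^2$. Choosing $\lambda := \max(A, \|B\|_2, 0)$ makes $B \preceq \lambda\Pbp$, so $M := B/\lambda$ is feasible and annihilates the third summand; a short case analysis gives $(A-\lambda)^2 \leq 2A^2 + 2\|B\|_2^2$. Therefore $\we^2(\Cc_+;\pb) \leq 2\Exp[A^2] + 2\Exp\|\bb\|_2^2 + 2\Exp\|B\|_2^2$.

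\textbf{Bounding the three terms.} Elementary second-moment calculations (using $\Exp[g_i^4] = 3$ and $\Exp\|\abt_i\|_2^2 = n-1$, together with $\Exp[\varepsilon_i\varepsilon_j] = \delta_{ij}$) give $\Exp[A^2] = 3\|\pb\|_2^2/m$ and $\Exp\|\bb\|_2^2 = (n-1)\|\pb\|_2^2/m$, together contributing the $\sqrt{n}\,\|\pb\|_2$ piece of the target. For $\Exp\|B\|_2^2$ I invoke a matrix concentration inequality for the zero-mean random sum $\sqrt{m}\,B$: with variance proxy $\sigma^2 = \|\sum_i p_i^2\,\Exp[(\abt_i\abt_i^T)^2]\|_2 = (n+1)\|\pb\|_2^2$ and almost-sure bound $L \lesssim n\|\pb\|_\infty$ (using Gaussian concentration of $\|\abt_i\|_2^2$), non-commutative Bernstein/Khintchine delivers $\Exp\|B\|_2^2 \lesssim (n\|\pb\|_2^2 + n^2\|\pb\|_\infty^2)/m$. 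Combining and using $\sqrt{a+b}\leq\sqrt a+\sqrt b$ gives the first claim, and the second (with $\pb = \mathbf{1}$, so $\|\pb\|_2 = \sqrt m$, $\|\pb\|_\infty = 1$) follows by absorbing $n/\sqrt m$ into $\sqrt n$ using $m \geq cn$.

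\textbf{Main obstacle.} The crux is the sharp spectral-norm bound on $B$: a naive matrix Bernstein bound introduces a spurious $\log n$ factor, and obtaining the clean $\sqrt{n}\,\|\pb\|_2 + n\,\|\pb\|_\infty$ scaling requires either a log-free refinement (non-commutative Khintchine for rank-one summands, or an effective-dimension version of matrix Bernstein adapted to PSD summands) or a careful truncation of $\|\abt_i\|_2^2$ around its mean $n-1$ via Gaussian concentration followed by a conditional Rademacher bound. Modulo this quantitative ingredient, the remainder of the argument is a routine convex-analytic computation built on Proposition~\ref{prop:polar}.
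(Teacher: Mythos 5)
Your overall strategy is the same as the paper's: invoke Proposition~\ref{prop:polar} for $\Rc(X)=\tr{X}$ plus the PSD indicator, decompose $H_\pb$ along $\mathrm{span}(\x_0)\oplus\x_0^\perp$ (the paper does this by rotating $\x_0$ to $\mathbf{e}_1$ and partitioning $H$ into $h_{11},\h_{12},H_{22}$), identify $\partial\Rc(\mu X_0)$ as $\{\mathrm{diag}(1,S):\la_{\max}(S)\leq 1\}$, and pick $\la$ large enough that the orthogonal block $B=H_{22}$ can be absorbed into $\la S$, leaving only the scalar term and the cross term $2\|\h_{12}\|_2^2=2(n-1)\|\pb\|_2^2/m$. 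Your choice $\la=\max(A,\|B\|_2,0)$ is in fact slightly more careful than the paper's $\la=\la_{\max}(H_{22})$, since it explicitly respects the constraint $\la\geq 0$.

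The one genuine gap is exactly the one you flag: the bound $\E\|B\|_2^2\lesssim (n\|\pb\|_2^2+n^2\|\pb\|_\infty^2)/m$. This is not a routine citation, and two of the three routes you propose would not close it: non-commutative Khintchine and matrix Bernstein both carry an irreducible $\sqrt{\log n}$ (resp.\ $\log n$) factor in dimension $n$, which would spoil the clean $\|\pb\|_2\sqrt{n}+\|\pb\|_\infty n$ scaling you need. The paper closes this gap with a dedicated result (Lemma~\ref{lem:spectral}): a $1/4$-net of the sphere with $9^n$ points, a pointwise tail bound $\Pro(\vb^T\tilde H\vb\geq t)\leq\exp\big(-\min\{t^2/(4\|\pb\|_2^2),\,t/(2\|\pb\|_\infty)\}\big)$ obtained from the Hanson--Wright inequality applied to the block-diagonal quadratic form in $(\ab_1,\ldots,\ab_m)$ (Lemma~\ref{lem:our_HW}), and a union bound followed by integration of the tail; the $e^{Cn}$ cardinality of the net is exactly offset by choosing the truncation level $\delta\asymp\|\pb\|_2\sqrt{n}+\|\pb\|_\infty n$, with no logarithmic loss. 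Your ``careful truncation of $\|\abt_i\|_2^2$'' route is plausible but unfleshed; if you want to complete the proof you should adopt the net-plus-Hanson--Wright argument, and note that it delivers the second moment $\E\|B\|_2^2$ (which your decomposition requires, not just $\E\|B\|_2$) by integrating $t\,\Pro(\|B\|_2\geq t)$ against the same tail.
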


\begin{lemma}[Weighted empirical width of $\Cc_{\rm sparse}$]\label{lem:C_sparse}
Let $X_0=\x_0\x_0^T$ where $\x_0$ is $k$-sparse. For $\pb:=(p_1,\ldots,p_m)$ and the cone $\Cc_{\rm sparse} :=\{ V~:~\|\mu X_0 + V\|_1\leq \|\mu X_0\|_1\}$, there exists  constant $C>0$ such that
$
\we( \Cc_{\rm sparse};\pb) \leq \frac{C}{\sqrt{m}} k\sqrt{\log\left(n/k\right)} \left( \|\pb\|_2 +  \|\pb\|_\infty\sqrt{2{\log\left(n/k\right)}} \right).
$
In particular,
$
\we(\Cc_{\rm sparse},\mathbf{1}) \leq C {k\sqrt{\log\left({n}/{k}\right)}},$ provided that $m\geq c k^2\log\left({n}/{k}\right),$  for  $c>0.$
\end{lemma}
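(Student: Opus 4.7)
\textbf{Proof plan for Lemma \ref{lem:C_sparse}.} The natural move is to invoke Proposition \ref{prop:polar} with the entrywise $\ell_1$-norm $\Rc(X) := \|X\|_1$ at the base point $\muq X_0$: the sub-level descent set $\widetilde{\Kc}_\Rc = \{V : \|\muq X_0 + V\|_1 \leq \|\muq X_0\|_1\}$ coincides with $\Cc_{\rm sparse}$, so the proposition delivers
\[
\we^2(\Cc_{\rm sparse};\pb) \;\leq\; \E\Big[\inf_{\lambda\geq 0}\inf_{W\in\partial\|\muq X_0\|_1}\|H_\pb - \lambda W\|_F^2\Big].
\]
The subdifferential admits a clean entrywise description: writing $S := \mathrm{supp}(\x_0)$ (so $|S|=k$), the matrix $\muq X_0$ is supported on $S\times S$ with fixed signs $s_{ij} := \mathrm{sign}((\x_0)_i (\x_0)_j)$, and $W\in\partial\|\muq X_0\|_1$ iff $W_{ij}=s_{ij}$ on $S\times S$ and $W_{ij}\in[-1,1]$ elsewhere. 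Since the constraint factorizes over entries, the inner minimization in $W$ splits: the on-support entries contribute $((H_\pb)_{ij}-\lambda s_{ij})^2$, and the off-support entries contribute $(|(H_\pb)_{ij}|-\lambda)_+^2$ via soft-thresholding.

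Next I would pick a threshold $\lambda$ and bound the two sums in expectation. The on-support block has only $k^2$ terms, and since $\E[(H_\pb)_{ij}]=0$ and $\E[(H_\pb)_{ij}^2]\lesssim \|\pb\|_2^2/m$, its expected contribution is at most $Ck^2(\|\pb\|_2^2/m + \lambda^2)$. For the off-support block (at most $n^2$ entries), the key observation is that $(H_\pb)_{ij}=\tfrac{1}{\sqrt{m}}\sum_{\ell} p_\ell\varepsilon_\ell a_\ell(i) a_\ell(j)$ is a sum of independent mean-zero \emph{sub-exponential} variables, so Bernstein's inequality yields the two-regime tail
\[
\Pro\big(|(H_\pb)_{ij}|>t\big) \;\leq\; 2\exp\!\Big(-c\min\Big(\tfrac{mt^2}{\|\pb\|_2^2},\tfrac{\sqrt{m}\,t}{\|\pb\|_\infty}\Big)\Big).
\]
Choosing $\lambda \asymp \sqrt{\log(n/k)/m}\,\|\pb\|_2 + \log(n/k)\,\|\pb\|_\infty/\sqrt{m}$ so that this tail is at most $(k/n)^{C}$, integrating $\E[(|(H_\pb)_{ij}|-\lambda)_+^2]=\int_0^\infty 2u\,\Pro(|(H_\pb)_{ij}|>\lambda+u)\,du$, and summing over the $\leq n^2$ off-support entries, the $n^2$ union-bound loss is absorbed by the $(k/n)^{C}$ factor and the off-support contribution is dominated by $k^2\lambda^2$.

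Combining the two bounds gives $\we^2(\Cc_{\rm sparse};\pb)\lesssim k^2\lambda^2 + k^2\|\pb\|_2^2/m \lesssim (k^2/m)\log(n/k)\big(\|\pb\|_2+\|\pb\|_\infty\sqrt{\log(n/k)}\big)^2$, and taking a square root matches the statement; the specialization $\pb=\mathbf{1}$ (so $\|\pb\|_2=\sqrt{m}$, $\|\pb\|_\infty=1$) under $m\gtrsim k^2\log(n/k)$ yields the claimed $\we(\Cc_{\rm sparse};\mathbf{1}) \leq C k\sqrt{\log(n/k)}$. The main technical obstacle is precisely the integration of the sub-exponential tail in the off-support regime: because products of two Gaussians have only sub-exponential (not sub-Gaussian) tails, the threshold $\lambda$ must carry the extra $\|\pb\|_\infty\log(n/k)/\sqrt{m}$ term, which is what produces the slightly asymmetric $\|\pb\|_2 + \|\pb\|_\infty\sqrt{2\log(n/k)}$ combination appearing in the lemma.
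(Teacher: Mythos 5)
Your proposal is correct and follows essentially the same route as the paper: Proposition \ref{prop:polar} applied to $\Rc(X)=\|X\|_1$ at $\mu X_0$, the entrywise subdifferential splitting into an on-support block of $k^2$ terms and an off-support soft-thresholding block, a Bernstein (sub-exponential) tail bound on the entries of $H_\pb$, and the threshold $\lambda \asymp \tfrac{1}{\sqrt{m}}\big(\|\pb\|_2\sqrt{\log(n/k)}+\|\pb\|_\infty\log(n/k)\big)$ chosen so the $n^2$ off-support terms are damped to order $k^2\lambda^2$. The two-regime tail and the resulting asymmetric $\|\pb\|_2+\|\pb\|_\infty\sqrt{\log(n/k)}$ combination are exactly the mechanism the paper uses.
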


\subsection{Signal recovery with imperfect tuning}
\label{sec:imperfect_mu}
Thm. \ref{thm:tech} assumes ideal tuning $\mut=\mu$ in \eqref{eq:algo}, which guarantees that $X_0$ belongs to the \emph{boundary} of the constraint set $\Kc_0$ (cf. \eqref{eq:K_0}). Moreover, its bounds in terms of the Gaussian/empirical widths of  $\Dc(\Kc_\Rc,\x_0)$ are most informative when $X_0$ is at the boundary of $\Kc_\Rc$. Thm. \ref{thm:imperfect} below relaxes these assumptions, via a local analysis that gives rise to the \emph{local} versions of the Gaussian/empirical widths. The proof is deferred to the appendix. For example, the theorem shows that the error performance of \eqref{eq:algo} is controlled as long as $\mut\geq\mu$ (but, not necessarily equal).

\begin{theorem}[Imperfect tuning]\label{thm:imperfect}
Suppose that $\ab_i\sim\Nn(0,\Id)$, and that $\y$ follows the model in \eqref{eq:model}. Recall the definitions of $\muq$ and of $\tau, \rhoq, \etaq$. Further let 
\begin{align}\label{eq:Ec_imperfect}
\Kc_{\x_0}:= \{\Kc_0-\mu X_0\} \cap \{\Kc_\Rc-\mu X_0\} .
\end{align} 
Finally, define $\etab:=(\eta_1,\ldots,\eta_m)$, where $\eta_i:=\big(f(\gamma_i) - \mu\cdot(\gamma_i^2-1)\big),~i\in [m]$ for $\gamma_i\simiid{\Nn(0,1)}$.
There exist constants $c, C>0$ such that, if the number of observations obeys
\begin{align}\label{eq:m>_t}
m\geq c \cdot {\big({\wet(\Cc_0)}/{t}\big)^2}, 
\end{align}
then, with high probability, the solution $\Xh$ of \eqref{eq:algo} satisfies:
\begin{align}\label{eq:thm_imperfect}
&\|\Xh-\muq\X_0\|_F \leq C\cdot \frac{ {\Exp_{\etab}\big[ \frac{\wet(\Cc_0;\etab)}{t} \big]} +  \rhoq\sqrt{2}\,\frac{\wgt(\Cc_0)}{t} + \etaq }{\sqrt{m}} + t.
\end{align}
\end{theorem}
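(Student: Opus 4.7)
The plan is to adapt the proof of Theorem \ref{thm:tech} via a localization argument at scale $t$. As before, set $\Vh := \Xh - \muq\X_0$ and $\xi := \y - \muq\Ac(\X_0)$. Under the standing assumption $\mut\geq\muq$ together with $\muq\X_0\in\Kc_\Rc$ (so that $\muq\X_0$ is feasible for \eqref{eq:algo}), optimality of $\Xh$ yields $\|\Ac(\Vh)\|_2^2 \leq 2\,\inp{\xi}{\Ac(\Vh)}$, while feasibility of $\Xh$ gives $\Vh\in\Kc_{\x_0}$. I would then proceed by a dichotomy: if $\|\Vh\|_F\leq t$, the claim \eqref{eq:thm_imperfect} holds trivially (the additive $+t$ absorbs the bound), so it remains only to treat the case $\|\Vh\|_F > t$.

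In that case, put $W := t\Vh/\|\Vh\|_F$. Since $\Kc_{\x_0}$ is convex as an intersection of translates of convex sets and contains the origin, and since $t/\|\Vh\|_F\in(0,1)$, we have $W\in\Kc_{\x_0}\cap t\Sp$. Linearity gives $\Ac(\Vh)=(\|\Vh\|_F/t)\Ac(W)$, so the optimality inequality reduces to $(\|\Vh\|_F/t)\|\Ac(W)\|_2^2 \leq 2\,\inp{\xi}{\Ac(W)}$. Taking the sup/inf over the localized slice $\Kc_{\x_0}\cap t\Sp$ yields
\begin{align*}
\|\Vh\|_F \;\leq\; \frac{2\,t\,\sup_{V\in\Kc_{\x_0}\cap t\Sp}\inp{\xi}{\Ac(V)}}{\inf_{V\in\Kc_{\x_0}\cap t\Sp}\|\Ac(V)\|_2^{\,2}}.
\end{align*}

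The next step is to establish the localized versions of Lemmas \ref{lem:lb} and \ref{lem:ub}. A $t$-scaled Mendelson small-ball argument should give, with high probability, $\inf_{V\in\Kc_{\x_0}\cap t\Sp}\|\Ac(V)\|_2 \geq c\,t\sqrt{m} - 4\,\wet(\Kc_{\x_0}) - t\,s$, which is $\gtrsim t\sqrt{m}$ under the sample-size hypothesis \eqref{eq:m>_t}. For the supremum I would reuse the decomposition of $\inp{\xi}{\Ac(V)}$ into Terms I--III of \eqref{eq:Terms}; the only change is that for $V$ on the sphere of radius $t$ one has $|\tr{V\X_0}|\leq t$, so Term~I contributes $\etaq\,t\sqrt{m}$ in place of $\etaq\sqrt{m}$, while Terms~II and~III contribute $2\sqrt{m}\,\E[\wet(\Kc_{\x_0};\etab)]$ and $\sqrt{2m}\,\rhoq\,\wgt(\Kc_{\x_0})$ respectively, by the natural $t$-scaled analogues of the arguments for Lemma \ref{lem:ub}. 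Plugging the two bounds into the display for $\|\Vh\|_F$ and cancelling a factor of $t^2 m$ produces the RHS of \eqref{eq:thm_imperfect}, with the $+t$ slack coming from the trivial case.

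The main obstacle I anticipate is not any single conceptual step but rather the careful re-derivation of both the Mendelson lower bound and the Terms~I--III upper bounds on the convex body $\Kc_{\x_0}\cap t\Sp$: because $\Kc_{\x_0}$ is not scale invariant, unlike the cone $\Cc_0$ of Theorem \ref{thm:tech}, one must track the $t$-dependence through every chaining, Gaussian comparison, and symmetrization step, and the ``free'' normalization $V\mapsto V/\|V\|_F$ available in the cone setting is replaced by the rescaling $V\mapsto tV/\|V\|_F$, which is licit only because $\Kc_{\x_0}$ is convex and contains $0$. Once this bookkeeping is done, the rest of the argument follows the same template as Section \ref{sec:proof_of_part1}.
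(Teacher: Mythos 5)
Your proposal is correct and follows essentially the same route as the paper: the dichotomy on whether $\|\Vh\|_F$ exceeds $t$ (with the $+t$ slack absorbing the trivial case), the rescaling of $\Vh$ onto $\Kc_{\x_0}\cap t\Sp$ justified by star-shapedness of $\Kc_{\x_0}$ about the origin, the $t$-localized Mendelson small-ball lower bound, and the $t$-scaled re-run of the Terms I--III decomposition with Term I picking up the factor $t$. The paper phrases the rescaling by first normalizing to the unit sphere and then pulling the optimizer down to radius $t$, but this is the same argument as your direct substitution $W=t\Vh/\|\Vh\|_F$.
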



\section{Simulations}
\label{sec:sim}

\begin{figure}[t!]

  \centering
  \centerline{\includegraphics[width=0.5\linewidth]{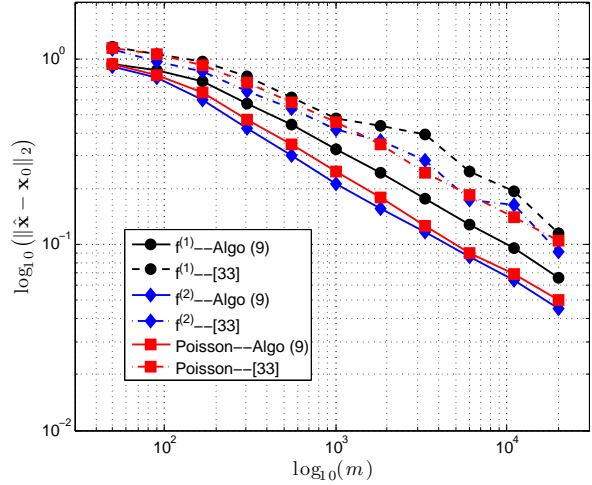}}
\caption{Performance for unstructured signal recovery.}
 \label{fig:general_sig}
\end{figure}

In this section, we show numerical results regarding the performance of the proposed method. In order to solve the optimization in \eqref{eq:algo}, we use PhasePack library~\cite{chandra2017phasepack} (designed for quadratic measurements) with appropriate modifications to account for the different objective function compared to the PhaseLift~\cite{PhaseLift} (cf. Remark \ref{rem:PL}), as well as for the regularization in case of structured signal recovery.

\begin{figure}[h]
\centerline{\includegraphics[width=0.5\linewidth]{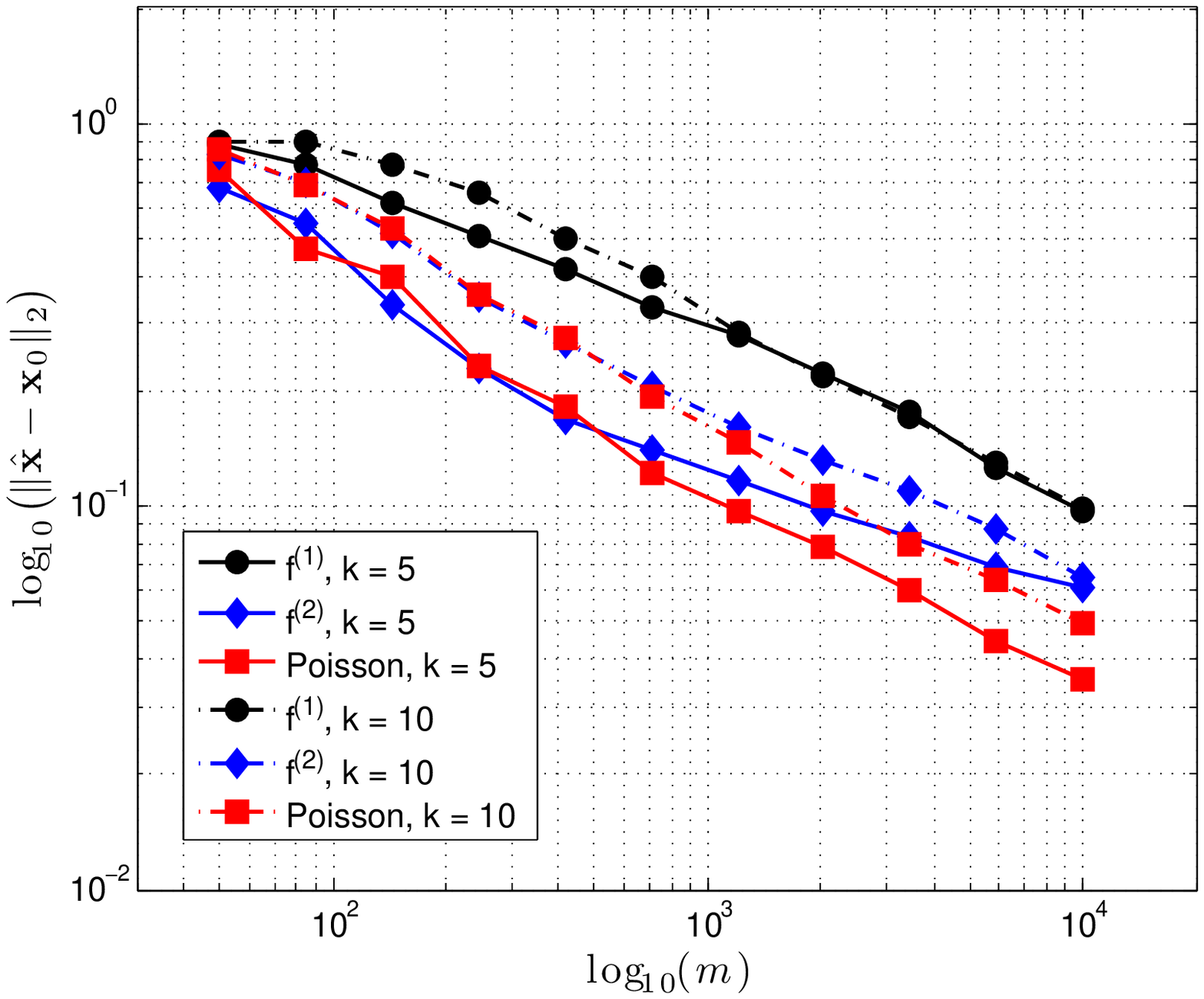}}
\caption{Performance of \eqref{eq:algo} for sparse signal recovery.}
  \label{fig:sparse_sig1}
\end{figure}

In our setup, the unknown unit-norm signal $\x_0$ has dimension $n = 50$. $m$ measurements are generated according to \eqref{eq:model} with $\ab_i \sim \mathcal{N}(0,\Id)$, for the following link functions: (1) $f^{(1)}(u) = \big(\lfloor |u|\rfloor+\frac{1}{2}  \big)   \cdot \mathbbm{1}_{\{|u|< 3\}} +  \frac{7}{2}\cdot\mathbbm{1}_{\{|u| \geq  3\}}$; and (2) $f^{(2)}(u) = \frac{1}{2}\cdot\big(\lfloor 2|u|^2\rfloor +\frac{1}{2}\big)\cdot \mathbbm{1}_{\{|u|^2< 4\}} +  \frac{17}{4}\cdot\mathbbm{1}_{\{|u|^2 \geq  4\}}$. Note that $f^{(1)}$ and $f^{(2)}$ correspond to quantized measurements with alphabets sizes $4$ and $9$, respectively. We also experiment with random link functions as per~\eqref{eq:model_gen}; in particular, $y_i \sim {\rm Poisson}(|\ab_i^T\x_0|^2)$, for all $i \in [m]$. {In all cases, we solve \eqref{eq:algo} with $\tilde{\mu}=\mu$, with appropriate $\mu$ for each link function.} The results shown are averages over $40$ trials.


Figure~\ref{fig:general_sig} shows the estimation error of \eqref{eq:algo} as a function of $m$ for a generic $\x_0$ (aka  no constraint $X \in \mathcal{K}_{\Rc}$).  We also compare the proposed method with the PCA-based method in ~\cite{HanLiu17,tan2017sparse}; we empirically observe that the former method  outperforms the latter for all tested link functions. Similarly, Figure~\ref{fig:sparse_sig1} illustrates the performance of \eqref{eq:algo} in a  sparse-signal  recovery setting with sparsity levels $k \in \{5, 10\}.$ 

Our simulations also confirm the point briefly made in Sec.~\ref{sec:relevant}. The precise nature of our results can be useful in optimal design of system parameters. In particular, our analysis suggests that link functions with smaller effective noise parameter $\tau/\mu$ result in smaller error. Indeed, the ratio $\tau/\mu$ of $f^{(1)}$ and $f^{(2)}$ is equal to $2.25$ and $1.51$, respectively, which is in agreement to the better recovery performance of $f^{(2)}$ in our simulations. 

\bibliographystyle{plain}
\bibliography{compbib}

\begin{thebibliography}{10}

\bibitem{TroppEdge}
Dennis Amelunxen, Martin Lotz, Michael~B. Mccoy, and Joel~A. Tropp.
\newblock Living on the edge: Phase transitions in convex programs with random
  data.
\newblock {\em Information and Inference}, 3(3):224--294, Sep. 2014.

\bibitem{phmax2}
Sohail Bahmani and Justin Romberg.
\newblock {Phase Retrieval Meets Statistical Learning Theory: {A} Flexible
  Convex Relaxation}.
\newblock {\em arXiv preprint arXiv:1610.04210}, 2016.

\bibitem{balan2009painless}
Radu Balan, Bernhard~G Bodmann, Peter~G Casazza, and Dan Edidin.
\newblock Painless reconstruction from magnitudes of frame coefficients.
\newblock {\em Journal of Fourier Analysis and Applications}, 15(4):488--501,
  2009.

\bibitem{Bri}
David~R Brillinger.
\newblock A generalized linear model with ``gaussian" regressor variables.
\newblock {\em A Festschrift For Erich L. Lehmann}, page~97, 1982.

\bibitem{PhaseLift}
Emmanuel~J Candes, Thomas Strohmer, and Vladislav Voroninski.
\newblock Phaselift: Exact and stable signal recovery from magnitude
  measurements via convex programming.
\newblock {\em Communications on Pure and Applied Mathematics},
  66(8):1241--1274, 2013.

\bibitem{chandra2017phasepack}
Rohan Chandra, Ziyuan Zhong, Justin Hontz, Val McCulloch, Christoph Studer, and
  Tom Goldstein.
\newblock {PhasePack}: A phase retrieval library.
\newblock {\em Asilomar Conference on Signals, Systems, and Computers}, 2017.

\bibitem{Cha}
Venkat Chandrasekaran, Benjamin Recht, Pablo~A Parrilo, and Alan~S Willsky.
\newblock The convex geometry of linear inverse problems.
\newblock {\em Foundations of Computational Mathematics}, 12(6):805--849, 2012.

\bibitem{erdogdu2016scaled}
Murat~A Erdogdu, Lee~H Dicker, and Mohsen Bayati.
\newblock Scaled least squares estimator for glms in large-scale problems.
\newblock In {\em Advances in Neural Information Processing Systems}, pages
  3324--3332, 2016.

\bibitem{foucart2013mathematical}
Simon Foucart and Holger Rauhut.
\newblock {\em A Mathematical Introduction to Compressive Sensing}.
\newblock Springer Science \& Business Media, 2013.

\bibitem{genzel2017high}
Martin Genzel.
\newblock High-dimensional estimation of structured signals from non-linear
  observations with general convex loss functions.
\newblock {\em IEEE Transactions on Information Theory}, 63(3):1601--1619,
  2017.

\bibitem{goldstein2016structured}
Larry Goldstein, Stanislav Minsker, and Xiaohan Wei.
\newblock Structured signal recovery from non-linear and heavy-tailed
  measurements.
\newblock {\em arXiv preprint arXiv:1609.01025}, 2016.

\bibitem{phmax}
Tom Goldstein and Christoph Studer.
\newblock {{PhaseMax}: Convex Phase Retrieval via Basis Pursuit}.
\newblock {\em arXiv preprint arXiv:1610.07531}, 2016.

\bibitem{hsu2012tail}
Daniel Hsu, Sham Kakade, Tong Zhang, et~al.
\newblock A tail inequality for quadratic forms of subgaussian random vectors.
\newblock {\em Electronic Communications in Probability}, 17, 2012.

\bibitem{li1989regression}
Ker-Chau Li and Naihua Duan.
\newblock Regression analysis under link violation.
\newblock {\em The Annals of Statistics}, pages 1009--1052, 1989.

\bibitem{spint}
Yue~M. Lu and Gen Li.
\newblock Phase transitions of spectral initialization for high-dimensional
  nonconvex estimation.
\newblock {\em arXiv preprint arXiv:1702.06435}, 2017.

\bibitem{mendelson1}
Shahar Mendelson.
\newblock Learning without concentration.
\newblock {\em Journal of the ACM (JACM)}, 62(3):21, 2015.

\bibitem{mondelli2017fundamental}
Marco Mondelli and Andrea Montanari.
\newblock Fundamental limits of weak recovery with applications to phase
  retrieval.
\newblock {\em arXiv preprint arXiv:1708.05932}, 2017.

\bibitem{neykov2016agnostic}
Matey Neykov, Zhaoran Wang, and Han Liu.
\newblock Agnostic estimation for misspecified phase retrieval models.
\newblock In {\em Advances in Neural Information Processing Systems}, pages
  4089--4097, 2016.

\bibitem{Oymak18a}
Samet Oymak.
\newblock Learning compact neural networks with regularization.
\newblock {\em arXiv preprint arXiv:1802.01223}, 2018.

\bibitem{oymak2015simultaneously}
Samet Oymak, Amin Jalali, Maryam Fazel, Yonina~C Eldar, and Babak Hassibi.
\newblock Simultaneously structured models with application to sparse and
  low-rank matrices.
\newblock {\em IEEE Transactions on Information Theory}, 61(5):2886--2908,
  2015.

\bibitem{oymak2016fast}
Samet Oymak and Mahdi Soltanolkotabi.
\newblock Fast and reliable parameter estimation from nonlinear observations.
\newblock {\em arXiv preprint arXiv:1610.07108}, 2016.

\bibitem{Ver}
Yaniv Plan and Roman Vershynin.
\newblock The generalized lasso with non-linear observations.
\newblock {\em arXiv preprint arXiv:1502.04071}, 2015.

\bibitem{plan2014high}
Yaniv Plan, Roman Vershynin, and Elena Yudovina.
\newblock High-dimensional estimation with geometric constraints.
\newblock {\em arXiv preprint arXiv:1404.3749}, 2014.

\bibitem{soltani2017fast}
Mohammadreza Soltani and Chinmay Hegde.
\newblock Fast algorithms for demixing sparse signals from nonlinear
  observations.
\newblock {\em IEEE Transactions on Signal Processing}, 2017.

\bibitem{soltanolkotabi2017structured}
Mahdi Soltanolkotabi.
\newblock Structured signal recovery from quadratic measurements: Breaking
  sample complexity barriers via nonconvex optimization.
\newblock {\em arXiv preprint arXiv:1702.06175}, 2017.

\bibitem{Sto}
Mihailo Stojnic.
\newblock Various thresholds for $\ell_1$-optimization in compressed sensing.
\newblock {\em arXiv preprint arXiv:0907.3666}, 2009.

\bibitem{Tal}
Michel Talagrand.
\newblock {\em The generic chaining: upper and lower bounds of stochastic
  processes}.
\newblock Springer Science \& Business Media, 2006.

\bibitem{tan2017sparse}
Yan~Shuo Tan.
\newblock Sparse phase retrieval via sparse pca despite model misspecification:
  A simplified and extended analysis.
\newblock {\em arXiv preprint arXiv:1712.04106}, 2017.

\bibitem{NIPS}
Christos Thrampoulidis, Ehsan Abbasi, and Babak Hassibi.
\newblock {LASSO} with non-linear measurements is equivalent to one with linear
  measurements.
\newblock In {\em Advances in Neural Information Processing Systems}, pages
  3420--3428, 2015.

\bibitem{troppBowling}
Joel~A Tropp.
\newblock Convex recovery of a structured signal from independent random linear
  measurements.
\newblock In {\em Sampling Theory, a Renaissance}, pages 67--101. Springer,
  2015.

\bibitem{VerBook}
Roman Vershynin.
\newblock {\em High-Dimensional Probability: An Introduction with Applications
  in Data Science}.
\newblock Cambridge University Press, 2017.

\bibitem{yang2017high}
Zhuoran Yang, Krishnakumar Balasubramanian, and Han Liu.
\newblock High-dimensional non-gaussian single index models via thresholded
  score function estimation.
\newblock In {\em International Conference on Machine Learning}, pages
  3851--3860, 2017.

\bibitem{HanLiu17}
Zhuoran Yang, Krishnakumar Balasubramanian, Zhaoran Wang, and Han Liu.
\newblock Estimating high-dimensional non-gaussian multiple index models via
  stein's lemma.
\newblock In {\em Advances in Neural Information Processing Systems}, pages
  6099--6108, 2017.

\bibitem{yang2017misspecified}
Zhuoran Yang, Lin~F Yang, Ethan~X Fang, Tuo Zhao, Zhaoran Wang, and Matey
  Neykov.
\newblock Misspecified nonconvex statistical optimization for phase retrieval.
\newblock {\em arXiv preprint arXiv:1712.06245}, 2017.

\bibitem{yi2015optimal}
Xinyang Yi, Zhaoran Wang, Constantine Caramanis, and Han Liu.
\newblock Optimal linear estimation under unknown nonlinear transform.
\newblock In {\em Advances in neural information processing systems}, pages
  1549--1557, 2015.

\end{thebibliography}

\newpage 
\onecolumn

\appendix

\section{Proofs for Section \ref{sec:main}}

\subsection{Proof of Theorem \ref{thm:sparse}}\label{sec:sparse_proof}
We start from Theorem \ref{thm:tech} and we directly control the weighted empirical width using the polarity strategy of Section \ref{sec:polar2}. Specifically, we use Lemma \ref{lem:C_sparse}.

%
%
Note that $\Cc_{\rm sparse}$ is the tangent cone of the constraint set $\Kc_\Rc=\{\X~:~\|X\|_1\leq\mu\|X_0\|_1\}$  at $\mu X_0$.
Also, since $\Cc_0\subset\Cc_{\rm sparse}$, it holds $\we( \Cc_0;\pb) \leq \we( \Cc_{\rm sparse};\pb)$; thus the lemma directly applies to Theorem \ref{thm:tech}. In more detail, we have
\begin{align}
\Exp_{\etab}[\we(\Cc_0;\etab)] &\leq  C \cdot k \cdot \sqrt{\log(n/k)} \cdot \Bigg( \frac{\Exp_{\etab}\|\etab\|_2}{\sqrt{m}} +  \frac{\Exp_{\etab}\|\etab\|_\infty}{\sqrt{m}} \sqrt{2\log(n/k)} \Bigg).\label{eq:aspro2}
\end{align}
Recall the definitions of $\etab$ and note that 
$$
\Exp_{\etab}\|\etab\|_2 \leq \sqrt{\Exp_{\etab}\|\etab\|_2^2} \leq \sqrt{m}\cdot \E[\eta_1^2] = \sqrt{m} \tau.
$$
Thus using the crude bound $\|\etab\|_\infty\leq \|\etab\|_2$ results in the following:
\begin{align}\label{eq:aspro3}
\Exp_{\etab}[\we(\Cc_0;\etab)] \leq   {C_2} \cdot \tau \cdot k \cdot \sqrt{\log(n/k)}  \left( 1 + \sqrt{2\log(n/k)} \right).
\end{align}
It only remains to compute the Gaussian width of $\Cc_{\rm sparse}$. It is know that (e.g., \cite[Prop.~3.10]{Cha}) 
\begin{align}\label{eq:aspro4}
\wg(\Cc_{\rm sparse}) \leq  3\sqrt{2} k \sqrt{\log(n/k)}.
\end{align}
Putting \eqref{eq:aspro3} and \eqref{eq:aspro4} in \eqref{eq:thm}, we have shown that if $m\geq ck^2\log(n/k)$ it holds with high probability:
\begin{align}\label{eq:aspro5}
&\|\Xh-\muq\X_0\|_F \leq  C_1\cdot \Bigg( \frac{ C_2\cdot \tau \cdot k  \sqrt{\log(n/k)}  \left( 1 + \sqrt{2\log(n/k)} \right)}{\sqrt{m}}~+~ \frac{6\rhoq\cdot k \sqrt{\log(n/k)} + \etaq }{\sqrt{m}}\Bigg).
\end{align}
Then, the statement of Theorem \ref{thm:sparse} follow from \eqref{eq:aspro5} by taking $n$ (in fact $n/k$) large enough (which will depend on $C_2$, $\tau$, $\rho$ and $\chi$).

\subsection{Proof of Theorem \ref{thm:main}}\label{sec:proof_main}
We apply Theorem \ref{thm:tech}, but we need to control the weighted empirical width. Recall the definition of $\Cc_0$ from \eqref{eq:Ec}. Note that $\Cc_0= \Cc_+  \cap \Cc_\Rc$, where
$$\Cc_+= \Dc(\Kc_0,\mu X_0) = \{ V : \mu X_0 + V \succeq 0~\text{and}~\tr{V} \leq 0\}$$
and
$$
 \Cc_\Rc = \Dc(\Kc_\Rc,\mu X_0) = \{V : \Rc(\mu X_0 + V) \leq \Rc(\mu X_0) \}.
$$
Hence 
\[
\we(\Cc_0;\pb)\leq\min\{\we(\Cc_+;\pb),\we(\Cc_\Rc;\pb)\}.
\]

First, we prove that 
\begin{align}\label{eq:2show1}
\eqref{eq:m>1} \Rightarrow \eqref{eq:m>>}.
\end{align}
On the one hand, if $\sqrt{m}\geq c\sqrt{n}$, then from polarity arguments in Lemma \ref{lem:C1} we have that
$\we(\Cc_+;\mathbf{1}) \leq C\sqrt{n} \leq C'\sqrt{m}.
$
On the other hand, if $$\sqrt{m}\geq c\left(\gamma_2(\Cc_\Rc\cap\Sp;\|\cdot\|_F) + {\gamma_1(\Cc_\Rc\cap\Sp;\|\cdot\|_2)}\right),$$ then by Lemma \ref{lem:gammas}:
\begin{align}
\we(\Cc_\Rc\cap\Sp) &\leq C\,\big( \gamma_2(\Cc_\Rc\cap\Sp;\|\cdot\|_F) + {\gamma_1(\Cc_\Rc\cap\Sp;\|\cdot\|_2)}/{\sqrt{m}}\
 \big)\nn\\
& \leq {C'\sqrt{m}}.\label{eq:fanta2}
\end{align}
Thus, we have shown \eqref{eq:2show1}.

Next, we show that
\begin{align}
\E_{\etab}[\we(\Cc_0;\etab)] \leq C\tau\,\min\big\{\sqrt{n},\gamma_2(\Cc_\Rc\cap\Sc;\|\cdot\|_F) + {\gamma_1(\Cc_\Rc\cap\Sc;\|\cdot\|_2)}\big\}. \label{eq:2show2}
\end{align}
We will repeatedly use the fact that
$$
\Exp_{\etab}\|\etab\|_\infty \leq \Exp_{\etab}\|\etab\|_2 \leq \tau\sqrt{m}.
$$
On the one hand, if $\sqrt{m}\geq c\sqrt{n}$ then from Lemma \ref{lem:C1}:
\begin{align}
\Exp_{\etab}[\we(\Cc_0;\etab)] \leq C\cdot\tau\frac{\sqrt{n}}{\sqrt{m}}(1+\sqrt{n}) \leq C' \tau \sqrt{n}.\nn
\end{align}
On the other hand, from Eqn. \eqref{eq:genc_simple},
\begin{align}
\Exp_{\etab}[\we(\Cc_\Rc;\etab)] &\leq  C \cdot \tau \big( \gamma_2(\Cc_\Rc\cap\Sc;\fro) +  \gamma_1(\Cc_\Rc\cap\Sc;\spec) \big)\nn.
\end{align}
Thus, we have shown \eqref{eq:2show2}.

Finally, the proof of the theorem is complete by establishing that
$$
\wg(\Cc_0) \leq \min\{ \sqrt{n} , \wg(\Cc_\Rc) \}.
$$
This follows using $\wg(\Cc_0)\leq\min\{\we(\Cc_+),\we(\Cc_\Rc)\}$ and the well-known bound $\wg(\Cc_+)\leq 6\sqrt{n}$ (e.g. \cite{Cha}).

\section{Proofs for Section \ref{sec:tech}}

\subsection{Proof of Lemma \ref{lem:lb}}\label{sec:proof_lb}

Here, we will prove the following slightly more general version of Lemma \ref{lem:lb}. The modification is needed for the proof of the lower bound when imperfect tuning is studied in Sec.~\ref{sec:imperfect_mu}. Lemma \ref{lem:lb} follows immediately by setting $t=1$.

\begin{lemma}[lower bound]\label{lem:lb_general}
Fix $\Cc\in\Sym_n$. For any $t>0$, there exists positive absolute constant $c>0$ such that for any $s>0$ it holds with probability at least $1-e^{-s^2/4}$:
\begin{align}
\inf_{V\in\Cc\cap t\Sp} \frac{\|\Ac(V)\|_2}t
\geq c\,\sqrt{m} - 4\, \frac{\wet(\Cc_0)}{t} - s.
\end{align}
\end{lemma}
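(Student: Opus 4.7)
The plan is to apply the Koltchinskii--Mendelson \emph{small-ball method} to the linear operator $\Ac(V)_i = \phi_V(\ab_i) := \ab_i^T V \ab_i - \tr V$. The generic statement of the method delivers, for any $\xi,s>0$ and with probability at least $1-e^{-s^2/4}$,
\begin{align*}
\inf_{V\in\Cc\cap t\Sp}\|\Ac(V)\|_2 \;\geq\; \xi\sqrt m\cdot Q_{2\xi}(\Cc\cap t\Sp) \;-\; 2\,\mathcal R_m(\Cc\cap t\Sp) \;-\; \xi s ,
\end{align*}
where $Q_{2\xi}(E):=\inf_{V\in E}\Pr\{|\phi_V(\ab)|\geq 2\xi\}$ is the uniform small-ball function and $\mathcal R_m(E):=\E\sup_{V\in E}\frac{1}{\sqrt m}\sum_i \varepsilon_i\phi_V(\ab_i)$ is the symmetrized empirical process. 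The two required estimates are (i) a dimension-free lower bound on $Q_{2\xi}$ and (ii) an upper bound on $\mathcal R_m$ in terms of $\wet(\Cc)$.

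For step (i), fix $V$ with $\|V\|_F=t$ and diagonalize $V=\sum_j\lambda_j\vb_j\vb_j^T$, so $\phi_V(\ab) = \sum_j \lambda_j(G_j^2-1)$ with $G_j\simiid\Nn(0,1)$. Then $\E[\phi_V(\ab)^2] = 2\sum_j\lambda_j^2 = 2t^2$ while Gaussian hypercontractivity gives $\E[\phi_V(\ab)^4]\leq K\,t^4$. A Paley--Zygmund argument then yields $\Pr\{|\phi_V(\ab)|\geq c_0 t\}\geq c_1$ for absolute constants $c_0,c_1>0$, uniformly in $V$. Choosing $2\xi=c_0 t$ produces the $c\sqrt m$ term in the claimed bound (with $c = c_0 c_1/2$). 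Since $\xi = c_0 t/2$, the $-\xi s$ term becomes $-(c_0/2)\,t\,s$, which after dividing through by $t$ becomes the advertised $-s$ (up to an absorbed absolute constant).

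For step (ii), decompose
\begin{align*}
\tfrac{1}{\sqrt m}\sum_i \varepsilon_i\phi_V(\ab_i) \;=\; \inp{V}{H_{\mathbf 1}} - \tr V\cdot \tfrac{1}{\sqrt m}\sum_i\varepsilon_i ,
\end{align*}
so that the supremum of the first term over $V\in\Cc\cap t\Sp$ is exactly $\wet(\Cc)$. The second summand contributes only a lower-order term because $V\mapsto\tfrac{\tr V}{n}\Id$ is a one-dimensional projection in Frobenius metric; splitting $V=\tilde V+(\tr V/n)\Id$ with $\tilde V$ trace-free and applying Mendelson separately to the scalar direction lets the trace piece be absorbed into a universal constant in front of $\wet(\Cc)$. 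This bounds $\mathcal R_m(\Cc\cap t\Sp)\leq 2\,\wet(\Cc)$ (possibly up to a harmless constant), which after dividing by $t$ becomes $4\,\wet(\Cc)/t$ in the final statement.

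Assembling (i) and (ii) and dividing by $t$ yields
\begin{align*}
\inf_{V\in\Cc\cap t\Sp}\tfrac{\|\Ac(V)\|_2}{t} \;\geq\; c\sqrt m \;-\; 4\,\tfrac{\wet(\Cc)}{t} \;-\; s,
\end{align*}
with probability at least $1-e^{-s^2/4}$. Lemma~\ref{lem:lb} then follows by specializing to $t=1$ (where $\Cc$ is a cone, so $\Cc\cap t\Sp = t(\Cc\cap\Sp)$ and all homogeneity cancellations are immediate). The main obstacle I anticipate is the trace-correction term $\tr V\cdot(\sum_i\varepsilon_i)/\sqrt m$: handled naively via the triangle inequality it injects a spurious $\sqrt n$ factor (since $|\tr V|\leq\sqrt n\,\|V\|_F$), so some care is required either to exploit the trace-nonpositivity of $\Cc_0$ or to perform the trace-free/trace-full split described above so that the final bound involves only $\wet(\Cc)$.
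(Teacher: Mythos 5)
Your overall route is the same as the paper's: Mendelson's small-ball method, with the uniform small-ball function bounded below via Paley--Zygmund plus Gaussian hypercontractivity (using $\E[\phi_V(\ab)^2]=2\|V\|_F^2=2t^2$ on $\Cc\cap t\Sp$), and the symmetrized empirical process controlled by the empirical width. Your step (i) matches the paper's argument essentially verbatim, and the bookkeeping with $\xi\propto t$ and the division by $t$ is fine.

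The one genuine issue is the trace-correction term in step (ii), which you correctly identify as the crux but do not actually resolve. Neither of your two candidate fixes works as stated: trace-nonpositivity of $\Cc_0$ fixes the sign of $\tr{V}$ but the multiplier $\tfrac{1}{\sqrt m}\sum_i\varepsilon_i$ is symmetric, so it does not help upper-bound the supremum (and the lemma is stated for an arbitrary subset of $\Sym_n$ in any case); and the trace-free/trace-full split does not remove the offending factor, because the coefficient of the one-dimensional direction is still $|\tr{V}|\le\sqrt n\,\|V\|_F$, so ``applying Mendelson separately to the scalar direction'' lands you exactly where the naive triangle inequality does. The standard fix --- and the reason the lemma has the constant $4$ rather than the $2$ of the generic small-ball bound --- is Gin\'e--Zinn symmetrization of the centered functionals: write $\Id=\E_{\abt}[\abt_i\abt_i^T]$ for an independent copy of the design, pull this expectation out of the supremum by Jensen, and use that $-\varepsilon_i$ has the same distribution as $\varepsilon_i$, giving
\begin{align*}
\E\sup_{V}\inp{V}{\tfrac{1}{\sqrt m}\textstyle\sum_i\varepsilon_i(\ab_i\ab_i^T-\Id)}
\;\le\;
\E\sup_{V}\inp{V}{\tfrac{1}{\sqrt m}\textstyle\sum_i\varepsilon_i(\ab_i\ab_i^T-\abt_i\abt_i^T)}
\;\le\; 2\,\wet(\Cc),
\end{align*}
with all suprema over $\Cc\cap t\Sp$. (The paper itself relegates this point to ``minor modifications omitted for brevity'' when citing Tropp's exposition, so you located the only nontrivial step; you just did not close it.) With that substitution in place of your split, your argument is complete and coincides with the paper's.
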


To prove the lemma, we need with the following result, which is an application of \cite[Thm.~5.4]{mendelson1} to our setting. In particular, we follow here the exposition in \cite[Prop.~5.1]{troppBowling} with minor modifications (that are omitted for brevity).
\begin{lemma}[Lower bound for a nonnegative empirical process \cite{mendelson1}] \label{lem:mendelson}
Fix a set $\Cc\in\Sym_n$ and let $\Cc_t:=\Cc\cap t\Sp$ for some $t>0$. Let $\ab\sim\Nn(0,\Id)$, and  $\ab_1,\ldots,\ab_m$ be independent copies of $\ab$. Introduce the marginal tail function
$$
Q_\xi(\Cc_t;\ab):= \inf_{V\in\Cc_t} \Pro(~|\inp{\ab\ab^T-\Id}{V}| \geq \xi~), ~~~\text{where } \xi\geq 0.
$$
Let $\varepsilon_1,\ldots,\varepsilon_m$ be independent Rademacher random variables, independent from everything else.
Then, for any $\ksi>0$ and $s>0$, with probability at least $1-e^{-s^2/2}$ it holds,
\begin{align}\label{eq:proof_lb}
\inf_{V\in\Cc_t} \|\Ac(V)\|_2 \geq \xi\sqrt{m}\cdot Q_{2\xi}(\Cc_t;\ab) -4 \wet- \xi s.
\end{align}
\end{lemma}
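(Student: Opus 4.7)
The plan is to implement Mendelson's small-ball method, specialized to the lifted operator $\Ac(V)=(\inp{\ab_i\ab_i^T-\Id}{V})_{i=1}^{m}$. The three ingredients are: (i) bound $\|\Ac(V)\|_2$ pointwise from below by the \emph{count} of coordinates of $\Ac(V)$ exceeding a threshold $\xi$; (ii) replace that count (an indicator) by a Lipschitz surrogate so the resulting empirical mean is amenable to concentration; (iii) uniformly lower bound the surrogate sum via the marginal tail $Q_{2\xi}$, minus a Rademacher empirical process term that will be controlled by $\wet$ after symmetrization and contraction.

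\textbf{Pointwise reduction and mean bound.} For any fixed $V\in\Cc_t$, Cauchy--Schwarz gives $\|\Ac(V)\|_2\geq \|\Ac(V)\|_1/\sqrt{m}$, hence
\[
\|\Ac(V)\|_2 \;\geq\; \frac{\xi}{\sqrt{m}}\sum_{i=1}^{m}\mathbf{1}\{\,|\inp{\ab_i\ab_i^T-\Id}{V}|\geq \xi\,\}.
\]
Introduce a $(1/\xi)$-Lipschitz surrogate $\phi_\xi:\R\to[0,1]$ with $\phi_\xi(0)=0$ and $\mathbf{1}\{|x|\geq 2\xi\}\leq\phi_\xi(x)\leq\mathbf{1}\{|x|\geq\xi\}$ (take $\phi_\xi$ piecewise linear). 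Replacing the indicator by $\phi_\xi$ preserves the lower bound, and writing the empirical sum as mean plus centered deviation,
\[
\tfrac{1}{m}\sum_{i}\phi_\xi\bigl(\inp{\ab_i\ab_i^T-\Id}{V}\bigr)\;\geq\;\E\phi_\xi\bigl(\inp{\ab\ab^T-\Id}{V}\bigr)-\Bigl(\E\phi_\xi-\tfrac{1}{m}\sum_i\phi_\xi\Bigr),
\]
where the mean is $\geq \Pro(\,|\inp{\ab\ab^T-\Id}{V}|\geq 2\xi\,)\geq Q_{2\xi}(\Cc_t;\ab)$ for every $V\in\Cc_t$.

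\textbf{Deviation and concentration.} The supremum of the centered deviation over $V\in\Cc_t$ is handled by Gin\'e--Zinn symmetrization followed by Talagrand's contraction principle (peeling off the $(1/\xi)$-Lipschitz $\phi_\xi$), yielding
\[
\E\sup_{V\in\Cc_t}\Bigl(\E\phi_\xi-\tfrac{1}{m}\sum_i\phi_\xi\Bigr)\;\leq\;\frac{4}{m\xi}\,\E\sup_{V\in\Cc_t}\Bigl|\sum_{i=1}^{m}\varepsilon_i\inp{\ab_i\ab_i^T-\Id}{V}\Bigr|\;\leq\;\frac{4}{\sqrt{m}\,\xi}\,\wet,
\]
where the last step identifies $\tfrac{1}{\sqrt{m}}\sum_i\varepsilon_i\ab_i\ab_i^T$ with $H_{\mathbf{1}}$ from Definition~\ref{defn:we}. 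Each $\phi_\xi$-summand is bounded in $[0,1]$, so the functional $\sup_{V}(\E\phi_\xi-\tfrac{1}{m}\sum_i\phi_\xi)$ has bounded differences of order $1/m$ in each $\ab_i$; McDiarmid's inequality then gives concentration of order $s/\sqrt{m}$ around its mean with probability $\geq 1-e^{-s^2/2}$. Assembling,
\[
\inf_{V\in\Cc_t}\|\Ac(V)\|_2 \;\geq\; \frac{\xi}{\sqrt{m}}\Bigl(\,m\,Q_{2\xi}(\Cc_t;\ab)\;-\;\tfrac{4\sqrt{m}}{\xi}\,\wet\;-\;\sqrt{m}\,s\,\Bigr)\;=\;\xi\sqrt{m}\,Q_{2\xi}(\Cc_t;\ab)\;-\;4\,\wet\;-\;\xi\,s,
\]
which is the asserted bound.

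\textbf{Main obstacle.} The delicate point is matching the Rademacher average $\E\sup_V|\sum_i\varepsilon_i\inp{\ab_i\ab_i^T-\Id}{V}|$ cleanly with $\sqrt{m}\,\wet$, since $\wet$ is defined only in terms of $\sum_i\varepsilon_i\ab_i\ab_i^T$. The identity shift produces a spurious residual $(\sum_i\varepsilon_i)\tr{V}$; although it has mean zero, it enters the supremum and must be absorbed. The fix is either to exploit the sign-symmetry of $\{\varepsilon_i\}$ to reduce to the untilted process, or to apply contraction directly to the already-centered surrogate $\phi_\xi-\E\phi_\xi$ against an independent Rademacher-weighted copy, which automatically discards the identity part. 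Tracking the exact Ledoux--Talagrand constants (symmetrization factor $2$, contraction factor $2\cdot\mathrm{Lip}(\phi_\xi)=2/\xi$) is what produces the factor $4$ and the $e^{-s^2/2}$ tail stated in the lemma.
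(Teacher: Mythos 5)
You should know at the outset that the paper does not actually prove this lemma: it is imported wholesale, stated as ``an application of \cite[Thm.~5.4]{mendelson1}'' following the exposition of \cite[Prop.~5.1]{troppBowling} ``with minor modifications (that are omitted for brevity).'' Your reconstruction---pointwise reduction to a threshold count, a $1/\xi$-Lipschitz surrogate sandwiched between the indicators at levels $\xi$ and $2\xi$, bounded differences (McDiarmid) for the fluctuation, and symmetrization plus contraction to bound the mean deviation by the Rademacher process---is exactly the standard small-ball argument of that cited source, so in approach you coincide with what the paper defers to.

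The one substantive issue is the step you yourself flag as the ``main obstacle,'' which is precisely the ``minor modification'' the paper omits: your displayed inequality $\E\sup_{V}\big|\sum_{i}\varepsilon_i\inp{\ab_i\ab_i^T-\Id}{V}\big|\leq\sqrt{m}\,\wet$ is asserted, not proved, and $\wet$ is defined through $H=\frac{1}{\sqrt m}\sum_i\varepsilon_i\ab_i\ab_i^T$ with no $-\Id$. Of your two suggested repairs, the one that closes cleanly is the independent-copy de-centering rather than bare sign-symmetry: write $\Id=\E_{\ab'}[\ab'(\ab')^T]$, pull the expectation out by Jensen, and note that $\varepsilon_i\big(\ab_i\ab_i^T-\ab_i'(\ab_i')^T\big)$ has the same (symmetric) distribution as $\ab_i\ab_i^T-\ab_i'(\ab_i')^T$, so the supremum splits into two copies of the untilted process at the cost of a factor $2$; the absolute value costs at most another factor $2$ since $M=\sum_i\varepsilon_i(\ab_i\ab_i^T-\Id)$ satisfies $M\overset{d}{=}-M$ and both one-sided suprema are nonnegative over a cone (whose closure contains $0$). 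This also corrects your constant bookkeeping: the factor $4$ in the lemma is naturally symmetrization ($2$) times de-centering ($2$) with the one-sided contraction constant $1$ (using $\phi_\xi(0)=0$), whereas your attribution of the $4$ to symmetrization times contraction-with-absolute-values spends the whole budget before paying for the identity term, and as written would land you at a larger absolute constant. That discrepancy is immaterial in substance---the paper itself tracks these constants loosely (compare the $e^{-s^2/2}$ here against the $e^{-s^2/4}$ in its Lemma~\ref{lem:lb_general})---but the de-centering step should appear explicitly rather than as a remark, since it is the only point where this lemma differs from the verbatim statement in \cite{troppBowling}.
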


\begin{proof}[Proof of Lemma \ref{lem:lb_general}:]
We will apply \eqref{eq:proof_lb} for $\ksi=t$. We only need to show that there exists absolute constant $c_0>0$ such that $Q_{2t}(\Cc_t;\ab)\geq c_0$. Let us denote $\tilde{A}=\ab\ab^T-\Id$. By application of Paley-Zygmund inequality, followed by Gaussian hypercontractivity of second-order Gaussian chaos, it can be shown as in  \cite[Sec.~8.5.1]{troppBowling} that
\begin{align}
\Pr\Big(|\inp{\tilde{A}}{V}|^2\geq  \frac{1}{2}\E[|\inp{\tilde{A}}{V}|^2]\Big)&\geq \frac{1}{4}\frac{\E[|\inp{\tilde{A}}{V}|^2]^2}{\E[|\inp{\tilde{A}}{V}|^4]} \geq \frac{1}{4\cdot 3^4}. \nn
\end{align}
Moreover, an easy calculation shows that $\E[|\inp{\tilde{A}}{V}|^2] = 2\|V\|_F^2$ (cf.~\eqref{eq:E1}). Combining these two and noting that $V\in \Cc_t\implies \|V\|_F=t$, shows the desired.
\end{proof}

\subsection{Proof of Lemma \ref{lem:ub}}\label{sec:ub}

Here, we present the proof of Lemma \ref{lem:ub}. Recall from Section \ref{sec:proof_of_part1} that 
\begin{align}
\label{eq:upper_ex0}
&\langle \y - \mu\cdot \Ac(X_0), \Ac(V)\rangle 
= \sum_{i \in [m]} (\ab_i^TV\ab_i-\tr{V})\cdot\big(f(\gamma_i) - \mu\cdot(\gamma_i^2-1)\big) \nonumber \\ 
~&=  \sum_{i \in [m]} \big((P \ab_i)^T V (P \ab_i) +  (P^{\perp} \ab_i)^T V (P^{\perp} \ab_i)  + 2 \cdot (P \ab_i)^T V (P^{\perp}\ab_i) - \tr{V}\big) \cdot \big(f(\gamma_i) - \mu\cdot(\gamma_i^2-1)\big) \nonumber \\
 ~&=(\mathrm{Term~I}) + (\mathrm{Term~II}) + (\mathrm{Term~III}),
\end{align}
where $\gamma_i:=\ab_i^T\x_0$ and the three terms are as defined in \eqref{eq:Terms}. Note that $\gamma_i\simiid\Nn(0,1)$. In what follows, we separately upper bound each one of the three terms, which we repeat here for the reader's convenience:

\begin{subequations}\label{eq:Terms}
\begin{align}
\label{eq:term1}
&\mathrm{Term~I} := \sum_{i \in [m]} \big(\x_0^TV\x_0 \big) \cdot(\gamma^2_i - 1) \cdot \big(f(\gamma_i) - \mu\,(\gamma_i^2-1)\big), \\
&\mathrm{Term~II} :=\sum_{i \in [m]} \big(  (P^{\perp} \ab_i)^T V (P^{\perp} \ab_i) +  \x_0^TV\x_0 - \tr{V} \big) \cdot \big(f(\gamma_i) - \mu\cdot(\gamma_i^2-1)\big), \label{eq:term2}\\
\label{eq:term3}
&\mathrm{Term~III} := \sum_{i \in [m]} \big( \x_0^T V (P^{\perp}\ab_i) +   (P^{\perp}\ab_i)^TV\x_0\big) \cdot \gamma_i \cdot \big(f(\gamma_i) - \mu\cdot(\gamma_i^2-1)\big).
\end{align}
\end{subequations}

\noindent{1.~~\bf Bounding {\rm Term~I}~:~} 
For convenience, denote 
\begin{align}\label{eq:ksi_i}
\xi_i:=(\gamma^2_i -1)(f(\gamma_i) - \mu\cdot(\gamma_i^2-1)).
\end{align}
Note that by definition of $\mu$ in \eqref{eq:muq_wq}:
\begin{align}\label{eq:term1_E=0}
\E [\ksi_i] = 0,~i \in [m].
\end{align}
Then, 
\begin{align}
\label{eq:bound_term1}
\E \sup_{V \in \Ec} \sum_{i \in [m]} \xi_i\cdot \x_0^TV\x_0 \nonumber  & \leq \E \sup_{V \in \Ec} \|V\|_{F}\cdot \big\lvert \sum_{i \in [m]} \xi_i \big\rvert \nonumber  \\
&\overset{(i)}{\leq}\E \big\lvert \sum_{i \in [m]} \xi_i \big\rvert \nonumber  = \E \sqrt{\Big(\sum_{i \in [m]} \xi_i  \Big)^2} \nonumber \\ 
&\overset{(ii)}{\leq}  \sqrt{\E\Big(\sum_{i \in [m]} \xi_i \Big)^2} \nonumber \\
&\overset{(iii)}{=}  \sqrt{\sum_{i \in [m]} \E\xi_i^2} \nonumber \\
& = \sqrt{m}\cdot \underbrace{ \sqrt{\E \xi_1^2}}_{=\etaq}, 
\end{align}
where: $(i)$ follows from $\|\x_0\|_2=1$ and $V\in\Ec\implies \|V\|_F=1 \implies \sup_{V\in\Ec} \x_0^TV\x_0\leq 1$; $(ii)$ follows from Jensen's inequality; and, $(iii)$ by independence of the $\xi_i$'s and \eqref{eq:term1_E=0}.

\vp
\noindent{2.~~\bf Bounding {\rm Term~II}~:~} Note that that set of random variables $\{\gamma_i = \ab_i^T\x_0\}_{i \in [m]}$ are independent of the random vectors $\{P^{\perp}\ab_i\}_{i \in [m]}$ since,
\begin{align}
\label{eq:indep_ab}
\E P^{\perp}\ab_i (\ab_i^T\x_0) =  P^{\perp}\E\ab_i \ab_i^T\x_0 = P^{\perp} \x_0 = \mathbf{0}.
\end{align}
Therefore, given two independent sets of measurement vectors $\{\ab_i\}_{i \in [m]}$ and $\{\widetilde{\ab}_i\}_{i \in [m]}$, the joint distribution of $\{\gamma_i = \ab^T_i\x_0, P^{\perp}\ab_i\}_{i \in [m]}$ is identical to the joint distribution of $\{\gamma_i = \ab^T_i\x_0, P^{\perp}\widetilde{\ab}_i\}_{i \in [m]}$. 
This allows us to introduce the independent copy of random measurement vectors $\{ \widetilde{\ab}_i\}_{i \in [m]}$ in \eqref{eq:term2} as follows.
\begin{align}
\label{eq:bound_pre1_term2}
& \E\sup_{V \in \Ec} \sum_{i \in [m]} \big(  (P^{\perp} \widetilde{\ab}_i)^T V (P^{\perp} \widetilde{\ab}_i) + \x_0^TV\x_0 - \tr{V} \big) \cdot \big(f(\gamma_i) - \mu\cdot(\gamma_i^2-1)\big) \nonumber \\
&\qquad \qquad \qquad \qquad=\E\sup_{V \in \Ec} \sum_{i \in [m]} \big(  (P^{\perp} \widetilde{\ab}_i)^T V (P^{\perp} \widetilde{\ab}_i) + \x_0^TV\x_0 -\tr{V}\big) \cdot \eta_i,
\end{align}
where the expectation is taken over $\{\gamma_i, P^{\perp}\widetilde{\ab}_i\}_{i \in [m]}$, $\{ \widetilde{\ab}_i\}_{i \in [m]}$ are independent of $\{ \gamma_i\}_{i\in [m]}$, and we have defined 
$$\eta_i:=\big(f(\gamma_i) - \mu\cdot(\gamma_i^2-1)\big).$$ 

For vectors $\g_i \sim N(0,I)$, we have $\E(\g_i^T\x_0)^2 = 1$ and $\E({\g_i}^T\x_0) = 0$; hence we can rewrite \eqref{eq:bound_pre1_term2} in the following manner. 
\begin{align}
& \E\sup_{V \in \Ec} \sum_{i \in [m]} \big(  (P^{\perp} \widetilde{\ab}_i)^T V (P^{\perp} \widetilde{\ab}_i) + \big(\E(\g_i^T\x_0)^2\big)\cdot\x_0^TV\x_0 +  \E(\g_i^T\x_0)\cdot \widetilde{\ab}_i^TP^{\perp}V\x_0 - \tr{V}\big) \cdot \eta_i \nonumber \\
&~~~~\overset{(i)}{=} \E\sup_{V \in \Ec} \sum_{i \in [m]} \E\Big(\big(  (P^{\perp} \widetilde{\ab}_i)^T V (P^{\perp} \widetilde{\ab}_i) + (\g_i^T\x_0)^2 \cdot\x_0^TV\x_0 +  \g_i^T\x_0 \cdot \widetilde{\ab}_i^TP^{\perp}V\x_0 - \nn \\
&~~~~~~~~~~~~~~~~~~~~~~~~~~~~~~~~~~~ \tr{V}\big) \cdot \eta_i \mid \{\gamma_i, P^{\perp}\widetilde{\ab}_i\}_{i \in [m]}\Big) \nonumber \\
&~~~~\overset{(ii)}{=} \E\sup_{V \in \Ec} \sum_{i \in [m]} \E\Big(\big(  (P^{\perp} \widetilde{\ab}_i)^T V (P^{\perp} \widetilde{\ab}_i) + (\widetilde{\ab}_i^T\x_0)^2 \cdot\x_0^TV\x_0 +  \widetilde{\ab}_i^T\x_0 \cdot \widetilde{\ab}_i^TP^{\perp}V\x_0 - \nn\\
&~~~~~~~~~~~~~~~~~~~~~~~~~~~~~~~~~~~ \tr{V}\big) \cdot \eta_i \mid \{\gamma_i, P^{\perp}\widetilde{\ab}_i\}_{i \in [m]}\Big) \nonumber \\
&~~~~\overset{(iii)}{\leq} \E \sup_{V \in \Ec} \sum_{i \in [m]}\big(  (P^{\perp} \widetilde{\ab}_i)^T V (P^{\perp} \widetilde{\ab}_i) + (\widetilde{\ab}_i^T\x_0)^2 \cdot\x_0^TV\x_0 +  \widetilde{\ab}_i^T\x_0 \cdot \widetilde{\ab}_i^TP^{\perp}V\x_0 - \tr{V}\big) \cdot \eta_i \nonumber \\
&~~~~\overset{(iv)}{\leq} \E\sup_{V \in \Ec} \sum_{i \in [m]}\big(  \widetilde{\ab}_i^TV\widetilde{\ab}_i - \tr{V}\big) \cdot \eta_i = 
\E\sup_{V \in \Ec} \sum_{i \in [m]} \eta_i  \big\langle\widetilde{\ab}_i\widetilde{\ab}_i^T-\Id,V\big\rangle. \nn \\
&~~~~\overset{(v)}{\leq} 2\cdot\E\sup_{V \in \Ec} \sum_{i \in [m]} \eta_i \big\langle \varepsilon_i\widetilde{\ab}_i\widetilde{\ab}_i^T,V\big\rangle =2\sqrt{m} \cdot \Exp_{\etab}\big[\we(\Cc_0;\etab)\big] \label{eq:bound_pre2_term2_II}
\end{align}
where $(i)$ follows from the conditioning over $\{\gamma_i, P^{\perp}\widetilde{\ab}_i\}$ which are independent of the random vector $\g$. Since $\{\widetilde{\ab}_i^T\x_0\}_i$ are independent of $\{P^{\perp}\widetilde{\ab}_i\}$ (cf.~\eqref{eq:indep_ab}), we replace $\{\g_i^T\x_0\}$ with $\{\widetilde{\ab}_i^T\x_0\}$ in order to obtain $(ii)$. The inequalities $(iii)$ and $(iv)$ follow from Jensen's inequality and the fact that $$\widetilde{\ab}_i = P \widetilde{\ab}_i + P^{\perp}\widetilde{\ab}_i = (\widetilde{\ab}_i^T\x_0)\x_0 + P^{\perp}\widetilde{\ab}_i.$$ Finally, the inequality in $(v)$ follows from standard symmetrization argument: $\varepsilon_1,\ldots,\varepsilon_m$ are independent Rademacher random variables and $\we(\Cc_0;\etab)$ has been defined in Section \ref{sec:part1}. 

\vp
\noindent{3.~~\bf Bounding {\rm Term~III}~:~} Let us define $$\zeta_i:=\gamma_i\cdot(f(\gamma_i) - \mu\cdot(\gamma_i^2-1)).$$
Repeating the argument that led to \eqref{eq:bound_pre1_term2},
we can take the expectation in $\E[{\rm Term~III}]$ with respect to $\{\gamma_i = \ab_i^T\x_0, \widetilde{\ab}_i\}$, where $\{\ab_i\}$ and $\{\widetilde{\ab}_i\}$ two independent copies of the Gaussian measurement vectors. This allows us to write {\rm Term~III} as 
\begin{align}\label{eq:bound_pre2_term3}
&\E\sup_{V \in \Ec} \sum_{i \in [m]} \zeta_i\cdot\big(\tr{(P^{\perp}\widetilde{\ab}_i)\x_0^T V} + \tr{\x_0(P^{\perp}\widetilde{\ab}_i)^TV}\big).
\end{align}
Note that the expectation is take with respect to $\{\gamma_i = \ab_i^T\x_0, \widetilde{\ab}_i\}$, where $\{\ab_i\}$ and $\{\widetilde{\ab}_i\}$ two independent copies of Gaussian measurement vectors. 

For $i \in [m]$, let $\Gamma_{(i,1)}$ be a random matrix with i.i.d. standard normal vectors as its entries. Since $\x_0$ is assumed to be a unit norm vector, for $i \in [m]$, the distribution of $\widetilde{\ab}_i$ is identical to the distribution of random vector $\Gamma_{(i,1)}\x_0$. Therefore, we can express \eqref{eq:bound_pre2_term3} as follows. 
\begin{align}
\label{eq:bound_pre3_term3}
&\E\sup_{V \in \Ec} \sum_{i \in [m]} \zeta_i\cdot \big(\tr{P^{\perp}\Gamma_{(i,1)}\x_0\x_0^T V} + \tr{\x_0\x_0^T\Gamma_{(i,1)}^TP^{\perp}V}\big) \nonumber \\
&~~~~= \E\sup_{V \in \Ec} \sum_{i \in [m]} \zeta_i\cdot\big(\tr{(P^{\perp}\Gamma_{(i,1)}P + (P^{\perp}\Gamma_{(i,1)}P)^TV}\big), 
\end{align}

Let's consider three sets of independent random matrices $\{\Gamma_{(i,2)}\}_{i\in[m]}$, $\{\Gamma_{(i,3)}\}_{i \in [m]}$ and $\{\Gamma_{(i,4)}\}_{i \in [m]}$ that: (a) have i.i.d. standard Gaussian entries; (b) are independent of each other; and (c) are independent of all the random variable that appeared so far. Since, for $i \in [m]$, we have $\E\Gamma_{(i,2)} = \E\Gamma_{(i,3)} = \E\Gamma_{(i,4)} = 0$, we express {\rm Term~III} (cf.~\eqref{eq:bound_pre3_term3}) as follows.
\begin{align}
\label{eq:bound_pre3_term4}
&\E\sup_{V \in \Ec} \sum_{i \in [m]} \zeta_i\cdot\big( \big\langle P^{\perp}\Gamma_{(i,1)}P  +  P\E\Gamma_{(i,2)}P +  P^{\perp}\E\Gamma_{(i,3)}P^{\perp} + P\E\Gamma_{(i,4)}P^{\perp},V\big\rangle \nn \\
&\qquad\qquad\qquad\qquad + \big\langle (P^{\perp}\Gamma_{(i,1)}P  +  P\E\Gamma_{(i,2)}P +  P^{\perp}\E\Gamma_{(i,3)}P^{\perp} + P\E\Gamma_{(i,4)}P^{\perp})^T,V\big\rangle\big) \nonumber \\
&~~\overset{(i)}{\leq}\E\sup_{V \in \Ec} \sum_{i \in [m]} \zeta_i\cdot\big( \big\langle P^{\perp}\Gamma_{(i,1)}P  +  P\Gamma_{(i,2)}P +  P^{\perp}\Gamma_{(i,3)}P^{\perp} + P\Gamma_{(i,4)}P^{\perp},V\big\rangle \nn \\
&\qquad\qquad\qquad\qquad + \big\langle (P^{\perp}\Gamma_{(i,1)}P  +  P\Gamma_{(i,2)}P +  P^{\perp}\Gamma_{(i,3)}P^{\perp} + P\Gamma_{(i,4)}P^{\perp})^T,V\big\rangle\big) \nonumber \\
&~~\overset{(ii)}{\leq}\E_{\tilde{G}_i,\zeta_i}\sup_{V \in \Ec} \sum_{i \in [m]} \zeta_i\cdot \big\langle \tilde{G}_i+\tilde{G}_i^T , V \big\rangle = \sqrt{2}\cdot \E_{G_i,\zeta_i}\sup_{V \in \Ec}  \big\langle \sum_{i \in [m]} \zeta_i\cdot G_i , V \big\rangle,
\end{align}
where: $(i)$ follows from Jensen's inequality; $(ii)$ uses Lemma \ref{lem:G_decompose} that shows $P^{\perp}\Gamma_{(i,1)}P  +  P\Gamma_{(i,2)}P +  P^{\perp}\Gamma_{(i,3)}P^{\perp} + P\Gamma_{(i,4)}P^{\perp}\sim\tilde{G}_i$ for a matrix $\tilde{G}_i$ with entries iid standard Gaussian. Finally, in the right hand side of the last equality $G_i=(\tilde{G}_i+\tilde{G}_i^T)/\sqrt{2}$ is a matrix from the Gaussian orthogonal ensemble (GOE).

\begin{lemma}\label{lem:G_decompose}
Consider $\A,\B,\C,\D\in\R^{n\times n}$ that have entries iid standard Gaussian and  are independent of each other. Let $\Pb,\Pbp$ be orthogonal projections with $\Pb+\Pbp=\Id$. Then, the matrix $$\X=\Pbp\A\Pb+\Pb\B\Pbp+\Pbp\C\Pbp+\Pb\D\Pbp$$ has entries iid standard Gaussian.
\end{lemma}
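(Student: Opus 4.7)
The matrix $\X$ is a linear combination of the four independent Gaussian matrices $\A,\B,\C,\D$, hence $\X$ itself is a centered Gaussian random matrix with jointly Gaussian entries. It therefore suffices to verify that $\E[\X_{ij}\X_{kl}]=\delta_{ik}\delta_{jl}$ for all $i,j,k,l\in[n]$; once this second-moment identity is established, $\X$ has the law of an $n\times n$ matrix with i.i.d.\ standard Gaussian entries.

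Step 1: by mutual independence of $\A,\B,\C,\D$, every cross-covariance between distinct summands vanishes, so $\E[\X_{ij}\X_{kl}]$ splits into a sum of four terms, one per summand. For any term of the generic form $P'\M P''$ with $P',P''\in\{\Pb,\Pbp\}$ and $\M$ a matrix with i.i.d.\ standard Gaussian entries, a direct index expansion combined with $\E[\M_{ab}\M_{cd}]=\delta_{ac}\delta_{bd}$ and the symmetry/idempotence of orthogonal projections ($(P')^{T}=P'$, $(P')^2=P'$, and likewise for $P''$) yields the compact identity
\[
\E\big[(P'\M P'')_{ij}\,(P'\M P'')_{kl}\big]=P'_{ik}\,P''_{jl}.
\]

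Step 2: the four summands in the definition of $\X$ exhaust the four possible pairs $(P',P'')\in\{\Pb,\Pbp\}^{2}$ exactly once each (this is what is needed at step $(ii)$ of Lemma~\ref{lem:ub}, where the four contributions $\Pbp\Gamma_{(i,1)}\Pb$, $\Pb\Gamma_{(i,2)}\Pb$, $\Pbp\Gamma_{(i,3)}\Pbp$, $\Pb\Gamma_{(i,4)}\Pbp$ appear). Summing the four covariance contributions and invoking $\Pb+\Pbp=\Id$ gives
\[
\E[\X_{ij}\X_{kl}] = \Pb_{ik}\Pb_{jl}+\Pb_{ik}\Pbp_{jl}+\Pbp_{ik}\Pb_{jl}+\Pbp_{ik}\Pbp_{jl} = (\Pb+\Pbp)_{ik}(\Pb+\Pbp)_{jl} = \delta_{ik}\delta_{jl},
\]
which is the covariance structure of an i.i.d.\ standard Gaussian matrix, and completes the argument.

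\textbf{Main obstacle.} None of substance: the proof is a short second-moment computation. The only delicate point is bookkeeping to ensure that the four $(P',P'')$-types are each represented exactly once among the summands; if the statement of the lemma is read with a slight typo this simplification would fail, so one should fix the pairing to match the four-block structure used in the proof of Lemma~\ref{lem:ub}. An alternative, arguably cleaner, route is to pass to an orthonormal basis in which $\Pb=\mathrm{diag}(\Id_r,0)$ and $\Pbp=\mathrm{diag}(0,\Id_{n-r})$. In that basis each summand contributes exactly one of the four blocks of $\X$ (top-left, top-right, bottom-left, bottom-right), and the orthogonal-invariance of the Gaussian distribution together with independence of $\A,\B,\C,\D$ makes it manifest that these four blocks are independent and each populated by i.i.d.\ standard Gaussians, giving the claim immediately.
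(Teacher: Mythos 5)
Your proof is correct and rests on the same idea as the paper's: verify that the centered jointly Gaussian matrix $\X$ has covariance $\E[\X_{ij}\X_{kl}]=\delta_{ik}\delta_{jl}$; the paper organizes this as a column-by-column computation after grouping $\X=\X_1\Pb+\X_2\Pbp$ with $\X_1=\Pbp\A+\Pb\B$, $\X_2=\Pbp\C+\Pb\D$, while you compute the full four-index covariance in one pass via $\E[(P'\M P'')_{ij}(P'\M P'')_{kl}]=P'_{ik}P''_{jl}$ — if anything your version is the more complete of the two, since the paper leaves the final recombination step to the reader. You were also right to flag the statement itself: as printed, the second summand $\Pb\B\Pbp$ duplicates the $(\Pb,\Pbp)$ sandwich of the fourth term and omits the $(\Pb,\Pb)$ block, so the claim is literally false (the $\Pb\X\Pb$ block would vanish); the intended term is $\Pb\B\Pb$, as confirmed both by the application at step $(ii)$ of the proof of Lemma~\ref{lem:ub} and by the paper's own decomposition $\X_1=\Pbp\A+\Pb\B$. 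Your corrected pairing is exactly what makes the telescoping $(\Pb+\Pbp)_{ik}(\Pb+\Pbp)_{jl}=\delta_{ik}\delta_{jl}$ go through.
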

\begin{proof}
We can write $\X=\X_1\Pb+\X_2\Pbp$, with $\X_1=\Pbp\A+\Pb\B$ and $\X_2=\Pbp\C+\Pb\D$. We show that $\X_1,\X_2$ are independent with entries iid standard Gaussian each. 
Let $\y^i$ denote the $i^{\text{th}}$ column of a matrix $\Y$. Clearly, $\x_1^i$ is a Gaussian vector with mean zero entries. Also,
\begin{align*}
\E[\x_1^{i}(\x_1^i)^T] &= 
\Pbp\E[\ab_1^i(\ab_1^i)]^T\Pbp+
\Pb\E[\bb_1^i(\bb_1^i)^T]\Pb +
\Pb\E[\bb_1^i(\ab_1^i)^T]\Pbp] +\Pbp\E[\ab_1^i(\bb_1^i)^T]\Pb^T \nn \\
&= \Pbp + \Pb = \Id.
\end{align*} 
Thus, $\x_1^i\sim\Nn(0,\Id)$. Moreover, $\x_1^i$ is independent of $\x_1^j$ for $i\neq j$. This shows that $\X_1$ has entries iid standard Gaussian. Of course, the same argument shows that this is also true for $\X_2$.  Clearly, $\X_1$ is independent of $\X_2$. With these, and repeating the argument above for $\X_1$, it is easy to show that $\X=\X_1\Pb+\X_2\Pbp$ has entries iid Gaussian.
\end{proof}

Now condition on $\{\zeta_i\}$. By rotational invariance of the Gaussian measure, $\sum_{i \in [m]} \zeta_i\cdot G_i$ is distributed as $\left(\sum_{i \in [m]} \zeta_i^2\right)^{1/2}G$, where $G$ is a GOE matrix.  Thus, \begin{align}\label{eq:Term3}
\text{\eqref{eq:bound_pre3_term4}} \leq \sqrt{2}\cdot\Exp[\big(\sum_{i \in [m]} \zeta_i^2\big)^{1/2}] \cdot \E\sup_{V \in \Ec}  \big\langle G , V \big\rangle \leq \sqrt{2}\cdot \underbrace{{\sqrt{\Exp[\zeta_1^2]}}}_{=\rhoq}\cdot\sqrt{m}\cdot \underbrace{\E\sup_{V \in \Ec}  \big\langle G , V \big\rangle}_{\wg(\Ec)},
\end{align}
where the last inequality follows from Jensen. 


Combining \eqref{eq:upper_ex0} with \eqref{eq:bound_term1}, \eqref{eq:bound_pre3_term4}, and \eqref{eq:Term3} we conclude with Lemma \ref{lem:ub}.

\subsection{The expected excess loss}\label{sec:useless}
It is instructive to see why \eqref{eq:algo} encourages small $\|\Vh\|_F=\|\Xh-\X_0\|_F$ by establishing the following result about the expected excess loss function. 
\begin{lemma}[Expected excess loss]\label{lem:exp}
The excess loss of \eqref{eq:algo} satisfies:
\begin{align}\label{eq:E}
\Exp[{ \Lc(\mu \X_0) - \Lc(X)}] = \frac{m}{2} \|X-\mu X_0\|_F^2.
\end{align}
\end{lemma}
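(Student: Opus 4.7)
My plan is to set $V := X - \mu X_0$ and expand the excess loss as
\[
\Lc(X) - \Lc(\mu X_0) = \|\Ac(V)\|_2^2 - 2\,\langle \y - \Ac(\mu X_0),\,\Ac(V)\rangle,
\]
after which I would take expectations of the cross term and the quadratic term separately and show that the former vanishes while the latter gives the desired multiple of $\|V\|_F^2$.

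For the cross term, the key observation is precisely the one that motivates the definition of $\mu$ in \eqref{eq:muq_wq}. Setting $\gamma_i := \ab_i^T\x_0$ and $\eta_i := f(\gamma_i) - \mu(\gamma_i^2-1)$, one has, using $\Exp[(\gamma^2-1)f(\gamma)] = 2\mu$ and $\Exp[(\gamma^2-1)^2]=2$,
\[
\Exp[\eta_i(\gamma_i^2-1)] = 2\mu - 2\mu = 0.
\]
I would then use the orthogonal decomposition $\ab_i = \gamma_i\x_0 + \ab_{i,\perp}$ with $\ab_{i,\perp} := P^\perp \ab_i$ independent of $\gamma_i$, which was already exploited in the proof of Lemma~\ref{lem:ub}. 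A direct conditional computation gives
\[
\Exp[\ab_i^T V \ab_i - \tr(V)\mid \gamma_i] = \gamma_i^2\,\x_0^T V\x_0 + \tr(P^\perp V) - \tr(V) = (\gamma_i^2-1)\,\x_0^T V\x_0.
\]
Since $\eta_i$ is a function of $\gamma_i$ alone, the tower property yields $\Exp[\eta_i(\ab_i^T V\ab_i - \tr V)] = \x_0^T V\x_0 \cdot \Exp[\eta_i(\gamma_i^2-1)] = 0$. Summing over $i\in[m]$ eliminates the cross term in expectation.

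For the quadratic term, I would appeal to the standard Gaussian chaos second-moment identity: for any symmetric $V$ and $\ab\sim\Nn(0,\Id)$, $\mathrm{Var}(\ab^T V\ab) = 2\|V\|_F^2$, so that $\Exp[(\ab_i^T V\ab_i - \tr V)^2] = 2\|V\|_F^2$ and consequently $\Exp\|\Ac(V)\|_2^2 = 2m\,\|V\|_F^2$. Combining with the vanishing cross term gives the claim up to the stated constant (so that $\mu X_0$ is the minimizer of $\Exp \Lc$ on $\Sym_n$, and the expected excess loss scales like $m\,\|V\|_F^2$).

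The calculation itself is entirely routine once the two ingredients are in place; the only conceptual point — and the step I would highlight — is that the choice $\mu = \tfrac{1}{2}\Exp[(\gamma^2-1)f(\gamma)]$ is exactly what makes the cross term's expectation vanish. This mirrors the role of $\mu_\ell = \Exp[\gamma f(\gamma)]$ in Brillinger's identity for the linear setting, and it is the same orthogonality that underpins the key step \eqref{eq:intu} used to bound Term~I in Sec.~\ref{sec:ub}.
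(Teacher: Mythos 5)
Your proof is correct and follows essentially the same route as the paper: expand the excess loss, kill the cross term in expectation using $\E[\eta_i(\gamma_i^2-1)]=0$ (which is exactly what the choice of $\mu$ guarantees), and evaluate $\E\|\Ac(V)\|_2^2$ via the Gaussian chaos variance identity; your single tower-property computation for the cross term is a harmless streamlining of the paper's three-term decomposition with independent copies of $P^\perp\ab_i$. One remark on the constant you flag: your value $\E[\Lc(X)-\Lc(\mu X_0)]=2m\|X-\mu X_0\|_F^2$ is the one actually implied by the paper's own intermediate computations (its \eqref{eq:E1} gives $\E\|\Ac(V)\|_2^2=2m\|V\|_F^2$), so the sign and the factor $\tfrac{m}{2}$ in the lemma's display appear to be typos in the statement rather than a gap in your argument.
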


The lemma implies that $\mu\X_0$ minimizes the expected loss of \eqref{eq:algo} among all feasible solutions. 
In the rest of this section, we prove the lemma by computing the expectation of the two terms in \eqref{eq:loss_diff}. 
On the one hand, for any $V\in\Sym_n$:
\begin{align}
\frac{1}{2}\mathbb{E}{\|\Ac(V)\|_2^2} &= \frac{1}{2} \sum_{i = 1}^{m} \mathbb{E} \left(\ab_i^TV\ab_i -\tr{V}\right)^2 \nonumber \\
&= \frac{m}{2}\cdot \Bigg(3\cdot\sum_{i \in [n]}V_{i,i}^2 + 2 \cdot \sum_{i, j \in [n]~:~i\neq j}V_{i,j}^2 +  \sum_{i, j \in [n]~:~i\neq j}V_{i,i}V_{j,j} \Bigg) - \frac{m}{2}(\tr{V})^2 \nonumber \\
&=m \cdot \|V\|_F^2. \label{eq:E1}
\end{align}
On the other hand, we will show later that 
\begin{align}\label{eq:E2}
\E\langle \y - \mu\cdot \Ac(X_0), \Ac(V)\rangle = 0.
\end{align}
Of course, \eqref{eq:E1} and \eqref{eq:E2} imply the desired.

In the remaining, we show \eqref{eq:upper_ex0}. Recall the decomposition in \eqref{eq:upper_ex0}. We compute the expectation of each term separately. 
\begin{itemize}
\item[1.]{\bf $\E[{\rm Term~I}]$~:~} 
Clearly, by \eqref{eq:intu}: $\E[{\rm Term~I}] = \sum_{i\in [m]}(\x_0^TV\x_0)\Exp[\ksi_i] =  0$.

\item[2.]{\bf $\E[{\rm Term~II}]$~:~} 
Working as in \eqref{eq:indep_ab} (introducing independent  sets of measurement vectors $\{\ab_i\}_{i \in [m]}$ and $\{\widetilde{\ab}_i\}_{i \in [m]}$) we compute that
\begin{align}
\label{eq:bound_pre1_term2_a}
\E[{\rm Term~II}]&= \E \sum_{i \in [m]} \big(  (P^{\perp} \widetilde{\ab}_i)^T V (P^{\perp} \widetilde{\ab}_i) + \x_0^TV\x_0 -\tr{V} \big) \cdot \big(f(\gamma_i) - \mu\cdot(\gamma_i^2-1)\big) \\
&\overset{(i)}{=}    \sum_{i \in [m]}\big( \Exp[(P^{\perp} \widetilde{\ab}_i)^T V (P^{\perp} \widetilde{\ab}_i)] + \x_0^TV\x_0 - \tr{V} \big) \cdot \Exp[f(\gamma_i) - \mu\cdot(\gamma_i^2-1)]\nn\\
&\overset{}{=} 0.\nn
\end{align}
where, $\emph{(i)}$ follows by independence of $\{\ab_i\}_{i \in [m]}$ and $\{ \widetilde{\ab}_i\}_{i \in [m]}$.

\item[3.]{\bf  $\E[{\rm Term~III}]$~:~} Similar for third term, we have:
\begin{align}
\label{eq:bound_pre1_term3}
\E[{\rm Term~III}]&=\E\sum_{i \in [m]} \big(\tr{(P^{\perp}\widetilde{\ab}_i)\x_0^T V} + \tr{\x_0(P^{\perp}\widetilde{\ab}_i)^TV}\big) \cdot \big(\gamma_i\cdot(f(\gamma_i) - \mu\cdot(\gamma_i^2-1))\big) ,\\
&=\sum_{i \in [m]} \big(\tr{(P^{\perp}\E[\widetilde{\ab}_i)]\x_0^T V} + \tr{\x_0(P^{\perp}\E[\widetilde{\ab}_i])^TV}\big) \cdot\E[{\gamma_i\cdot}(f(\gamma_i) - \mu\cdot(\gamma_i^2-1))]  = 0,\nn
\end{align}
where, the last equality follows because $\widetilde{\ab}_i$'s are centered.

\end{itemize}

\subsection{Proof of Theorem~\ref{thm:imperfect}}

The proof is similar to that of Theorem \ref{thm:tech} and follows the steps in the proof of heorem \cite[Thm.~1.9]{Ver}; thus, we only highlight the appropriate modifications needed. Fix any $t>0$. If $\|\widehat{V}\|_F\leq t$, then the error bound holds trivially. Thus, assume that $\alpha:=\|\widehat{V}\|_F> t$. 

Our starting point is again \eqref{eq:ineq_premain}, where we need a lower (upper) bound for the left (right) -hand side of the inequality.  First, we show in Lemma \ref{lem:lb-new} below that under \eqref{eq:m>_t} the following event holds with probability at least 0.995:
\begin{align}\label{eq:ev1_t}
\inf_{\Vb \in \Kc_{\x_0} \cap t \mathcal{B}^c_{F,n} } \frac{1}{\sqrt{m}}\frac{\|\Ac(\Vb)\|_2}{\|\Vb\|_F} \geq c_1.
\end{align}
Combining this with \eqref{eq:ineq_premain} shows that
\begin{align}
c_1^2\alpha &\leq \frac{2}{m}\,\alpha^{-1}\,\langle \y - \mu\cdot \Ac(X_0), \Ac(\widehat{V})\rangle \nn\\
&\leq \frac{2}{m}\,\sup_{\Vb \in \alpha^{-1}\Kc_{\x_0} \cap \Sp} \langle \y - \mu\cdot \Ac(X_0), \Ac(\widehat{V})\rangle, \label{eq:oeo_12}
\end{align}
where, we used the homogeneity of the RHS, and the facts that $\widehat{V}\in\Kc_{\x_0}$ and $\alpha^{-1}\widehat{V}\in\Sp$. Now, let $V_1$ be optimal in the maximization in \eqref{eq:oeo_12}. Since $\Kc_{\x_0}$ is star-shaped and $\alpha>t$, we have that $\alpha V_1\in\Kc_{\x_0}\implies t V_1\in\Kc_{\x_0}$. Thus, 
\begin{align}
\eqref{eq:oeo_12}\leq \frac{2}{m}\,\sup_{\Vb \in \Kc_{\x_0} \cap t \Sp} \langle \y - \mu\cdot \Ac(X_0), \Ac(\widehat{V})\rangle. \label{eq:oeo_13}
\end{align}
Then, it only takes a slight modification of Lemma \ref{lem:ub} (details are omitted for brevity) to show that with probability at least 0.995, the expression above is upper bounded by 
\begin{align}\nn
\eqref{eq:oeo_13} \leq
\frac{2}{\sqrt{m}}\, \big( t \, \etaq + 2\,\Exp\big[ \wet(\Kc_{\x_0};\etab) \big] + \sqrt{2}\,  \rhoq\, \wgt(\Kc_{\x_0})  \big).
\end{align}
Tracing back the inequalities to \eqref{eq:oeo_12} shows the desired bound on $\alpha$ and completes the proof of the Theorem~\ref{thm:imperfect}.

\begin{lemma}
\label{lem:lb-new}
Let $\Cc \subset \Sym_n$ be a star shaped set, i.e., for each $\lambda \in (0, 1)$, we have $\lambda \Cc \subset\Cc$. Let $t > 0$ and
$$
m \geq c \cdot{\left(\wet(\Cc)\right)^2}/t^2.
$$
Then, with probability at least 0.995, we have 
\begin{align}\nn
\inf_{\Vb \in \mathcal{K} \cap t \mathcal{B}^c_{F,n} } \frac{1}{\sqrt{m}}\frac{\|\Ac(\Vb)\|_2}{\|\Vb\|_F} \geq c',
\end{align}
where $\mathcal{B}_{F,n}^c := \{\Ub \in \Sym_n~|~\|\Ub\|_F \geq 1 \}$. Here, $c, c' > 0$ are absolute constants.
\end{lemma}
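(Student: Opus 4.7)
\textbf{Proof plan for Lemma \ref{lem:lb-new}.}

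The plan is to reduce this statement to Lemma \ref{lem:lb_general} via a homogeneity/scaling argument that exploits the star-shaped assumption. The key observation is that the ratio $\|\Ac(V)\|_2/\|V\|_F$ is invariant under positive scaling of $V$ (since $\Ac$ is linear), so any $V \in \Cc$ with $\|V\|_F = \alpha \geq t$ can be rescaled to $W = (t/\alpha)\,V$ which satisfies $\|W\|_F = t$ and, by the star-shaped property ($\lambda \Cc \subset \Cc$ for $\lambda \in (0,1)$ applied with $\lambda = t/\alpha$), still lies in $\Cc$. Hence $W \in \Cc \cap t\Sp$ and
\[
\inf_{V \in \Cc \cap t\mathcal{B}^c_{F,n}} \frac{\|\Ac(V)\|_2}{\|V\|_F} \;=\; \inf_{V \in \Cc \cap t\mathcal{B}^c_{F,n}} \frac{\|\Ac((t/\|V\|_F)V)\|_2}{t} \;\geq\; \inf_{W \in \Cc \cap t\Sp} \frac{\|\Ac(W)\|_2}{t}.
\]

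Once this reduction is in place, I would apply Lemma \ref{lem:lb_general} to the set $\Cc$ at scale $t$: for any $s>0$, with probability at least $1-e^{-s^2/4}$,
\[
\inf_{W \in \Cc \cap t\Sp} \frac{\|\Ac(W)\|_2}{t} \;\geq\; c\sqrt{m} \;-\; 4\,\frac{\wet(\Cc)}{t} \;-\; s.
\]
Choosing $s$ to be a sufficiently small absolute multiple of $\sqrt{m}$ (so that the probability exceeds $0.995$) and using the hypothesis $m \geq c\,(\wet(\Cc)/t)^2$ with a large enough constant $c$, the negative terms $4\wet(\Cc)/t$ and $s$ are each bounded above by, say, $(c/3)\sqrt{m}$. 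This leaves a bound of the form $c'\sqrt{m}$ for some absolute constant $c'>0$. Dividing by $\sqrt{m}$ then yields
\[
\inf_{V \in \Cc \cap t\mathcal{B}^c_{F,n}} \frac{1}{\sqrt{m}}\,\frac{\|\Ac(V)\|_2}{\|V\|_F} \;\geq\; c',
\]
which is the claim.

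The only subtlety, and the step I would double-check most carefully, is the scaling argument itself: one must verify that the star-shaped condition (which is stated with $\lambda \in (0,1)$) genuinely applies when $\alpha > t$ (so $\lambda = t/\alpha \in (0,1)$, which is fine) and that the supremum defining $\wet$ interacts correctly with the rescaling — but in fact $\wet(\Cc)$ appears in Lemma \ref{lem:lb_general} as a quantity depending only on $\Cc \cap t\Sp$ through the supremum, and this is exactly what we need. There is no genuine obstacle beyond plugging in constants; the lemma is essentially a routine consequence of Lemma \ref{lem:lb_general} once the star-shapedness is invoked to handle the ``$\|V\|_F \geq t$'' regime uniformly.
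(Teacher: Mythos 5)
Your proposal is correct and follows essentially the same route as the paper: use star-shapedness to rescale any $V$ with $\|V\|_F\geq t$ onto $\Cc\cap t\Sp$ (the ratio $\|\Ac(V)\|_2/\|V\|_F$ being scale-invariant), then invoke Lemma \ref{lem:lb_general} at scale $t$ and absorb the $4\wet(\Cc)/t$ and $s$ terms using the sample-size assumption. The paper states the reduction as an equality of infima rather than an inequality, but that is an immaterial difference.
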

\begin{proof}
Since $\Cc$ is star-shaped, it holds
$$
\inf_{\Vb \in \mathcal{K} \cap t \mathcal{B}^c_{F,n} }\frac{\|\Ac(\Vb)\|_2}{\|\Vb\|_F} = \inf_{\Vb \in \mathcal{K} \cap t \Sp }  \frac{\|\Ac(\Vb)\|_2}t.
$$
Thus, we can apply \ref{lem:lb_general} and finish the proof by using the assumption $m \geq c \cdot{\left(\wet(\Cc)\right)^2}/t^2$.
\end{proof}


\section{Controlling the weighted empirical  width via generic chaining}\label{app:Tal}

\subsection{Background}
For a set $\Tc$, we say that a sequence $\{\Ac_{n}\}$ of partitions of $\Tc$ is {\em increasing} if every set of $\Ac_{n+1}$ is contained in a set of $\Ac_n$. 
\begin{definition}[{Admissible sequence}]
Given a set $\Tc$, an admissible sequence is an increasing sequence $\{\Ac_n\}$ of partitions of $\Tc$ such that ${\rm card}(\Ac_n) \leq N_n {:= 2^{2^n}}$.
\end{definition}

\noindent Given a partition $\Ac_n$ of $\Tc$ and $\tb \in \Tc$, we use $A_n(\tb)$ to denote the set in $\Ac_n$ that contains $\tb$. With this notation in place, we now define a useful geometric quantity for the metric space $(\Tc, d)$. 
\begin{definition}
Given $\alpha > 0$ and a metric space $(\Tc, d)$, we define
\begin{align}
\label{eq:gamma_def}
\gamma_{\alpha}(\Tc, d) = \inf \sup_{\tb} \sum_{n \geq 0} 2^{n/\alpha}\Delta(A_n(\tb)),
\end{align}
where $\Delta(A_n(\tb))$ denotes the diameter of the set $A_n(\tb)$. The infimum in \eqref{eq:gamma_def} is taken over all admissible sequences.
\end{definition}

A popular approach is via Dudley's bound that is expressed in terms of the metric entropy of the set.  
For a metric space $(\Tc,d)$ let $\Nc(\Tc,d,\eps)$ denote the covering number of $\Tc$ with balls of radius $\eps$ in metric $d$. For $\alpha=1,2$, there exist universal constant $C(\alpha)>0$ such that  \cite[Sec.~1.2]{Tal} 
\begin{align}\label{eq:Dudley}
\gamma_\alpha(\Tc,d) \leq C(\alpha) \cdot \int_{0}^\infty \big(\log(\Nc(\Tc,d,\eps))\big)^{1/\alpha} \mathrm{d}\eps.
\end{align}

\begin{proposition}[{\cite[Theorem~1.2.7]{Tal}}]
\label{prop:Tal1}
For a set $\Tc$ with two distances $d_1$ and $d_2$, consider a process $\{X_{\tb}\}_{\tb \in \Tc}$ such that $\E X_{\tb} = 0$ and $\forall~\s, \tb \in \Tc,~\forall~u > 0$,
\begin{align}
\Pro\left(\left\vert X_{\s} - X_{\tb}\right\vert \geq u\right) \leq 2e^{-\min\Big(\frac{u^2}{d_2(\s,\tb)^2}, \frac{u}{d_1(\s,\tb)}\Big)}.
\end{align}
Then,
\begin{align}
\E\sup_{\s, \tb \in \Tc} \left\vert X_{\s} - X_{\tb}\right\vert \leq L\cdot\big(\gamma_1(\Tc, d_1) + \gamma_2(\Tc, d_2)\big).
\end{align}
\end{proposition}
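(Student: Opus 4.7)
The plan is to carry out a generic chaining argument tailored to the mixed sub-Gaussian/sub-exponential tail hypothesis. First I would choose, for each $\alpha\in\{1,2\}$ and each $\varepsilon>0$, an admissible sequence $\{\Ac_n^{(\alpha)}\}$ of partitions of $\Tc$ that is $(1+\varepsilon)$-optimal for $\gamma_\alpha(\Tc,d_\alpha)$. I would then take the common refinement $\Ac_n := \Ac_n^{(1)}\wedge\Ac_n^{(2)}$ (whose cardinalities can be accommodated by passing from $N_n=2^{2^n}$ to $N_{n+1}=2^{2^{n+1}}$, which is absorbed into the constant $L$), so that for the resulting single admissible sequence, each $\Delta_{d_\alpha}(A_n(\tb))$ is controlled in a way consistent with $\gamma_\alpha$. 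For each $n\geq 0$, pick a selector $\pi_n$ assigning to every $\tb\in\Tc$ a representative $\pi_n(\tb)\in A_n(\tb)$, with $\pi_0(\tb)$ taken to be a fixed point $\tb_0$ (so $\mathbb{E}[X_{\pi_0(\tb)}]=0$).

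Next I would write the chaining telescoping identity
\[
X_\tb - X_{\tb_0} \;=\; \sum_{n\geq 1}\big(X_{\pi_n(\tb)} - X_{\pi_{n-1}(\tb)}\big),
\]
valid for all $\tb\in\Tc$ under a mild separability/limit argument. For each chaining link, I would apply the mixed tail hypothesis to the increment $X_{\pi_n(\tb)}-X_{\pi_{n-1}(\tb)}$ with the mixed threshold
\[
u_n(\tb) \;:=\; c\cdot\bigl(2^{n/2}\,d_2(\pi_n(\tb),\pi_{n-1}(\tb)) \;+\; 2^{n}\,d_1(\pi_n(\tb),\pi_{n-1}(\tb))\bigr),
\]
so that the tail probability is at most $2\exp(-c'\cdot 2^n)$. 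The key combinatorial step is that the number of pairs $(\pi_n(\tb),\pi_{n-1}(\tb))$ as $\tb$ varies is bounded by $N_n\cdot N_{n-1}\le N_{n+1}=2^{2^{n+1}}$; a union bound at level $n$, with $c$ chosen large enough, yields a total failure probability summable in $n$ that we can make as small as we wish.

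On the good event I would then obtain, uniformly in $\tb$,
\[
|X_\tb - X_{\tb_0}| \;\leq\; \sum_{n\geq 1}\!\Bigl(c\,2^{n/2}\Delta_{d_2}(A_{n-1}(\tb)) + c\,2^{n}\Delta_{d_1}(A_{n-1}(\tb))\Bigr),
\]
since $d_\alpha(\pi_n(\tb),\pi_{n-1}(\tb))\le\Delta_{d_\alpha}(A_{n-1}(\tb))$. Taking the supremum over $\tb$ and recognising the two sums as (re-indexed) realizations of the defining sums for $\gamma_2(\Tc,d_2)$ and $\gamma_1(\Tc,d_1)$ under the chosen admissible sequences gives the bound on $\sup_\tb |X_\tb-X_{\tb_0}|$. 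Integrating in tails (or directly upgrading the chaining bound from deviation to expectation by choosing the failure-probability parameter to be a function of $u$ and integrating) converts this into the expectation bound, and doubling via $X_\s-X_\tb=(X_\s-X_{\tb_0})-(X_\tb-X_{\tb_0})$ produces $\mathbb{E}\sup_{\s,\tb}|X_\s-X_\tb|\leq L(\gamma_1(\Tc,d_1)+\gamma_2(\Tc,d_2))$, absorbing the $1+\varepsilon$ factor into $L$.

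The main obstacle is the simultaneous handling of the two scales. Using only one threshold — either purely sub-Gaussian ($2^{n/2}d_2$) or purely sub-exponential ($2^n d_1$) — does not give the right tail at level $2^n$ under the mixed assumption, so the threshold must be an additive combination calibrated so that the tail probability decays like $\exp(-c'2^n)$ regardless of which of the two terms in the $\min$ in the hypothesis dominates. Once that calibration is done correctly and the union-bound budget $2^{2^{n+1}}$ is checked to be absorbed, the rest is a routine chaining summation and identification with the $\gamma_\alpha$-functionals.
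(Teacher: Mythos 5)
The paper does not prove this proposition at all: it is imported verbatim as \cite[Theorem~1.2.7]{Tal}, so there is no in-paper argument to compare against. Judged on its own, your reconstruction is the standard generic-chaining proof of Talagrand's mixed-tail bound and it is essentially correct. The three load-bearing steps are all handled properly: (i) the calibration $u_n = c\bigl(2^{n/2}d_2(\pi_n(\tb),\pi_{n-1}(\tb)) + 2^{n}d_1(\pi_n(\tb),\pi_{n-1}(\tb))\bigr)$ forces $\min\bigl(u_n^2/d_2^2,\, u_n/d_1\bigr) \geq \min(c,c^2)\,2^n$ no matter which branch of the $\min$ in the hypothesis is active, which is exactly the point you identify as the main obstacle; (ii) the union-bound budget works because the number of chaining links at level $n$ is at most $N_nN_{n-1}\leq N_{n+1}$ (in fact, by nestedness the pair $(\pi_n(\tb),\pi_{n-1}(\tb))$ is determined by $A_n(\tb)$ alone, so $N_n$ suffices), and $2^{2^{n+1}}e^{-c'2^n}$ is summable once $c' > 2\ln 2$; (iii) the increments are controlled by $\Delta_{d_\alpha}(A_{n-1}(\tb))$ precisely because the partitions are increasing, so both $\pi_n(\tb)$ and $\pi_{n-1}(\tb)$ lie in $A_{n-1}(\tb)$. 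Two points deserve slightly more care than your sketch gives them: the common refinement $\Ac_n^{(1)}\wedge\Ac_n^{(2)}$ must be re-indexed (shifted by one level) to restore admissibility, which costs only a factor $2^{1/\alpha}$ in each $\gamma_\alpha$ sum; and the passage from a high-probability chaining bound to the expectation bound requires running the whole argument with thresholds $u\cdot u_n(\tb)$ for a free parameter $u\geq 1$, establishing $\Pro\bigl(\sup_\tb|X_\tb-X_{\tb_0}| > uL(\gamma_1+\gamma_2)\bigr)\leq Ce^{-c''u}$ for $u$ above a fixed constant, and then integrating the tail --- your parenthetical acknowledges this but it is the step most often botched, so it should be written out. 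With those details filled in, the proof is complete.
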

%
%

We consider the following zero-mean process $X_V:= \inp{\sqrt{m} H_\pb}{V}$ indexed by $V\in\Sym_n$. We show in Lemma \ref{lem:our_HW} that the process satisfies:
\begin{align}\label{eq:HR}
&\Pro\left(~| X_V - X_U |\geq t~\right) 
\leq 2e^{-\min\left\{ \frac{t^2}{4\|\pb\|_2^2\|V - U\|_F^2}~,~\frac{t}{\|\pb\|_\infty\|V - U\|_2} \right\}},
\end{align}
for all $t \geq 0$ and $V,U\in\Sym_n$.
Therefore, we can employ Proposition~\ref{prop:Tal1} with distances $d_2(T,U)=\|\pb\|_2\|T-U\|_F$ and $d_1(T,U)=\|\pb\|_\infty\|T-U\|_2$ to obtain the desired:
\begin{align}
&\E\sup_{V \in \Cc}  X_V  \leq L \cdot\big(\gamma_1(\Cc, \|\pb\|_\infty \|\cdot\|_2) + \gamma_2(\Cc, \|\pb\|_2 \|\cdot\|_F)\big) \nn \\ 
&~~\qquad \leq L \cdot\big( \|\pb\|_\infty \cdot \gamma_1(\Cc, \|\cdot\|_2) + \|\pb\|_2\cdot \gamma_2(\Cc, \|\cdot\|_F)\big). \nn
\end{align}
\section{Computing the weighted empirical quadratic Gaussian width via polarity}\label{sec:polar}

Here, we prove Lemmas \ref{lem:C1} and \ref{lem:C_sparse}.

\subsection{Proof of Lemma \ref{lem:C1}}

For an illustration of the applicability of Proposition \ref{prop:polar} we use it here to control the empirical quadratic Gaussian width of the cone
{
$$\Cc_+=\{ V : \mu X_0 + V \succeq 0~\text{and}~\tr{V} \leq 0\}.$$
}
Recall that the set of feasible directions in \eqref{eq:algo} is a subset of $\Cc_+$. In order to put this into the language of Proposition \ref{prop:polar}, note that $\Cc_+ = \Dc(\widetilde{\Kc}_\Rc,{\mu} X_0)$ for the following convex function $\Rc:\Sym_n\rightarrow \R$ 
{
$$
\Rc(X):=\tr{X} + \begin{cases} 
0, &X\geqp 0,\\
+\infty, &\text{else},
\end{cases}
$$
}
where, we also used the fact that $X_0\geqp 0$ and $\tr{\mu X_0}=\mu$.
The proof of the lemma follows rather standard arguments. Similar calculation are performed in \cite[Sec.~8.6.2]{troppBowling}. In particular, the second statement of the lemma can also be found in \cite{troppBowling}.
\begin{proof}
By rotational invariance of the Gaussian distribution, we may assume without loss of generality that
$$
X_0 = \x_0\x_0^T = \begin{bmatrix} \|\x_0\|_2^2 & \mathbf{0}^T \\ \mathbf{0} & 0 \end{bmatrix}.
$$
Also, partition $H$ as follows
$$
H =  \begin{bmatrix} h_{11} & \h_{12}^T \\ \h_{12} & H_{22} \end{bmatrix}.
$$
Let 
\begin{align}\label{eq:la_choice}
\la=\la_{\text{max}}(H_{22})
\end{align}
 the maximum eigenvalue of $H_{22}$. From Proposition \ref{prop:polar},
\begin{align}
\we^2( \Cc_+;\pb) &\leq \E\big[ \inf_{V\in \partial \Rc(X_0)} \|H - \lambda\cdot V\|_F^2 \big] \nn \\
&\overset{(i)}{=} \E\big[(h_{11} - \la)^2\big] + 2\E\big[ \|\h_{12}\|_2^2 \big]~+ \nonumber \\
&\qquad \E\big[ \inf_{S :~ \la_{\max}(S)\leq 1} \|H_{22} - \la S\|_F^2 \big] \nn \\
&\overset{(ii)}{=}\E\big[(h_{11} - \la)^2\big] + 2\cdot \E\big[ \|\h_{12}\|_2^2 \big].\label{eq:2terms}
\end{align}
where: \emph{(i)} uses the fact that (e.g., \cite[Sec.~8.6.1]{troppBowling})
$$
\partial \Rc(X_0) = \left\{~ \begin{bmatrix} 1 & \mathbf{0}^T \\ \mathbf{0} & S \end{bmatrix} ~:~ {\la_{\text{max}}(S)} \leq 1 \right\};
$$
\emph{(ii)} uses \eqref{eq:la_choice}. It remains to compute the two terms in \eqref{eq:2terms}.  On the one hand, we have
\begin{align}
\label{eq:one_a}
\E\big[ \|\h_{12}\|_2^2 \big] &= \frac{1}{m}\E\big[\sum_{\ell=2}^{n} \big(\sum_{i\in [m]} p_i\varepsilon_i \ab_{i,1} \ab_{i,\ell}\big)^2 \big] \nonumber \\
& = \frac{1}{m}\big[\sum_{\ell=2}^{n} \sum_{i\in [m]} p_i^2 \big] = \frac{\|\pb\|_2^2}{m} (n-1).
\end{align}
On the other hand, by interlacing of eigenvalues
$$
\la = \la_{\max}(H_{22}) \leq \la_{\max}(H),
$$
and, as shown in Lemma \ref{lem:spectral}:
$$
\la_{\max}(H) \leq \frac{C_1}{\sqrt{m}}\cdot(\|\pb\|_2\sqrt{n} + \|\pb\|_\infty n).
$$
Thus, 
\begin{align}\label{eq:one_b}
\E\big[(h_{11} - \la)^2\big]  &= \frac{1}{m}\E\big[\big(\sum_{i\in [m]} p_i\varepsilon_i \ab^2_{i,1}\big)^2 \big] + \lambda^2 \nonumber \\
&\leq 3\frac{\|\pb\|_2^2}{m} + \lambda^2.
\end{align}
Putting together \eqref{eq:one_a} and \eqref{eq:one_b} in \eqref{eq:2terms}, it follows that
$$
\we( \Cc_+;\pb) \leq \frac{C_2}{\sqrt{m}}\cdot(\|\pb\|_2\sqrt{n} + \|\pb\|_\infty n),
$$
as desired. To prove the remaining statement, let $\pb=\mathbf{1}$ above. Note that $\|\mathbf{1}\|_2 = \sqrt{m}$ and $\|\mathbf{1}\|_\infty = 1$. Thus, provided that $m\geq c\cdot n$: 
$
\we( \Cc_+;\pb) \leq C_3 \sqrt{n},
$
as desired. 
\end{proof}
%



\subsection{Proof of Lemma \ref{lem:C_sparse}}\label{sec:sparse_gw}
Here, we prove Lemma \ref{lem:C_sparse}. We employ Proposition \ref{prop:polar} to control the quadratic width. Note that $\Cc_{\rm sparse}$
is the descent cone of the $\ell_1$-norm at $X_0$,  i.e., $\Cc_{\rm sparse} = \Dc(\Kc_\Rc, \mu X_0)$ 
for $\Rc:\Sym_n \rightarrow \R$ such that
$
\Rc(X):=\|X\|_1.
$



\begin{proof}
Start by recalling the following characterization of the subgradient set of $\|X\|_1$:
$$
S \in \partial \Rc(X) \subset \R^{n \times n}
$$
iff, for $i, j \in [n]$,
\begin{align}
S_{i,j} = \begin{cases}
1 & \text{if}~X_{i,j}  > 0,\\
-1 & \text{if}~X_{i,j} < 0, \\
\varsigma_{i,j} \in [-1, 1] & \text{if}~X_{i,j} = 0.
\end{cases}
\end{align}
Now using the rotational invariance of the Gaussian distribution, we may assume without loss of generality that $\x_0 = (x^{0}_1,\ldots, x^{0}_n)$ be such that $\x_{0,i}>0$ for $i=1,\ldots,k$ and $\x_{0,i}=0$ for $i=k+1,\ldots,n$. This further implies that all the entries in the $k$-th order sub-matrix of $X_0$ are positive and the rest of $X_0$ contains zero entries. Therefore, $S \in \partial \Rc(X_0)$, iff
\begin{align}
\label{eq:C_sparse_sub}
S_{i,j} = \begin{cases}
1 & \text{if}~i, j \in [k], \\
\varsigma_{i,j} \in [-1, 1] & \text{otherwise}.
\end{cases}
\end{align}
By employing Proposition~\ref{prop:polar}, we get 
\begin{align}
\label{eq:C_sparse_sub1}
\we^2( \Cc_{\rm sparse};\pb) &\leq \E\big[ \inf_{S\in \partial \Rc(X_0)} \|H - \lambda\cdot S\|_F^2 \big] \nn \\
& \overset{(i)}{=} \E\Big[\sum_{(i,j) \in [k]^2}(h_{ij} - 1)^2 + \sum_{(i,j)\notin [k]^2}\mathrm{st}(h_{ij};\lambda)^2\Big],
\end{align}
where $(i)$ follows from \eqref{eq:C_sparse_sub}. Note that $\mathrm{st}(\cdot;\lambda)$ denotes the soft thresholding function, which is defined as 
\begin{align}
\mathrm{st}(h;\lambda) = \begin{cases}
\frac{h}{|h|}\cdot (|h| - \lambda) &\text{if}~|h| \geq \lambda, \\
0 & \text{otherwise}.
\end{cases}
\end{align}
Next, we separately bound the two terms appearing in \eqref{eq:C_sparse_sub1}. On the one hand,
\begin{align}
\label{eq:C_sparse_I}
\E\Big[\sum_{(i,j) \in [k]^2}(h_{i,j} - 1)^2\Big] 
&=  \E\Big[\sum_{i = 1}^{k}(h_{ii} - 1)^2\Big]  + \E\Big[\sum_{(i \neq j) \in [k]^2}(h_{ij} - 1)^2\Big] \nonumber \\
&= k\Big( \lambda^2 + \E[h^2_{11}]\Big) + k(k-1)\Big(\lambda^2 + \E[h^2_{12}]\Big) \nonumber \\
&\overset{(i)}{=} k\Big( \lambda^2 + 3\frac{\|\pb\|_2^2}{m}\Big) + k(k-1)\Big(\lambda^2 + \frac{\|\pb\|_2^2}{m}\Big),
\end{align}
where $(i)$ follows from the following observations:
\begin{align}
\E[h^2_{11}] &= \frac{1}{m}\cdot \E\big[\big(\sum_{i \in [m]}p_i\varepsilon_ia^2_{i,1}\big)^2\big] = \frac{1}{m}\cdot\sum_{i \in [m]}p^2_i\E[a^4_{i,1}] = 3\frac{\|\pb\|_2^2}{m}
\end{align}
and
\begin{align}
\E[h^2_{12}] &= \frac{1}{m}\cdot \E\big[\big(\sum_{i \in [m]}p_i\varepsilon_ia_{i,1}a_{i,2}\big)^2\big] = \frac{1}{m}\cdot\sum_{i \in [m]}p^2_i\E[a^2_{i,1}a^2_{i,2}] = \frac{\|\pb\|_2^2}{m}.
\end{align}
On the other hand, the second term in \eqref{eq:C_sparse_sub1} gives
\begin{align}
\label{eq:C_sparse_II}
& \E\Big[\sum_{(i,j)\notin [k]^2}\mathrm{st}(h_{ij};\lambda)^2\Big] = (n - k) \E\big[\mathrm{st}(h_{k+1,k+1};\lambda)^2\big] + (n - k)(n+k -1) \E\big[\mathrm{st}(h_{1,k+1};\lambda)^2\big].
\end{align}
Let's consider a function $g:\R^{+} \rightarrow \R$ such that 
$$
g(x) = \begin{cases}
(|x| - \lambda)^2 & \text{if}~|x| \geq \lambda, \\
0 & \text{otherwise}.
\end{cases}
$$

Note that $\mathrm{st}(h_{ij};\lambda)^2 = g(|h_{ij}|)$. Moreover, using integration by parts, the following identity holds for a non-negative random variable $U$:
$$
\E[g(U)] = g(0) + \int_{0}^{\infty}g'(t)\Pro[U > t] {\rm dt}.
$$
From these, we get that 
\begin{align}
\label{eq:C_sparse_III}
&\E\big[\mathrm{st}(h_{1,k+1};\lambda)^2\big]  \nonumber \\
&= \int_{\lambda}^{\infty}2(t - \lambda) \Pro[ |h_{1,k+1}| > t] {\rm dt} \nn \\
&\overset{(i)}{\leq}  \int_{\lambda}^{\infty}4(t - \lambda) e^{\max\big\{-\frac{m t^2}{\psi_1^2\|\pb\|^2_2}, -\frac{\sqrt{m} t}{\psi_1\|\pb\|_{\infty}}\big\}} {\rm dt} \nn \\
&= 4\max\Bigg\{ \int_{\lambda}^{\infty}(t - \lambda)e^{-\frac{m t^2}{\psi_1^2\|\pb\|^2_2}} \mathrm{d}t, \nonumber \\
&\qquad \qquad \qquad \int_{\lambda}^{\infty}(t - \lambda)e^{-\frac{\sqrt{m} t}{\psi_1\|\pb\|_{\infty}}} \mathrm{d}t\Bigg\} \nn \\
&\leq \max\Bigg\{ \frac{\|\pb\|_2^2 \psi_1^2}{m}\left(\frac{\lambda\sqrt{2}\sqrt{m}}{\|\pb\|_2\psi_1} - 1\right) e^{-\frac{m\lambda^2}{\psi_1^2\|\pb\|_2^2} } ~,\nonumber \\
&\qquad \qquad \qquad ~ 4\frac{\|\pb\|^2_{\infty} \psi_1^2}{m}e^{-\frac{\lambda\sqrt{m}}{\psi_1\|\pb\|_{\infty}}} \Bigg\},
\end{align}
where in the last line we performed integration by parts and further used the known bound $Q(x)\leq \frac{1}{\sqrt{2\pi}}\frac{1}{x}e^{-x^2/2}$ on the Gaussian Q-function. The inequality in $(i)$ follows form Bernstein's inequality (e.g., \cite[Thm.~2.8.2]{VerBook}) and
 $\psi_1$ is an absolute constant that denotes the sub-exponential norm of $\varepsilon_i a_{i,1}a_{i,k+1}$ \cite[Sec.~2.7]{VerBook}.
Similarly, using the fact that $\varepsilon_i a^2_{i,k+1}$ is also a sub-exponential random variable, say with sub-exponential parameters $\psi_2$, we get that
\begin{align}
\label{eq:C_sparse_IV}
\E\big[\mathrm{st}(h_{k+1,k+1};\lambda)^2\big] &= \int_{\lambda}^{\infty}2(t - \lambda) \Pro[h_{k+1,k+1} > t] {\rm dt} \nn \\
&= \max\Bigg\{ \frac{\|\pb\|_2^2 \psi_2^2}{m}\Bigg(\frac{\lambda\sqrt{2}\sqrt{m}}{\|\pb\|_2\psi_2} - 1\Bigg) e^{-\frac{m\lambda^2}{\psi_2^2\|\pb\|_2^2} } ~,~ 4\frac{\|\pb\|^2_{\infty} \psi_2^2}{m}e^{-\frac{\lambda\sqrt{m}}{\psi_2\|\pb\|_{\infty}}} \Bigg\}
\end{align}
Call $\psi:=\max\{\psi_1,\psi_2\}$ and set $$\la = \frac{\psi}{\sqrt{m}}\Bigg( \|\pb\|_2\sqrt{\log\left(\frac{n^2}{k^2}\right)} + \|\pb\|_\infty{\log\left(\frac{n^2}{k^2}\right)} \Bigg)$$

Combining \eqref{eq:C_sparse_I},  \eqref{eq:C_sparse_II},  \eqref{eq:C_sparse_III},  \eqref{eq:C_sparse_IV} with \eqref{eq:C_sparse_sub1}, for this choise of $\la$ we obtain that 
\begin{align*}
\we^2( \Cc_{\rm sparse};\pb) &\leq 3k^2\Bigg( \lambda^2 + \frac{\|\pb\|_2^2}{m}\Bigg) + (n^2-k^2)\frac{k^2}{n^2} 4 \lambda^2 \nn \\
 &\qquad \leq \frac{C}{m}k^2\Bigg( \|\pb\|_2\sqrt{\log\left(\frac{n^2}{k^2}\right)} + \|\pb\|_\infty{\log\left(\frac{n^2}{k^2}\right)} \Bigg)^2,
%
\end{align*}
for some sufficiently large absolute constant $C>0$.

\end{proof}
\section{Spectral norm of weighted sum of Gaussian outer products}
Lemma \ref{lem:spectral} delivers an upper bound on the spectral norm of a weighted sum of outer products of Gaussians $\sum_{i=1}^{m}{p_i\varepsilon_i\ab_i\ab_i^T}$. The proof uses an $\eps$-net argument and Hanson-Wright inequality for Gaussians (Lemma \ref{lem:our_HW}). 

\begin{lemma}\label{lem:spectral}
For $\ab_1,\ldots,\ab_m\in\R^n$ independent copies of a standard normal vector $\Nn(0,\Id_n)$. 
$\varepsilon_1,\ldots,\varepsilon_m$ iid Rademacher random variables 
and a deterministic vector $\pb:=(p_1,\ldots,p_m)$ let
$$
\tilde{H} = \sum_{i=1}^{m}{p_i\cdot\varepsilon_i\ab_i\ab_i^T}
$$
%
%
Then, there exists constant $C>0$ and sufficiently large $n$ such that
$$
\E\| \tilde{H} \|_2 \leq C\cdot(\|\pb\|_2\sqrt{n} + \|\pb\|_\infty n).
$$
\end{lemma}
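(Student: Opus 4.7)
\textbf{Proof Plan for Lemma \ref{lem:spectral}.}

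The plan is to combine a standard $\varepsilon$-net discretization of the unit sphere with a Bernstein-type tail bound for weighted sums of independent sub-exponential random variables, and then integrate the resulting tail bound. Because $\tilde H$ is symmetric, we have $\|\tilde H\|_2 = \sup_{\u\in S^{n-1}}|\u^T \tilde H \u|$. Fix a $1/4$-net $\Nc$ of $S^{n-1}$ with $|\Nc|\leq 9^n$; a standard argument yields $\|\tilde H\|_2\leq 2\sup_{\u\in\Nc}|\u^T\tilde H \u|$. So the task reduces to a uniform tail bound on the quadratic form $\u^T\tilde H\u$ over the net.

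For a fixed $\u\in S^{n-1}$, write $g_i(\u):=\u^T\ab_i$, so that $g_i(\u)\simiid\Nn(0,1)$, and observe that
\[
\u^T\tilde H\u \;=\; \sum_{i=1}^{m}p_i\,\varepsilon_i\,g_i(\u)^2.
\]
Since $\varepsilon_i$ is Rademacher and independent of $g_i(\u)$, this sum has mean zero. Each summand $\varepsilon_i g_i(\u)^2$ is a centered sub-exponential random variable with $\|\varepsilon_i g_i(\u)^2\|_{\psi_1}\leq C_0$ for an absolute constant $C_0$. Bernstein's inequality for independent sub-exponential variables (e.g., \cite[Thm.~2.8.2]{VerBook}) then yields, for every $t\geq 0$,
\[
\Pr\!\left(\,\Big|\sum_{i=1}^m p_i\varepsilon_i g_i(\u)^2\Big|\geq t\right) \;\leq\; 2\exp\!\left(-c\min\!\left(\frac{t^2}{\|\pb\|_2^2},\frac{t}{\|\pb\|_\infty}\right)\right).
\]
(Equivalently, one may invoke the Hanson–Wright-style Lemma \ref{lem:our_HW} referenced in the paper, applied to the diagonal matrix $\mathrm{diag}(p_1\varepsilon_1,\ldots,p_m\varepsilon_m)$ and the Gaussian vector $(g_1(\u),\ldots,g_m(\u))$, conditionally on $\{\varepsilon_i\}$.)

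Now take $t = t_0 + s$ with $t_0 := C_1(\|\pb\|_2\sqrt n+\|\pb\|_\infty n)$ for a constant $C_1$ large enough that $c\min(t_0^2/\|\pb\|_2^2,\,t_0/\|\pb\|_\infty)\geq 4n$. Combining the fixed-$\u$ bound with a union bound over $\Nc$ (using $|\Nc|\leq 9^n$) and the net comparison $\|\tilde H\|_2\leq 2\sup_{\u\in\Nc}|\u^T\tilde H\u|$, we obtain
\[
\Pr\!\left(\,\|\tilde H\|_2\geq 2(t_0+s)\right) \;\leq\; 9^n\cdot 2\exp(-c'(n+\min(s^2/\|\pb\|_2^2,\,s/\|\pb\|_\infty)))\;\leq\; 2\exp(-c''\min(s^2/\|\pb\|_2^2,\,s/\|\pb\|_\infty)),
\]
for $n$ large enough that the factor $9^n$ is absorbed.

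Finally, integrating this tail via $\E\|\tilde H\|_2\leq 2t_0+\int_0^\infty\Pr(\|\tilde H\|_2\geq 2t_0+2s)\,\mathrm ds$ produces an additional term of order $\|\pb\|_2+\|\pb\|_\infty$, which is dominated by $t_0$. This yields the desired bound $\E\|\tilde H\|_2\leq C(\|\pb\|_2\sqrt n+\|\pb\|_\infty n)$.

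\textbf{Main obstacle.} The delicate step is obtaining the two-regime Bernstein tail with the correct split between $\|\pb\|_2$ (sub-Gaussian regime) and $\|\pb\|_\infty$ (sub-exponential regime), and then checking that this split survives both the union bound over a net of cardinality $9^n$ and the tail integration. Everything else is essentially bookkeeping around the constants.
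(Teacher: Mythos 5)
Your proposal is correct and follows essentially the same route as the paper: a $1/4$-net of the sphere with $|\Nc|\leq 9^n$, a two-regime sub-Gaussian/sub-exponential tail bound for the fixed-direction quadratic form (the paper gets it from its Hanson--Wright lemma applied to $V=\vb\vb^T$, which for rank-one $V$ coincides with your scalar Bernstein bound for $\sum_i p_i\varepsilon_i(\u^T\ab_i)^2$, as you note), a union bound absorbed by the choice $\delta \asymp \|\pb\|_2\sqrt{n}+\|\pb\|_\infty n$, and integration of the tail. No gaps.
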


\begin{proof}
\noindent{\emph{(a).}}~~Fix $\vb\in\Sc^{n-1}$ and consider
$$S := \vb^T \tilde{H} \vb.$$
Rewriting $S=\inp{\tilde{H}}{V}$ for $V=\vb\vb^T$, we may apply Hanson-Wright inequality for Gaussians (see Lemma \ref{lem:our_HW}) to find that for any $t>0$:
\begin{align}\label{eq:HR_a}
\Pro\left(~ S \geq t~\right)\leq \exp\left( -\min\left\{ \frac{t^2}{4\|\pb\|_2^2}~,~\frac{t}{2\|\pb\|_\infty} \right\} \right),
\end{align}
where we also used $\|V\|_F=\|V\|_2=1$.

%
%
%
%
Next, let $\Nc$ be an $1/4$-net of the sphere. By standard calculations (e.g., \cite[Chapter~4]{VerBook} $|\Nc|\leq 9^n$ and 
\beq\label{eq:eps}
\|\tilde{H}\|_2\leq 2\cdot\max_{\vb\in\Nc} \vb^T \tilde{H} \vb.
\eeq
Also, $\Exp[\|\tilde{H}\|_2] = \int_{0}^\infty\Pro[\|\tilde{H}\|_2\geq t]\mathrm{d}t$.
Combine all these and choose $\delta = C_1\cdot(\|\pb\|_2\sqrt{n} + \|\pb\|_\infty n)$ to find the desired result as follows: 
\begin{align}
&\Exp[\|\tilde{H}\|_2] \leq \delta + \int_{\delta}^\infty\Pro[\|\tilde{H}\|_2\geq t]\mathrm{d}t \nn \\
&\leq \delta + 2\cdot\int_{\delta}^\infty \Pro\big[\max_{\vb\in\Nc} \vb^T \tilde{H} \vb\geq t/2\big]\mathrm{d}t \nn \\
&\leq \delta + 2\cdot9^n\cdot \max \Bigg\{ ~\int_{\delta}^\infty e^{-\frac{t^2}{16\|\pb\|_2^2}}  \mathrm{d}t ~,~\int_{\delta}^\infty e^{-\frac{t}{4\|\pb\|_\infty}} \mathrm{d}t~\Bigg\} \nn\\
&\leq \delta + \max \Big\{ 8\sqrt{\pi} \|\pb\|_2  e^{-\frac{\delta^2}{16\|\pb\|_2^2}} ~,~ 8\|\pb\|_\infty e^{-\frac{\delta}{4\|\pb\|_\infty}}\mathrm{d}t~\Big\} \nn 
\\&\leq C_2\cdot(\|\pb\|_2\sqrt{n} + \|\pb\|_\infty n),
\end{align}
where the last inequality holds for sufficiently large $n$.


%


\end{proof}

\begin{lemma}\label{lem:our_HW}
Let $\tilde{H}$ be defined as in Lemma \ref{lem:spectral} and  $V$ be an $n\times n$ matrix. Then, for every $t\geq 0$, we have
\begin{align*}
\Pro\left(~| \inp{\tilde{H}}{V} |\geq t~\right)\leq 2\exp^{-\min\left\{ \frac{t^2}{4\|\pb\|_2^2\|V\|_F^2}~,~\frac{t}{2\|\pb\|_\infty\|V\|_2} \right\}}.
\end{align*}
\end{lemma}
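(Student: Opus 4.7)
The plan is to view $Z := \langle \tilde{H}, V\rangle = \sum_{i=1}^m p_i\varepsilon_i\, \ab_i^\top V\ab_i$ as a Gaussian quadratic form in a single stacked vector, with a random (in $\varepsilon$) coefficient matrix, and to invoke the classical Hanson-Wright inequality conditionally on the signs. Since $\tilde H$ is symmetric I may first replace $V$ by its symmetric part, which only shrinks $\|V\|_F$ and $\|V\|_2$, so assume $V=V^\top$. Then form the stacked Gaussian $\ab\in\R^{mn}$ obtained by concatenating $\ab_1,\ldots,\ab_m$, which satisfies $\ab\sim\Nn(0,\Id_{mn})$, together with the block-diagonal coefficient matrix $M_\varepsilon := \mathrm{blkdiag}(p_1\varepsilon_1 V,\ldots,p_m\varepsilon_m V)$. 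Then $Z = \ab^\top M_\varepsilon\, \ab$, and the key algebraic observation is that the relevant norms are deterministic in $\varepsilon$:
\[
\|M_\varepsilon\|_F \,=\, \|\pb\|_2\,\|V\|_F, \qquad \|M_\varepsilon\|_2 \,=\, \|\pb\|_\infty\,\|V\|_2.
\]

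Next I would decompose $Z = \big(\ab^\top M_\varepsilon\ab - \tr{M_\varepsilon}\big) + \tr{V}\sum_i p_i\varepsilon_i$ and control the two pieces separately. Conditionally on $\varepsilon$, the standard Hanson-Wright inequality applied to the first piece gives a Bernstein tail with parameters $\|\pb\|_2\|V\|_F$ and $\|\pb\|_\infty\|V\|_2$; since these are $\varepsilon$-free the bound passes to the unconditional law. The second piece is a scalar Rademacher linear combination, and Hoeffding's inequality gives a clean sub-Gaussian tail. Splitting $\{|Z|\geq t\}\subset\{|Z-\tr{M_\varepsilon}|\geq t/2\}\cup\{|\tr{M_\varepsilon}|\geq t/2\}$ and applying a union bound then delivers a tail of the Bernstein shape announced in the lemma.

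The main technical obstacle is absorbing the Rademacher/trace contribution cleanly, since Hoeffding naively introduces a $\tr{V}^2$ factor that is not visible in the stated bound. In the application of this lemma inside the $\varepsilon$-net argument of Lemma~\ref{lem:spectral} one has $V=\vb\vb^\top$ with $|\tr{V}| = \|V\|_F = 1$, so the Hoeffding contribution is immediately absorbed into the $\|V\|_F$ denominator. For the more general statement, an alternative that sidesteps the explicit split is to work directly with the moment-generating function: diagonalizing $V$ into eigenvalues $\lambda_1,\ldots,\lambda_n$, one finds
\[
\E\big[e^{\lambda Z}\,\big|\,\varepsilon\big] \,=\, \prod_{i,k}(1-2\lambda p_i\varepsilon_i\lambda_k)^{-1/2},
\]
and the expansion $-\tfrac12\log(1-u)\leq u+u^2$ valid for $|u|\leq 1/2$ separates a linear-in-$\varepsilon$ contribution (handled by $\E_\varepsilon\cosh(\cdot)\leq e^{(\cdot)^2/2}$) from a quadratic contribution bounded by $\lambda^2\|\pb\|_2^2\|V\|_F^2$. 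Markov's inequality, with $\lambda\in[0,1/(2\|\pb\|_\infty\|V\|_2)]$ optimized, then yields the claimed sub-Gaussian/sub-exponential tail.
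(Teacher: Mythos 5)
Your core route is the same as the paper's: stack the Gaussians into $\bb=[\ab_1^T,\ldots,\ab_m^T]\in\R^{mn}$, form the block-diagonal coefficient matrix $M_\varepsilon=\text{BlockDiag}(p_1\varepsilon_1V,\ldots,p_m\varepsilon_mV)$, record $\|M_\varepsilon\|_F=\|\pb\|_2\|V\|_F$ and $\|M_\varepsilon\|_2=\|\pb\|_\infty\|V\|_2$, and invoke the Gaussian Hanson-Wright inequality conditionally on the signs. Where you diverge is in insisting on the centering, and you are right to: Hanson-Wright controls $\bb^TM_\varepsilon\bb-\tr{M_\varepsilon}$, and conditionally on $\varepsilon$ the centering $\tr{M_\varepsilon}=\tr{V}\sum_i p_i\varepsilon_i$ does not vanish. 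The paper's proof identifies $\inp{\tilde{H}}{V}$ with $\bb^TM\bb-\E[\bb^TM\bb]$, which implicitly averages the centering over $\varepsilon$ and thereby drops this term. The term you isolated is a genuine obstruction, not a nuisance: taking $V=\Id$ and $\pb=\mathbf{1}$, one has $\inp{\tilde{H}}{V}=\sum_i\varepsilon_i\|\ab_i\|_2^2$ with variance $m(n^2+2n)$, whereas the displayed inequality would force sub-Gaussian behavior at scale $\|\pb\|_2^2\|V\|_F^2=mn$; a central limit theorem in $m$ then violates the stated tail for large $n$. So the lemma is false as stated for general $V$, and is correct (up to constants) only when $|\tr{V}|\lesssim\|V\|_F$ --- which covers its use in Lemma \ref{lem:spectral}, where $V=\vb\vb^T$ and $\tr{V}=\|V\|_F=1$, exactly as you observe.

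The one place your write-up overreaches is the claim that the moment-generating-function computation sidesteps the split for general $V$. It does not: after $\E_\varepsilon\cosh(\cdot)\leq e^{(\cdot)^2/2}$, the linear-in-$\varepsilon$ contribution leaves a factor $\exp\big(\lambda^2(\tr{V})^2\|\pb\|_2^2/2\big)$ in the MGF, so the dependence on $\tr{V}$ survives --- as it must, given the counterexample above. The honest resolutions are either (i) to prove the bound with $\|V\|_F^2$ replaced by $\|V\|_F^2+(\tr{V})^2$, equivalently to state the lemma for the centered matrix $\sum_i p_i\varepsilon_i(\ab_i\ab_i^T-\Id)$, for which your conditional Hanson-Wright step closes the argument immediately; or (ii) to restrict the statement to $V$ with $|\tr{V}|\leq\|V\|_F$. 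Note that the same caveat applies to the increment bound \eqref{eq:HR} feeding the chaining argument, since differences $V-U$ of elements of the index set need not have small trace.
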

\begin{proof}
Let
$$
\bb := [\ab_1^T,\ldots,\ab_m^T]\in\R^{mn}
$$
and
$$
M := \text{BlockDiag}(p_1\varepsilon_1V,\ldots,p_m\varepsilon_mV) \in \R^{mn \times mn}.
$$
With these, note that $\bb^T M \bb=\sum_{i\in [m]}{p_i\varepsilon_i\tr{V\ab_i\ab_i^T}}$ and $\E[\bb^T M \bb] = \sum_{i\in [m]}{p_i}\varepsilon_i\tr{V}$. Thus,
$$
\tilde{H} = \bb^TM\bb - \E[\bb^T M \bb].
$$
Then, the lemma follows immediately by Hanson-Wright inequality for Gaussian random variables (e.g., see \cite[Prop.~1.1]{hsu2012tail}) and the following observations:
\begin{align}
\|M\|_F^2 = \|\pb\|_2^2\|V\|_F^2~~\text{and}~~\|M\|_2 = \|\pb\|_\infty\|V\|_2.
\end{align}
\end{proof}




\end{document}